\journal{}
\newtheorem{proposition}{Proposition}
\begin{document}
	
	\begin{frontmatter}
		

\title{Decomposed Global Optimization for Robust Point Matching with Low-Dimensional Branching}

		\author[mymainaddress]{Wei Lian} 
	\ead{lianwei3@gmail.com}

\author[mymainaddress]{Zhesen Cui}
\ead{cuizhesen@gmail.com}

\author[mymainaddress]{Fei Ma}
\ead{mafei@czc.edu.cn}

\author[mymainaddress]{Hang Pan}
\ead{panhang@czc.edu.cn}

\author[mysecondaryaddress]{Wangmeng Zuo}
\ead{cswmzuo@gmail.com}

%
%
%
	\author[mymainaddress]{Jianmei Zhang \corref{cor1}}
\ead{czxyzjm@163.com}
\cortext[cor1]{Corresponding author}

	\address[mymainaddress]{Department of Computer Science, Changzhi University, Changzhi, Shanxi, China, 046011}
\address[mysecondaryaddress]{School of Computer Science and Technology, Harbin Institute of Technology, Harbin 150001, China}

	\begin{abstract}	
Numerous applications require algorithms that can align partially overlapping point sets while maintaining invariance to geometric transformations (e.g., similarity, affine, rigid). This paper introduces a novel global optimization method for this task by minimizing the objective function of the Robust Point Matching (RPM) algorithm. We first reveal that the original RPM objective is a cubic polynomial. Through a concise variable substitution, we transform this objective into a quadratic function. By leveraging the convex envelope of bilinear monomials, we derive a tight lower bound for this quadratic function. This lower bound problem conveniently and efficiently decomposes into two parts: a standard linear assignment problem (solvable in polynomial time) and a low-dimensional convex quadratic program. Furthermore, we devise a specialized Branch-and-Bound (BnB) algorithm that branches exclusively on the transformation parameters, which significantly accelerates convergence by confining the search space. Experiments on 2D and 3D synthetic and real-world data demonstrate that our method, compared to state-of-the-art approaches, exhibits superior robustness to non-rigid deformations, positional noise, and outliers, particularly in scenarios where outliers are distinct from inliers.

	\end{abstract}
	
	\begin{keyword}
		branch-and-bound\sep partial overlap \sep bilinear monomial \sep   point set registration\sep convex envelope\sep linear assignment
	\end{keyword}
	
\end{frontmatter}



\section{Introduction}


Point cloud registration, the process of aligning two point sets into a unified coordinate system, is a cornerstone of 3D computer vision, robotics, and augmented reality. While traditional methods like the Iterative Closest Point (ICP) algorithm are widely used, they're notoriously sensitive to initial alignment and frequently get trapped in local minima, especially when dealing with significant outliers or partial data overlap. This fundamental limitation highlights the need for globally optimal algorithms that can reliably converge to the correct solution regardless of   initialization. 

The Branch-and-Bound (BnB) framework has emerged as a powerful tool for achieving such global guarantees.
Many work applies BnB to solve the coupled problem of finding the optimal correspondence $\mathbf{P}$ and transformation $\mathbf{T}(\cdot|\boldsymbol{\theta})$, typically formulated as minimizing the following  objective function:
\begin{equation}
	E(\mathbf{P}, \boldsymbol{\theta}) = \sum_{i,j} p_{ij} \|\mathbf{y}_j - \mathbf{T}(\mathbf{x}_i | \boldsymbol{\theta})\|^2,
	\label{en_BLALS}
\end{equation}
where ${\mathbf{x}_i}$ and ${\mathbf{y}_j}$ are points from the two sets, $p_{ij}\in \{0,1\}$ are binary variables indicating correspondence, and $\boldsymbol{\theta}$ parameterizes the spatial transformation. 
An Early attempt to solve this problem by branching over either $\mathbf{P}$ or $\boldsymbol{\theta}$ proved inefficient due to the immense search space and an inefficient bounding scheme.
 Subsequent research has largely split into two camps:

\begin{enumerate}

	\item 	
	\textbf{Correspondence-only Methods}: These approaches eliminate the transformation variables $\boldsymbol{\theta}$, simplifying Eq. \eqref{en_BLALS} into a problem that only involves correspondences. While this can reveal   low-rank concave structures amenable to BnB optimization \cite{RPM_concave_PAMI, RPM_model_occlude},
it comes at a cost. Initial methods were either restricted to subset alignment scenarios \cite{RPM_concave_PAMI} or lacked transformation invariance \cite{RPM_model_occlude}. While follow-up works \cite{RPM_model_occlude_PR, lian2021polyhedral} successfully restored invariance, they could not overcome the core limitation of this approach: branching in the high-dimensional correspondence space. This large search space inherently leads to slow convergence, making these methods impractical for larger problems.

\item
\textbf{Transformation-space Methods}: This more promising strategy retains the transformation parameters $\boldsymbol{\theta}$ and performs branching within the much smaller, low-dimensional transformation space. A representative example is the work by \cite{LIAN2023126482}, which  treats Eq.~\eqref{en_BLALS} as a cubic polynomial and uses trilinear relaxation to derive a lower bound for the BnB algorithm. This approach effectively handles partial overlap and maintains transformation invariance, benefiting from a significantly reduced search space.

\end{enumerate}

\subsection{Our Contributions}
In this paper, we follow the second strategy of branching on the low-dimensional transformation space. However, we introduce a novel method to derive a simpler, yet more effective, lower bound that avoids a key complexity of prior work. The algorithm in \cite{LIAN2023126482} requires a complex reformulation of the objective function to ensure one of the variables in the trilinear monomial  includes the origin within its range.

In contrast, as illustrated in Fig. \ref{RPM_lb_idea}, 
 our method exclusively uses \textbf{bilinear relaxation}, which eliminates this requirement and results in a more straightforward and streamlined algorithm.

Our main contributions are summarized as follows:
\begin{itemize}
	\item 
	We present a novel BnB algorithm for point cloud registration that utilizes a simpler, more robust \textbf{lower-bounding scheme based on bilinear relaxation}. This eliminates the need for complex objective function reformulations required by previous method \cite{LIAN2023126482}, simplifying the overall algorithm.

	\item 	
	Our lower bound optimization is  efficient. It \textbf{decomposes into two well-behaved subproblems:} a classic linear assignment problem for correspondences and a low-dimensional convex quadratic program for the transformation. Both can be solved to global optimality very quickly.

	\item
	
	Our resulting BnB algorithm inherits the key advantages of its predecessors, including \textbf{robustness to partial overlap} and \textbf{invariance to transformations}. Crucially, it leverages a low-dimensional branching space, enabling it to scale  effectively to large problems while maintaining global optimality guarantees.
	
\end{itemize}

\begin{figure*}
	\begin{adjustwidth}{-1.8cm}{0cm}
	\centering
	\begin{tikzcd}[row sep=scriptsize, column sep=-0.9em]
		E(\mathbf P,\boldsymbol{\theta})
		{=}&
		\textcolor{red}{\boxed{\text{trilinear term of }\boldsymbol\theta \text{ and } \mathbf P}}
		\arrow[d,"\text{define}" swap, "\boldsymbol\Theta \triangleq\boldsymbol\theta\boldsymbol\theta^\top" ]
		& +
		&\text{quadratic term of }\boldsymbol\theta
		&  
		+&\text{bilinear term of }\boldsymbol\theta \text{ and } \mathbf P&
		+\text{linear term of } \mathbf P
		\\
		E(\mathbf P,\boldsymbol\Theta,\boldsymbol\theta)=&\boxed{\textcolor{red}{\text{bilinear term of }\boldsymbol\Theta \text{ and } \mathbf P}}
		\ar[d,"\text{bilinear}" swap,  "\text{relaxation}" ]
		&+
		&\text{quadratic  term of }\boldsymbol\theta &
		+&\boxed{ \textcolor{blue}{\text{bilinear term of }\boldsymbol\theta \text{ and } \mathbf P}} \ar[d,"\text{bilinear}" swap, "\text{relaxation}" ]
		&
		+\text{linear term of } \mathbf P \\
		E_l(\mathbf P,\boldsymbol\Theta,\boldsymbol\theta) = &\boxed{\textcolor{red}{\text{linear term of }\boldsymbol\Theta \text{ and } \mathbf P}} 
		& +  
		& \text{quadratic  term of }\boldsymbol\theta 
		\ar[d,"\text{substitute}" swap,  "\text{back }\boldsymbol\Theta=\boldsymbol\theta\boldsymbol\theta^\top" ]
		&
		+		
		&\boxed{\textcolor{blue}{\text{linear term of }\boldsymbol\theta \text{ and } \mathbf P}} &
		+\text{linear term of } \mathbf P \\
		E_l(\mathbf P,\boldsymbol\theta)= & \text{quadratic term of } \boldsymbol\theta &+& \text{linear term of } \boldsymbol\theta &+& \text{linear term of }\mathbf P
	\end{tikzcd}
	\end{adjustwidth}
	
	\caption{
		Derivation of  a  lower bound function of the RPM objective function
		via variable substitution and  bilinear relaxation.
		\label{RPM_lb_idea}		%
	}
\end{figure*}

The rest of the paper is organized as follows. Section~\ref{sec:relate_work} provides a review of related work. Section~\ref{sec:bil_lb} introduces the theoretical framework for our lower bound. Section~\ref{sec:problem} details our objective function, and Section~\ref{sec:optimize} describes the optimization strategy. We apply the algorithm to 2D and 3D registration in Sections~\ref{sec:app_one} and~\ref{sec:app_two}, respectively. Finally, we present experimental results in Section~\ref{sec:exp} and conclude in Section~\ref{sec:conclude}.

\section{Related Work \label{sec:relate_work}}

A comprehensive survey of point set registration is available in~\cite{10.1007/s11263-020-01359-2}. Here, we focus on works most closely related to our own, distinguishing between local and global optimization strategies.

\subsection{Local Methods}
These methods are efficient but do not guarantee global optimality, making them susceptible to local minima.

\textbf{Simultaneous Pose and Correspondence.} The venerable Iterative Closest Point (ICP) algorithm~\cite{ICP,ICP2} alternates between finding point correspondences and updating the spatial transformation. While efficient, its reliance on discrete assignments often leads it to get trapped in local minima. To enhance robustness, variations like that by Zhang \textit{et al.}~\cite{ICP_2D_rot} constrain the rotation angle, while others incorporate additional information, such as RGB data \cite{ICP_rgb_correntropy}.

The Robust Point Matching (RPM) method~\cite{RPM_TPS} addresses this by relaxing correspondences to have fuzzy values, using deterministic annealing (DA) to gradually refine them. Sofka \textit{et al.}~\cite{CDC} used the covariance of transformation parameters to control this fuzziness, achieving better robustness to missing structures. For non-rigid registration, Ma \textit{et al.}~\cite{ieee7001713} employed L2E for robust transformation estimation.

Our method, like these approaches, jointly models spatial transformation and correspondence. However, we achieve greater robustness to various disturbances by guaranteeing global optimality, which is not possible with these local methods.

\textbf{Correspondence-free Methods.} These methods, such as Coherent Point Drift (CPD)~\cite{CPD_match}, avoid explicit correspondence matching. CPD frames registration as fitting a Gaussian Mixture Model (GMM) from one point set to another. GMMREG~\cite{kernel_Gaussian_journal} represents both point sets with GMMs and minimizes the L$^2$ distance between them. Similarly, SVGMs~\cite{mixture_SVC} use sparse Gaussian components. GMM-based methods have been made more efficient with filtering techniques~\cite{GMM_filtering}, while issues with point set density have been addressed by modeling scene structure~\cite{mixture_average} and using hierarchical representations~\cite{tree_Gaussian_mixture}. These methods are computationally intensive and can still suffer from local minima, whereas our approach offers a global optimality guarantee.

\subsection{Global Methods}
These methods offer guarantees of finding the globally optimal solution, albeit at a higher computational cost.

\textbf{BnB-based Methods.} The Branch-and-Bound (BnB) algorithm is a powerful global optimization technique. An early application based on Lipschitz theory~\cite{Lipschitz_3D_align} was sensitive to outliers. More recent methods leverage the specific geometry of transformations. Go-ICP~\cite{Go-ICP_pami} globally optimizes the ICP objective by exploring the structure of 3D rigid motions. GOGMA~\cite{BnB_mixture_Gaussian} registers point sets by aligning their corresponding GMMs. Other works, such as the Fast Rotation Search (FRS)~\cite{BnB_consensus_project}, efficiently find 3D rotations, while a Bayesian approach~\cite{BnB_Bayesian_mixture} uses a novel tessellation of the rotation space.

While these methods offer global guarantees, they often focus on specific problems (e.g., rigid motion) and do not handle correspondence explicitly, which can limit their applicability to non-rigid deformations. Our method differs by \textbf{jointly and globally optimizing both transformation and correspondence}, making it more versatile.

\textbf{Mismatch Removal Methods.} An alternative research direction focuses on finding the correct spatial transformation from an initial set of putative correspondences that may contain many errors. The Fast Global Registration (FGR) method~\cite{fast_global_regist} optimizes a robust objective based on line processes. GORE~\cite{GORE_outlier_removal} prunes outliers using geometric constraints before applying RANSAC. TEASER++~\cite{TEASER} uses a graph-theoretic approach to decouple the estimation of rotation, scale, and translation, formulating each as a truncated least squares problem. TR-DE \cite{9878458} also decomposes the 6-DOF transformation to reduce computational complexity within a two-stage BnB search. TEAR \cite{10656079} uses a truncated entry-wise absolute residual function for outlier robustness.

These methods excel at handling high outlier rates but require an initial set of correspondences, which can be difficult to obtain robustly in highly cluttered or unstructured environments. Our method avoids this by simultaneously solving for both correspondence and transformation from scratch.

\section{Bilinear Convex Envelope Linearization \label{sec:bil_lb}}

Our approach to constructing a lower bound for the BnB algorithm relies on the \textbf{convex envelope} of a bilinear monomial $xy$. For variables $x$ and $y$ within a rectangular domain defined by $\underline{x}\le x\le \overline{x}$ and $\underline{y}\le y\le \overline{y}$, the convex envelope is the maximum of two linear functions:
\begin{gather}
	(xy)_{l} = \max\left\{ \underline{x}y+x\underline{y}-\underline{x}\underline{y}, \quad \overline{x}y+x\overline{y}-\overline{x}\overline{y} \right\} \le xy.
	\label{bil_conv_env}
\end{gather}
While this provides the tightest possible linear lower bound, using a `max` function within our objective would result in a non-linear problem, complicating the subproblem optimization.

To maintain linearity and ensure computational efficiency, we 
use a simpler, \textbf{linearized lower bound}. We use the average of the two linear components from the convex envelope:
\begin{equation}
	(xy)_{l} \triangleq
	\frac{1}{2} \left( \left(\underline{x}y+x\underline{y}-\underline{x}\underline{y}\right) + \left(\overline{x}y+x\overline{y}-\overline{x}\overline{y}\right) \right) \le xy. \label{bil_lin_lb}
\end{equation}
This choice results in a strictly linear function, which is crucial for decomposing our lower bound subproblem into solvable components.

A key property of this average-based lower bound is its relationship to the branching process. As shown by \cite{LIAN2023126482},
  for the lower bound to converge to the true value of the monomial, it is sufficient to branch on only one of the variables—either $x$ or $y$. This property is fundamental to the efficiency of our algorithm, as it allows us to effectively reduce the search space by branching only on the transformation parameters $\boldsymbol{\theta}$ while leaving the correspondence variables $p_{ij}$ unbranched. This single-variable branching strategy is the cornerstone of our algorithm's convergence and its ability to scale to larger problems.

\section{Problem formulation \label{sec:problem}}

We aim to find a globally optimal alignment between two point sets, $\mathscr{X}=\{ \mathbf x_i\}_{i=1}^{n_x}$
and
$\mathscr{Y}=\{ \mathbf y_j\}_{j=1}^{n_y}$ 
in  $\mathbb R^{n_d}$. Following the mixed linear assignment-least squares framework of \cite{RPM_model_occlude}, the problem is formulated as minimizing the following objective function:
\begin{subequations}	
		\begin{gather}
			\min E(\mathbf P,\boldsymbol{\theta})=\sum_{i,j}p_{ij}\|\mathbf y_j-\mathbf J(\mathbf x_i) \boldsymbol{\theta}\|^2  \label{RPM_obj} \\
			=
			{\boldsymbol\theta}^\top \mathbf J^\top  (\text{diag}(\mathbf P  \mathbf 1_{n_y}) \otimes \mathbf I_d) \mathbf J  {\boldsymbol\theta}   
			-2{\boldsymbol\theta}^\top \mathbf J^\top (\mathbf P\otimes \mathbf I_{d}) \mathbf y 
+			 \mathbf 1_{n_x}^\top \mathbf P \widetilde{\mathbf y} 
\label{energy_case_one_generic}			\\
			s.t.\ \mathbf P  \mathbf 1_{n_y}\le  \mathbf 1_{n_x},\
			\mathbf 1_{n_x}^\top \mathbf P\le  \mathbf 1_{n_y}^\top,\
			\mathbf 1_{n_x}^\top \mathbf P  \mathbf 1_{n_y}=n_p,\  
			\mathbf P\ge 0, \
			\underline{\boldsymbol{\theta}}\le \boldsymbol{\theta}\le \overline{\boldsymbol{\theta}}
			\label{k_card_P_const}
		\end{gather}
\end{subequations}		
where $\mathbf P=\{p_{ij}\}$ is the binary assignment matrix, with $p_{ij}=1$ if point $\mathbf x_i$ matches $\mathbf y_j$, and 0 otherwise. The spatial transformation is parameterized by $\boldsymbol{\theta}$ such that $\mathbf T(\mathbf x_i |\boldsymbol{\theta})=\mathbf J(\mathbf x_i)\boldsymbol\theta$, where $\mathbf J(\mathbf x_i)$ is the Jacobian matrix at point $\mathbf x_i$. The constraint $\mathbf 1_{n_x}^\top \mathbf P  \mathbf 1_{n_y}=n_p$ specifies that the number of matches is a known constant $n_p$, which models partial overlap.
	$\mathbf I_{d}$ is  the $d\times d$ identity matrix. 
	$\mathbf 1_{n_x}$ is the $n_x$-dimensional  vector of  all ones. 
	$\otimes$ is the Kronecker product. 	$\underline{\boldsymbol{\theta}}$ (resp. 
	$\overline{\boldsymbol{\theta}}$) is 
	the  lower (resp.  upper) bound of $\boldsymbol{\theta}$.
Vectors	$\mathbf y\triangleq\begin{bmatrix}
	\mathbf y_1^\top, \dots, \mathbf y_{n_y}^\top
\end{bmatrix}^\top$,  
$\widetilde{\mathbf {y}}\triangleq \begin{bmatrix}
	\|\mathbf y_1\|_2^2, \dots, \|\mathbf y_{n_y}\|_2^2
\end{bmatrix}^\top$
and matrix $\mathbf J\triangleq\begin{bmatrix}
	\mathbf J^\top(\mathbf x_1), \dots, \mathbf J^\top(\mathbf x_{n_x})
		\end{bmatrix}^\top$.
		
For the purpose of optimization, we reformulate the objective by vectorizing the matrices. By converting the assignment matrix $\mathbf P$ to a vector $\mathbf p=\text{vec}(\mathbf P)$, the objective function can be written in a more compact form:
\begin{gather}
	E(\mathbf p,\boldsymbol \theta)=\boldsymbol\theta^\top\text{mat} (\mathbf B\mathbf p) \boldsymbol{\theta}-
	2\boldsymbol\theta^\top \mathbf A\mathbf p  + 
	\boldsymbol \rho^\top \mathbf p  
\end{gather}
Here, $\mathbf A\triangleq(\mathbf{J}^\top \otimes\mathbf y^\top )\mathbf W^{n_x,n_y}_{n_d}$, 
$\mathbf B\triangleq(\mathbf{J}_2^\top \otimes \mathbf I_{n_\theta}) \mathbf W^{n_x,1}_{n_\theta} (\mathbf I_{n_x} \otimes\mathbf 1_{n_y}^\top )$,
 and $\boldsymbol\rho\triangleq\mathbf 1_{n_x}  \otimes\widetilde{\mathbf y}$ are constant matrices and vectors derived from the point coordinates and Jacobian matrices,
Matrices $
\mathbf{J}_2\triangleq\begin{bmatrix}
	\mathbf{J}(\mathbf x_1)^\top \mathbf{J}(\mathbf x_1),       \ldots,      \mathbf{J}(\mathbf x_{n_x})^\top \mathbf{J}(\mathbf x_{n_x})
\end{bmatrix}^\top
$ and 
 	$
 \mathbf W^{m,n}_d\triangleq
 \mathbf I_m \otimes  \begin{bmatrix}
 	\mathbf I_n \otimes (\mathbf e_d^1)^\top,
 	\dots,
 	\mathbf I_n \otimes (\mathbf e_d^d)^\top
 \end{bmatrix}^\top
 $.
$n_\theta$ denotes the dimensionality of ${\boldsymbol{\theta}}$.
$\mathbf e_d^i$ is the  $d$-dimensional column vector 
with only  one nonzero unit  element at the $i$-th position.
$\text{mat}(\cdot)$ reconstructs a  matrix  from a vector,
which is   the inverse of the operator $\text{vec}(\cdot)$. 

Since $\mathbf  1^\top_{n_xn_y} \mathbf  p=n_p$, a constant,
for rows of $\mathbf B$
containing identical elements,
the result of them multiplied by $\mathbf p$  can be replaced by  constants.
Also, redundant rows can  be removed.
Since $\text{mat}(\mathbf B\mathbf p)$ is a symmetric matrix,
$\mathbf B$ surely   contains redundant rows.
Based on the above  analysis,
we therefore   denote $\mathbf B_2$ 
as the matrix formed as a result of $\mathbf B$ 
removing such  rows.
(Please refer to Sec.  \ref{sec:app_one} and  \ref{sec:app_two} for examples of $\mathbf B_2$).
%
Consequently,
$E$ can be  rewritten as \cite{LIAN2023126482}
\begin{gather}
	E(\mathbf p, \boldsymbol\theta)=\boldsymbol\theta^\top \left[\text{mat} ( \mathbf K \mathbf B_2 \mathbf p   
	)+ \mathbf C\right] \boldsymbol{\theta}  
	-
	2\boldsymbol\theta^\top \mathbf A \mathbf p  +
	\boldsymbol\rho^\top \mathbf p   \label{eng_case_one_vec}
\end{gather}
where the nonzero elements of the constant matrix $\mathbf C$ correspond to the  rows of $\mathbf B$ containing identical elements. 
(Please refer to Sec.  \ref{sec:app_one} and  \ref{sec:app_two} for examples of $\mathbf C$).
The elements of the constant matrix $\mathbf K$ 
take  values  in $\{0,1\}$ and   record the correspondences  between the rows of $\mathbf B_2 $ and those of $\mathbf B$.

 \begin{figure*}
 		\begin{adjustwidth}{-1.8cm}{0cm}
 	\centering
 	\begin{tikzcd}[row sep=scriptsize, column sep=-0.9em]
 		E(\mathbf p, \boldsymbol\theta)=&	\textcolor{red}{\boxed{\boldsymbol\theta^\top \text{mat} (  \mathbf K \mathbf B_2 \mathbf p   
 				) \boldsymbol\theta }}
 		\arrow[d,"\text{define }\boldsymbol{\Theta}\triangleq \boldsymbol\theta\boldsymbol\theta^\top\text{,}" swap, "\boldsymbol\Gamma \triangleq \text{mat}(K\mathbf B_2p)"]
 		&+   \boldsymbol\theta^\top \mathbf C  \boldsymbol\theta
 		& \textcolor{blue}{\boxed{-
 				2\boldsymbol\theta^\top  \mathbf A \mathbf p }} 
 		\arrow[d,"\text{define}" swap, "\boldsymbol\eta \triangleq -2\mathbf A\mathbf p" ]
 		&{+} 
 		\boldsymbol\rho^\top\mathbf p   
 		\\		
 		E(\mathbf p,\boldsymbol\theta,\boldsymbol{\Theta},\boldsymbol\Gamma,\boldsymbol\eta)= &\textcolor{red}{				\boxed{
 				\text{trace} ( \boldsymbol\Gamma\boldsymbol{\Theta})
 		} }	 
 		\arrow[d,"\text{bilinear}" swap,"\text{relaxation}"]
 		&+ \boldsymbol\theta^\top  \mathbf C  \boldsymbol\theta 
 		&+
 		\textcolor{blue}{				\boxed{\boldsymbol\theta^\top \boldsymbol\eta  }} 
 		\arrow[d,"\text{bilinear}" swap,"\text{relaxation}"]
 		&+ 
 		\boldsymbol\rho^\top\mathbf p  \\
 		E_l(\mathbf p,\boldsymbol\theta,\boldsymbol{\Theta},\boldsymbol\Gamma,\boldsymbol\eta)=&\textcolor{red}{
 			\boxed{
 				\text{trace}(\mathbf H^0\boldsymbol{\Theta})
 				+\text{trace}(\mathbf H^1 \boldsymbol\Gamma)
 				+\text{trace}(\mathbf H^2 \mathbf 1 \mathbf 1^\top )
 		}}	
 		\arrow[d,"\text{substitute back }\boldsymbol\theta\boldsymbol\theta^\top\Leftarrow \boldsymbol{\Theta}\text{,}" swap, "\text{mat}(K\mathbf B_2p)\Leftarrow\boldsymbol\Gamma"]
 		&
 		+ \boldsymbol\theta^\top  \mathbf C  \boldsymbol\theta 
 		&
 		+\textcolor{blue}{				\boxed{
 				(\mathbf g^0)^\top	\boldsymbol{\theta}   
 					+ (\mathbf g^1)^\top \boldsymbol\eta  
 					+ \mathbf 1_{n_\theta}^\top \mathbf g^2 
 		}}
 		\arrow[d,"\text{substitute back}" swap, "-2\mathbf A\mathbf p \Leftarrow \boldsymbol\eta" ]
 		&+ 
 		\boldsymbol\rho^\top\mathbf p \\
 		E_l(\mathbf p,\boldsymbol\theta)=&\textcolor{red}{
 			\boxed{
 				\boldsymbol\theta^\top \mathbf H^0 \boldsymbol\theta
 				+\text{trace}(\mathbf H^1 \text{mat} (\mathbf K \mathbf B_2 \mathbf p   
 				)  )
 				+\text{trace}(\mathbf H^2 \mathbf 1 \mathbf 1^\top )
 		}}	&
 		+ \boldsymbol\theta^\top  \mathbf C  \boldsymbol\theta 
 		&
 		+
 		\textcolor{blue}{				\boxed{
 				 (\mathbf g^0)^\top	\boldsymbol{\theta}  
 					+ (\mathbf g^1)^\top (-2 \mathbf A \mathbf p  ) 
 					+ \mathbf 1_{n_\theta}^\top \mathbf g^2 
 		}}	
 		&+ 
 		\boldsymbol\rho^\top\mathbf p
 	\end{tikzcd}
 	\end{adjustwidth}
 	\caption{Derivation of a lower bound function $E_l(\mathbf p,\boldsymbol\theta)$ of  $E(\mathbf p,\boldsymbol\theta)$ via variable substitution and  bilinear relaxation.
 		\label{bil_relax_derive}
 	}
 \end{figure*}

\section{Optimization \label{sec:optimize}}

This section details the optimization strategy for our Branch-and-Bound (BnB) algorithm. We begin by deriving a lower bound for our objective function using bilinear relaxation (Sec. \ref{subsec:lb_fun}). We show that the minimization of this lower bound can be efficiently decoupled into two independent subproblems. Next, we present our branching strategy (Sec. \ref{subsec:branch_case_one}), demonstrating that it's sufficient to branch on only the low-dimensional transformation parameters. Finally, we discuss how to compute a more effective upper bound (Sec. \ref{subsec:ub_case_one}) and outline the complete BnB algorithm (Sec. \ref{sec:BnB_aff}).


\subsection{Lower bound function derivation
	\label{subsec:lb_fun}}

We derive a lower bound function of $E(\mathbf p,\boldsymbol\theta)$ via variable substitution and bilinear relaxation, as illustrated in Fig. \ref{bil_relax_derive}.
In the following,
we explain each step of the approach in detail.

By introducing new  variables $\boldsymbol{\Theta}\triangleq\boldsymbol\theta \boldsymbol\theta^\top$, $\boldsymbol\Gamma \triangleq \text{mat}(\mathbf K \mathbf B_2 \mathbf p)$ and $\boldsymbol\eta\triangleq -2\mathbf A\mathbf p$,
we can rewrite the above objective function  as a quadratic function:
\begin{gather}
	  E(\mathbf p,\boldsymbol\theta,\boldsymbol{\Theta},\boldsymbol\Gamma,\boldsymbol\eta)=
	 \text{trace} ( \boldsymbol\Gamma \boldsymbol{\Theta} )
+	\boldsymbol\theta^\top \mathbf C \boldsymbol\theta
	+
	\boldsymbol\theta^\top \boldsymbol\eta   {+} 
	\boldsymbol\rho^\top\mathbf p  
\end{gather} 

We note that   $E$ contains bilinear terms   $\text{trace}(\boldsymbol\Gamma \boldsymbol{\Theta})$ and $\boldsymbol\theta^\top \boldsymbol\eta $, 
which interconnect $\boldsymbol\Gamma$ and $\boldsymbol{\Theta}$ (resp. $\boldsymbol\theta$ and $\boldsymbol\eta$), 
making optimization challenging. 
To tackle this challenge, 
we use the bilinear relaxation formula introduced in Sec. \ref{sec:bil_lb}
to relax these terms into linear ones.
The first relaxation is:
	\begin{equation} 
\begin{tikzcd}[row sep=scriptsize, column sep=-0.9em]				
			\textcolor{red}{				\boxed{
					\text{trace} ( \boldsymbol\Gamma\boldsymbol{\Theta})
			} }	
\arrow[d,"\text{bilinear}" swap,"\text{relaxation}"]				 \\	 
			\textcolor{red}{
				\boxed{
					\text{trace}(\mathbf H^0\boldsymbol{\Theta})
					+\text{trace}(\mathbf H^1 \boldsymbol\Gamma  )
					+\text{trace}(\mathbf H^2 \mathbf 1 \mathbf1^\top )
			}}						
\end{tikzcd}
	\end{equation} 
where
in accordance with equation 
\eqref{bil_lin_lb},
the coefficients can be calculated as 
\begin{gather}
\mathbf H^0=\frac{\underline{\boldsymbol\Gamma}+\overline{\boldsymbol\Gamma}}{2}
,
\mathbf H^1=\frac{\underline{\boldsymbol{\Theta}}+\overline{\boldsymbol{\Theta}}}{2},
\mathbf H^2=-\frac{ \underline{\boldsymbol\Gamma} \underline{\boldsymbol{\Theta}}+
	\overline{\boldsymbol\Gamma}  \overline{\boldsymbol{\Theta}} 
}{2}
\end{gather}
Here we use underline 
(resp.  overline)
 to denote the lower (resp. upper) bounds of matrices $\boldsymbol{\Theta}$ and $\boldsymbol\Gamma$.
It is  also worth noting  that the range  
$[\underline{\boldsymbol{\Theta}}$, $\overline{\boldsymbol{\Theta}}]$ 
of $\boldsymbol{\Theta}$ 
can be derived  from the range $[\underline{\boldsymbol\theta}
,\overline{\boldsymbol\theta}]$ of $\boldsymbol\theta$ based on relation $\boldsymbol{\Theta}=\boldsymbol\theta \boldsymbol\theta^\top$, as explained in Sec. \ref{sec:range_of_Theta}.

The second relaxation  is:
	\begin{equation} 
\begin{tikzcd}[row sep=scriptsize, column sep=-0.9em]	
			\textcolor{blue}{				\boxed{\boldsymbol\theta^\top \boldsymbol\eta }}
\arrow[d,"\text{bilinear}" swap,"\text{relaxation}"]				 \\	
			\textcolor{blue}{				\boxed{
					\begin{matrix}	
					 (\mathbf g^0)^\top	\boldsymbol{\theta}  
						+ (\mathbf g^1)^\top \boldsymbol\eta 
						+ \mathbf 1_{n_\theta}^\top \mathbf g^2 
					\end{matrix}		
			}}	
\end{tikzcd}
	\end{equation} 
where in accordance with equation 
 \eqref{bil_lin_lb},
the coefficients can be calculated as 
\begin{gather}
\mathbf g^0=\frac{\underline{\boldsymbol\eta} + \overline{\boldsymbol\eta}}{2},
\mathbf g^1=\frac{\underline{\boldsymbol\theta}+\overline{\boldsymbol\theta}}{2},
\mathbf g^2=
\frac{\underline{\boldsymbol\eta}\underline{\boldsymbol\theta}+\overline{\boldsymbol\eta }\overline{\boldsymbol\theta}}{2}
\end{gather}
Here, the underline and overline have the same meaning as before.

Taking into account the above relaxations,
we get the following  lower bound  function:
\begin{align}
	E_l(\mathbf p,\boldsymbol\theta,\boldsymbol{\Theta},\boldsymbol\Gamma,\boldsymbol\eta)=&
	\text{trace}(\mathbf H^0\boldsymbol{\Theta})
	+\text{trace}(\mathbf H^1 \boldsymbol\Gamma ) 
		+\text{trace}(\mathbf H^2 \mathbf 1 \mathbf 1^\top ) 
	 \notag\\
	&+ \boldsymbol\theta^\top  \mathbf C \boldsymbol\theta
	+
	\boldsymbol\theta^\top \mathbf g^0 + (\mathbf g^1)^\top \boldsymbol\eta +\mathbf 1^\top \mathbf g^2
	{+} 
	\boldsymbol\rho^\top\mathbf p  	 
\end{align}

By substituting back  $\boldsymbol\theta \boldsymbol\theta^\top\Leftarrow \boldsymbol{\Theta}$, $ \text{mat}(K \mathbf B_2 \mathbf p)\Leftarrow \boldsymbol\Gamma $ and $-2\mathbf A\mathbf p\Leftarrow \boldsymbol\eta$,
we get the final form of the  lower bound function:
\begin{align}
	E_l(\mathbf p,\boldsymbol\theta)=&
			\boldsymbol\theta^\top \mathbf H^0 \boldsymbol\theta
			+\text{trace}(\mathbf H^1 \text{mat} (\mathbf K \mathbf B_2 \mathbf p   
			)  )
			+\text{trace}(\mathbf H^2 \mathbf 1 \mathbf 1^\top )
	\notag \\
&	+ \boldsymbol\theta^\top  \mathbf C  \boldsymbol\theta 	
	+
				\boldsymbol{\theta}^\top  \mathbf g^0 
				+ (\mathbf g^1)^\top (-2 \mathbf A \mathbf p  ) 
				+ \mathbf 1_{n_\theta}^\top \mathbf g^2 
	+ 
	\boldsymbol\rho^\top\mathbf p
\end{align}


It is apparent that the minimization of $E_l$ under constraints \eqref{k_card_P_const} can be  decomposed into  separate optimizations over  $\mathbf p$ and   $\boldsymbol\theta$.

\textbf{1. Optimization over $\mathbf{p}$}.
This subproblem is a linear program in terms of the correspondence vector $\mathbf{p}$.
\begin{subequations}
	\begin{gather}
		\min  _{ p} \ 
		\text{trace}(\mathbf H^1 \text{mat} (\mathbf K \mathbf B_2 \mathbf p   
		)  )		
		+ (\mathbf g^1)^\top (-2\mathbf A\mathbf p) 
		+ 
		\boldsymbol\rho^\top\mathbf p  	 \\
		s.t.\ 	\mathbf p\in \Omega
	\end{gather}
\end{subequations}
where $\Omega$ denotes the feasible region of $\mathbf p$ as defined in 
\eqref{k_card_P_const}.
This is  an $n_p-$cardinality linear assignment problem
and  can be 
transformed into  a standard
linear assignment problem \cite{k_card_assign_transform} and then
efficiently solved
by combinatorial optimization algorithms such as LAPJV 
\cite{LAPJV}.

\textbf{2. Optimization over $\boldsymbol\theta$}.
This subproblem is a quadratic program in terms of the transformation vector $\boldsymbol\theta$.
\begin{subequations}
\begin{gather}
	\min  _{  \boldsymbol\theta} \ 
	\boldsymbol\theta^\top (\mathbf H^0 +\mathbf C) \boldsymbol\theta 
	+
	\boldsymbol\theta^\top \mathbf g^0 	 \label{subprob_one}\\
	s.t.\ 	 \underline{\boldsymbol\theta}\le \boldsymbol\theta\le \overline{\boldsymbol\theta}
	\label{subprob_one_const}
\end{gather}
\end{subequations}
In the next section, we will delve into the solution for this problem.

%

\subsection{Branching strategy
	\label{subsec:branch_case_one}
}

According to  the result stated in Sec. \ref{sec:bil_lb},
for the bilinear term $\boldsymbol{\theta}^\top \boldsymbol\eta $,
for  its linear lower bound function to converge to it,
it suffices to  branch over $\boldsymbol{\theta}$
and leave the range of $\boldsymbol\eta$ fixed.
Here, 
utilizing $\boldsymbol\eta= -2\mathbf A\mathbf p$,
 the fixed range $[\underline{\boldsymbol\eta}, \overline{\boldsymbol\eta}]$
 of $\boldsymbol\eta$ can be computed as 
\begin{equation}
\underline{\eta} _i
=	\min_{\mathbf p\in \Omega} (-2\mathbf A \mathbf p)_i
,\quad 
\overline{\eta}_i
= \max_{\mathbf p\in \Omega}  (-2\mathbf A \mathbf p)_i  \label{range_prob_1}
\end{equation}

Likewise,
for the  bilinear term
$\text{trace}(\boldsymbol\Gamma \boldsymbol{\Theta})$,
for its linear lower bound function to converge to it,
we only need to branch over  
$\boldsymbol{\theta}$ (since the range of $\boldsymbol\theta$ determines the range of $\boldsymbol{\Theta}$
as explained  in Sec. \ref{sec:range_of_Theta})
and leave the range of $\boldsymbol\Gamma$
fixed.
%
Here, 
 utilizing $\boldsymbol\Gamma = \text{mat}(K \mathbf B_2 \mathbf p)$, 
 the fixed  range 
$[\underline{\boldsymbol\Gamma},
\overline{\boldsymbol\Gamma} ]$
of $\boldsymbol\Gamma $ can be computed as follows:
%
%
\begin{gather}
\underline{\Gamma}	_{ij}
=\min_{\mathbf p\in \Omega} [\text{mat} ( \mathbf K\mathbf B_2\mathbf p   
	)]_{ij},\quad  
\overline{\Gamma} _{ij}
=\max_{\mathbf p\in \Omega}  [\text{mat} ( \mathbf K\mathbf B_2\mathbf p   
	)]_{ij} \label{range_prob_2}
\end{gather}

With these choice of $\underline{\boldsymbol\Gamma}$ and $\overline{\boldsymbol\Gamma}$, it has been empirically demonstrated  that
\begin{equation}
	\mathbf H^0+\mathbf C=
	\frac{\left( \underline{\boldsymbol\Gamma}+\mathbf C\right)+
		\left( \overline{\boldsymbol\Gamma}+\mathbf C\right)
	}{2} \succeq 0 \label{SDP_quad_term}
\end{equation}
Please refer to Sec. \ref{subsec:PSD}
for relevant experiments and
  \ref{sec:appendix} for a weaker assertion.
This  result implies that
problem \eqref{subprob_one} \eqref{subprob_one_const} is now a convex quadratic program.
Consequently,
utilizing solvers for convex quadratic programs, such as MATLAB's quadprog function, guarantees  global optimality.
We will address the proof of \eqref{SDP_quad_term} 
in our future research.

We also note  that  \eqref{range_prob_1}  and \eqref{range_prob_2}  are $n_p-$cardinlaity linear assignment problems,
which  can be efficiently solved by the aforementioned  algorithms.

In conclusion, in our BnB algorithm,
we only need to branch over $\boldsymbol{\theta}$.
Therefore, the dimension of the branching space of our BnB algorithm is low and the proposed algorithm can converge quickly.

\subsection{The range of $\Theta$ from the range of $\theta$
\label{sec:range_of_Theta}
}

Given the relationship 
  $\boldsymbol{\Theta}=\boldsymbol\theta \boldsymbol\theta^\top$,
and the specified range $\left[\underline{\boldsymbol\theta},
\overline{\boldsymbol\theta}\right]$  of $\boldsymbol\theta$, 
we can determine the range $\left[\underline{\boldsymbol{\Theta}},
\overline{\boldsymbol{\Theta}}\right]$ of $\boldsymbol{\Theta}$ as follows:

\[
[\underline{\Theta_{ij}},\overline{\Theta_{ij}}]=
\begin{cases}
	[\underline{\theta_i},\overline{\theta_i}]
	\cdot
	[	\underline{\theta_j}, \overline{\theta_j}],	
	& \text{if}\ i\neq j	
	\\
[\underline{\theta_i},\overline{\theta_i}]^2,
	 &\text{if}\ i= j
\end{cases}
\]
Here, employing interval arithmetic, the product operator is defined as:
\[
[\underline{\theta_i},\overline{\theta_i}]
\cdot
[	\underline{\theta_j}, \overline{\theta_j}] =
[\min(\underline{\theta_i}\underline{\theta_j},
\underline{\theta_i}\overline{\theta_j},
\overline{\theta_i}\underline{\theta_j},
\overline{\theta_i}\overline{\theta_j}
) ,
\max(\underline{\theta_i}\underline{\theta_j},
\underline{\theta_i}\overline{\theta_j},
\overline{\theta_i}\underline{\theta_j},
\overline{\theta_i}\overline{\theta_j}
)
]
\]
Additionally, the square operator is defined as:
\[
[\underline{\theta_i},\overline{\theta_i}]^2 =
\begin{cases}
[	\min(\underline{\theta_i}^2,
\overline{\theta_i}^2
),
	\max(\underline{\theta_i}^2,
\overline{\theta_i}^2
)
],
& \text{if}\  0\notin[\underline{\theta_i},
\overline{\theta_i}]
\\
[	0,
\max(\underline{\theta_i}^2,
\overline{\theta_i}^2
)
],
& \text{if}\  0\in[\underline{\theta_i},
\overline{\theta_i}]
\end{cases}
\]

\subsection{\label{subsec:ub_case_one}Upper bound} 

When we substitute the solutions for $\mathbf{p}$ and $\boldsymbol{\theta}$ obtained during the computation of the lower bound into $E(\mathbf{p}, \boldsymbol{\theta})$, we can derive an upper bound. However, the solution for $\boldsymbol{\theta}$ obtained in this process might significantly differ from the optimal $\boldsymbol{\theta}$, especially in the early iterations of the BnB algorithm. This discrepancy can lead to a poor computed upper bound.

To address this issue, we can rewrite the objective function $E(\mathbf{p}, \boldsymbol{\theta})$ as a function of $\mathbf{p}$ alone by eliminating $\boldsymbol{\theta}$, based on a result from \cite{LIAN2023126482}:
\begin{gather}
	E(\mathbf{p}) = -\mathbf{p}^\top \mathbf{A}^\top [\text{mat}(\mathbf{K} \mathbf{B}_2 \mathbf{p}) + \mathbf{C}]^{-1} \mathbf{A}\mathbf{p} + \boldsymbol{\rho}^\top \mathbf{p}. \label{E_p_for_upper_bound}
\end{gather}
Therefore, by substituting the computed $\mathbf{p}$ into this function, we obtain a value for $E$, which serves as an upper bound.

\subsection{Branch-and-Bound}
\label{sec:BnB_aff}
With the preceding preparations in place, we now apply the Branch-and-Bound (BnB) algorithm to optimize the objective function $E$. We begin by forming an initial hypercube $M$ using the initial range of the parameters $\boldsymbol{\theta}$.

In each iteration, the algorithm selects the hypercube with the lowest lower bound for subdivision. This process progressively tightens the global lower bound. Simultaneously, the upper bound is refined by evaluating $E(\mathbf{p})$ using solutions derived during the lower bound computation. For a complete overview of the process, see Algorithm \ref{tri_BnB_algo}.

\begin{algorithm}
	\caption{A BnB algorithm  for minimizing $E$ \label{tri_BnB_algo}}

	Set tolerance error $\epsilon>0$	and
	initial  hypercube $M$.	
	Let $\mathscr M_1=\mathscr N_1=\{M\}$	
	where $\mathscr M_k$ and $\mathscr N_k$ denote the collection of all hypercubes and the collection of active hypercubes at iteration $k$, respectively.
	
	\For{$k=1,2,\ldots$}{
		
		For each hypercube $M\in \mathscr N_k$,
		minimize the lower bound function $E_l$ 
		to obtain the optimal point correspondence solution $\mathbf p(M)$ and the optimal value $\beta(M)$.
		$\beta(M)$ is a lower bound for  $M$.
		

		Let $\mathbf p^k$ be the best among all feasible solutions so far encountered: $\mathbf p^{k-1}$ and all $\mathbf p(M)$ for $M\in\mathscr N_k$.
		Delete all $M\in\mathscr M_k$ such that $\beta(M)\ge E(\mathbf p^k)-\epsilon$.
		Let $\mathscr R_k$ be the remaining collection of hypercubes.
		
		If $\mathscr R_k=\emptyset$,
		terminate: $\mathbf p^k $ is the global $\epsilon-$minimum solution.
		
		Select the hypercube $M$ yielding the lowest lower bound and divide it into two sub-hypercubes $M_1$, $M_2$
		by bisecting  the longest edge.
		
		Let $\mathscr N_{k+1}=\{M_1,M_2\}$
		and $\mathscr M_{k+1}=(\mathscr R_k\backslash M) \cup \mathscr N_{k+1}$.

	}

\end{algorithm}



\subsection{Positive Semidefiniteness of $\mathbf{H}^0 + \mathbf{C}$}
\label{subsec:PSD}
To verify the positive semidefiniteness of the matrix $\mathbf{H}^0 + \mathbf{C}$ ($\mathbf{H}^0 + \mathbf{C} \succeq 0$), we conducted an experiment using the separate outliers and inliers test, as detailed in Section \ref{sec:2D_synth_test}. The outlier-to-data ratio was set at 0.3.

Figure \ref{hist_2d} displays histograms of the smallest eigenvalues of $\mathbf{H}^0 + \mathbf{C}$ across 100 different registration instances. The results consistently show that the smallest eigenvalue is non-negative, providing strong evidence that $\mathbf{H}^0 + \mathbf{C}$ is indeed positive semidefinite. This outcome highlights the reliability and effectiveness of the proposed algorithm.



\begin{figure}[h]
	\begin{tabular}{@{}c@{} c}
		\includegraphics[width=.5\linewidth]{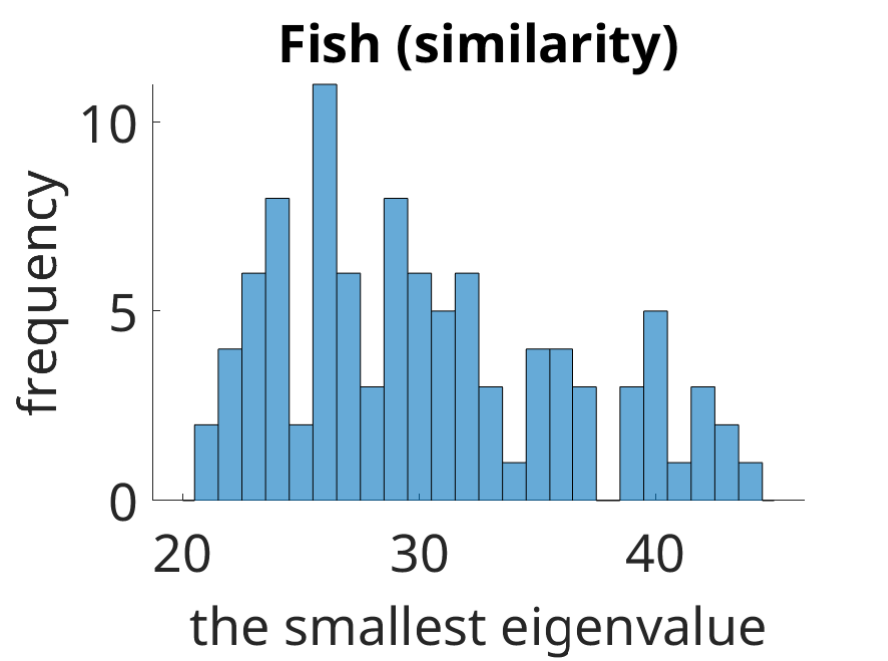}&
		\includegraphics[width=.5\linewidth]{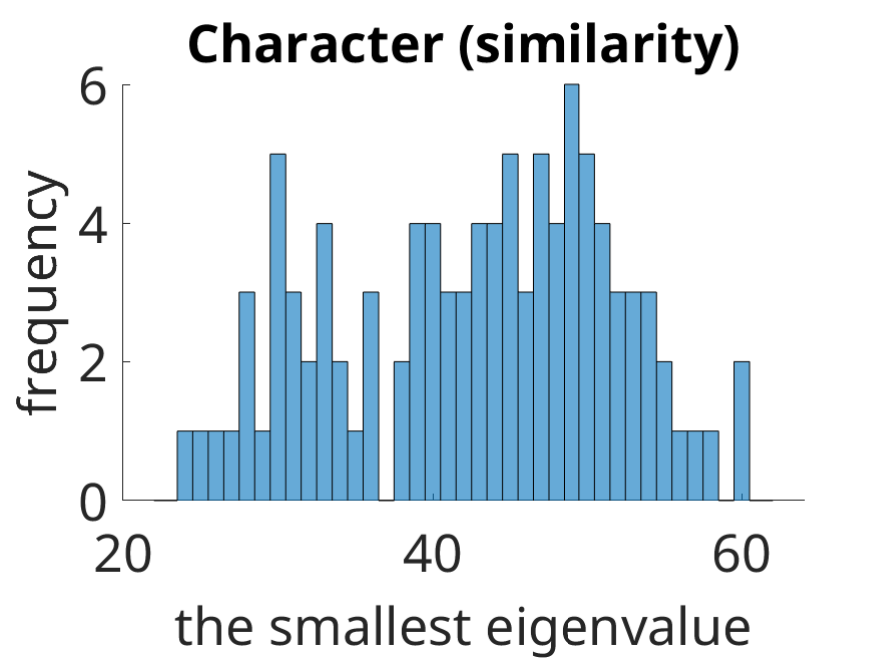}\\
	\end{tabular}
	\caption{Histograms depicting the smallest eigenvalue of $\mathbf H^0+\mathbf C$ across 100 registration instances.
	\label{hist_2d} }		
\end{figure}

\subsection{BnB Algorithm Convergence}
\label{sec:converge_2d}

We assessed the convergence of our BnB algorithm using the separate outliers and inliers test (Sec. \ref{sec:2D_synth_test}) with an outlier-to-data ratio of $0.3$. The results in Figure \ref{LB_UB_2D} reveal two key points:

\begin{enumerate}
	\item A smaller initial range for $\boldsymbol\theta$ yields a tighter duality gap, which improves both upper and lower bounds. This suggests that leveraging a tighter initial range can speed up convergence.
	\item Simpler problem instances (e.g., the fish test vs. the character test) have a tighter duality gap, leading to faster convergence.
\end{enumerate}

Based on observation 2, a fixed duality gap threshold is an unsuitable stopping criterion for all problem types, as it can cause either excessive computation time or premature termination. Therefore, consistent with \cite{LIAN2023126482}, we use the maximum branching depth as our stopping criterion.

\begin{figure} [t]
	\centering
	\newcommand{\scale}{0.5}

	\begin{tabular}{@{}c@{} c@{} c@{} c}	
		\includegraphics[width=\scale\linewidth]{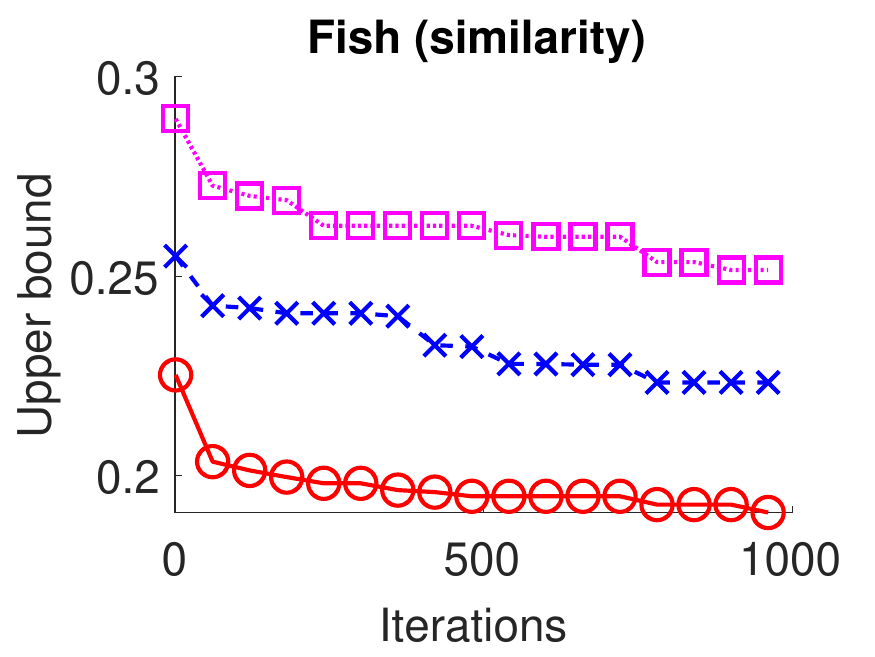}&	
		\includegraphics[width=\scale\linewidth]{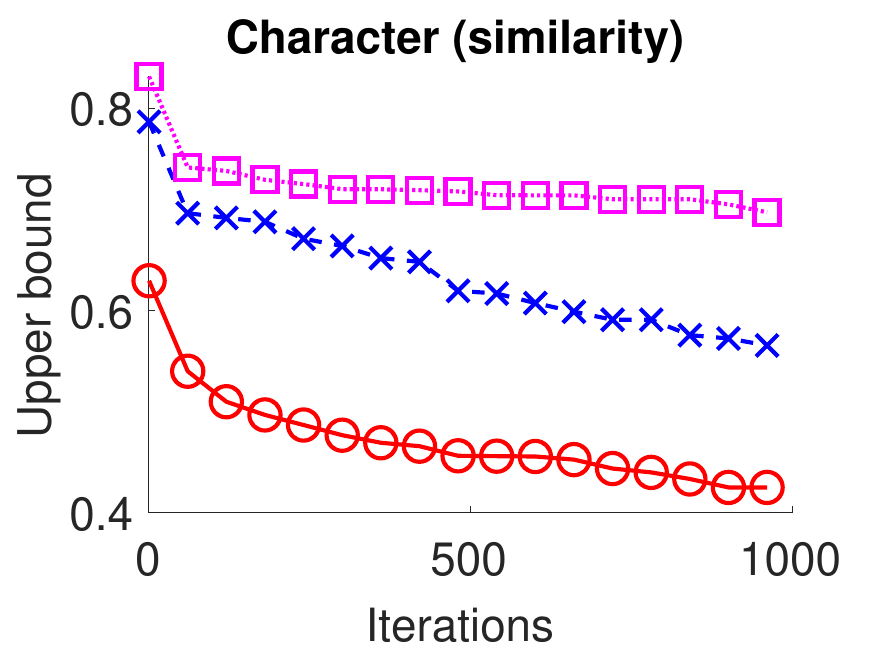}\\		
		\includegraphics[width=\scale\linewidth]{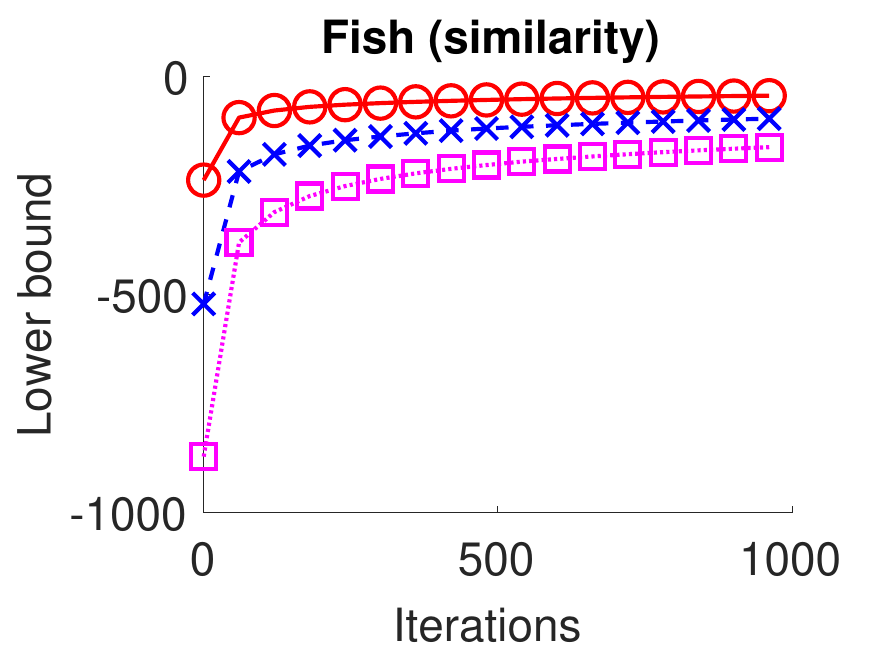}&		
		\includegraphics[width=\scale\linewidth]{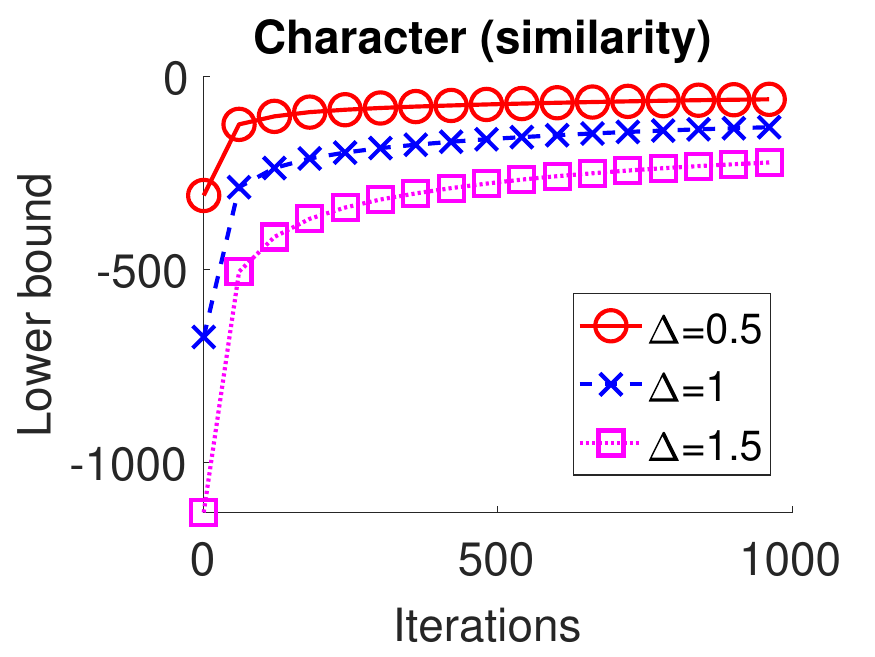}
	\end{tabular}
	\caption{
		Upper (first row) and lower bounds (second row) generated in each iteration of RPM-BP. Our method is tested with the $n_p$ value chosen as the ground truth and with varying initial ranges of $\boldsymbol\theta$: $[\boldsymbol\theta_{gt}-\Delta, \boldsymbol\theta_{gt}+ \Delta]$, where $\boldsymbol\theta_{gt}$ represents the ground truth $\boldsymbol\theta$ solution. Here, the margin $\Delta$ takes values of $0.5$, $1$, and $1.5$ respectively.		
		\label{LB_UB_2D}}
\end{figure}

\section{Application 1: 2D Similarity and Affine Registration \label{sec:app_one}}
Our algorithm is designed to handle various transformations as long as they can be expressed in the form $\mathbf T(\mathbf x|\boldsymbol\theta)=\mathbf J(\mathbf x)\boldsymbol\theta$. Both 2D similarity and affine transformations fit this structure, allowing for a straightforward application of our framework.

For a \textbf{2D similarity transformation}, the transformation matrix and parameter vector are:
\begin{gather}
	\mathbf T(\mathbf x|{\boldsymbol\theta})=\begin{bmatrix}
		\theta_1 & -\theta_2 \\ \theta_2 & \theta_1
	\end{bmatrix}
	\begin{bmatrix}
		x^1\\x^2 
	\end{bmatrix}
	+\begin{bmatrix}
		\theta_3 \\ \theta_4
	\end{bmatrix}
	=\begin{bmatrix}x^1&-x^2&1&0\\x^2&x^1&0&1\end{bmatrix}\boldsymbol\theta 
\end{gather}
Here, $[\theta_3, \theta_4]^\top$ represents the translation, while $\theta_1$ and $\theta_2$ are derived from the scale $s$ and rotation angle $\phi$ as $\theta_1=s\cos(\phi)$ and $\theta_2=s\sin(\phi)$. For this transformation, it can be derived that $\mathbf B_2$ corresponds to rows $[1, 3, 4]$ of $\mathbf B$, and $\mathbf C=\text{diag}([0,0,n_p,n_p])$.

A \textbf{2D affine transformation} is a more general case:
\begin{gather}
	\mathbf T(\mathbf x|{\boldsymbol\theta})=\begin{bmatrix}
		\theta_1 & \theta_2 \\ \theta_3 & \theta_4
	\end{bmatrix}
	\begin{bmatrix}
		x^1\\x^2 
	\end{bmatrix}
	+\begin{bmatrix}
		\theta_5 \\ \theta_6
	\end{bmatrix}
	=\begin{bmatrix}
		x^1&x^2&0&0&1&0\\
		0&0&x^1&x^2&0&1
	\end{bmatrix} \boldsymbol\theta 
\end{gather}
For this transformation, $\mathbf B_2$ corresponds to rows $[1, 2, 5, 8, 11]$ of $\mathbf B$, and $\mathbf C=\text{diag}([0,0,0,0,n_p,n_p])$.

\section{Application 2: 3D Rigid Registration \label{sec:app_two}}
Our method is theoretically capable of handling 3D affine transformations, which has the form
\begin{gather}
	\setcounter{MaxMatrixCols}{20}
	T(\mathbf x|{\boldsymbol\theta})=\begin{bmatrix}
		\theta_1 & \theta_2 & \theta_3 \\
		\theta_4 & \theta_5 &\theta_6 \\
		\theta_7 & \theta_8 &\theta_9
	\end{bmatrix}
	\begin{bmatrix}
		x^1\\x^2 \\x^3
	\end{bmatrix}
	+\begin{bmatrix}
		\theta_{10} \\ \theta_{11} \\ \theta_{12}
	\end{bmatrix}   
	=\begin{bmatrix}
		x^1&x^2&x^3&  0&0&0&  0&0&0& 1&0&0\\
		0&0&0&x^1&x^2&x^3&0&0&0&0&1&0\\
		0&0&0&0&0&0&x^1&x^2&x^3&0&0&1
	\end{bmatrix} \boldsymbol\theta  \notag
\end{gather}
It can be derived that  $\mathbf B_2$ corresponds to rows $[1,2,3,10,14,15,22,27,34]$ of $\mathbf B$, and
$\mathbf C=\text{diag}([\underbrace{0,\ldots,0}_{9\text{ zeros}},n_p,n_p,n_p])$.
Nevertheless,
 the large number of parameters (12 for a full affine transformation) results in a high-dimensional branching space, significantly slowing down convergence. Therefore, for 3D point cloud registration, we focus on the more constrained \textbf{rigid transformation}.

A 3D rigid transformation involves a rotation and a translation. We represent the 3D rotation using the \textbf{angle-axis representation}, where a rotation is defined by a 3D vector $\mathbf r$. The corresponding $3\times 3$ rotation matrix $\mathbf R\in \mathbb{SO}_3$ can be obtained via the matrix exponential map \cite{Go-ICP_pami}:
\begin{gather}
	\mathbf R=\mathbf I_3+ \frac{[\mathbf r]_\times \sin\|\mathbf r\|}{\|\mathbf r\|}
	+\frac{[\mathbf r]_\times^2(1-\cos \|\mathbf r\|)}{\|\mathbf r\|^2} \label{rot_matrix}
\end{gather}
where $[\cdot]_\times$ is the skew-symmetric matrix representation:
\begin{gather}
	[\mathbf r]_\times= \begin{bmatrix}
		0 & -r_3& r_2\\
		r_3&0& -r_1 \\
		-r_2 & r_1 &0
	\end{bmatrix}
\end{gather}
with $r_i$ being the $i$-th element of $\mathbf r$.

We adapt our core algorithm from Section \ref{sec:optimize} to handle this non-linear rigid transformation. The key modifications are:
\begin{enumerate}
	\item Instead of branching over the full affine parameters, we branch over the 6-dimensional rigid parameters: the angle-axis vector $\mathbf r$ and the translation vector $\mathbf t$.
	\item In each iteration of the BnB algorithm, the range of the rotation matrix $\mathbf R$ is computed from the current range of the angle-axis vector $\mathbf r$ using the formula in Eq. \eqref{rot_matrix}.
\end{enumerate}

For the upper bound calculation, we continue to use the affine-based approach from Eq. \eqref{E_p_for_upper_bound}. While a dedicated method for rigid transformations exists (see Sec. 5.3 of \cite{LIAN2023126482}), it often yields higher upper bounds, which hinders the pruning process and slows down our algorithm's convergence. Therefore, we prioritize speed by using the affine upper bound, even though it doesn't strictly enforce the rigid constraint.

\subsection{Efficiently Computing the Range of $\mathbf R$ \label{sec:range_R_from_r}}
Given a range $[\underline{\mathbf r},\overline{\mathbf r}]$ for the angle-axis vector $\mathbf r$, finding the exact range of each element of the rotation matrix $\mathbf R$ is a computationally expensive non-linear optimization problem. Performing this calculation at every step of the BnB algorithm would be a significant bottleneck.

To address this, we use a \textbf{precomputation and grid-based approximation} method. Before the BnB algorithm begins, we create a regular grid over the initial search space of $\mathbf r$. For each grid point, we compute the corresponding rotation matrix $\mathbf R$ and store the values. Then, during the BnB process, when we need to find the range of $\mathbf R$ for a given hypercube $[\underline{\mathbf r},\overline{\mathbf r}]$, we simply find all precomputed grid points that fall within this hypercube and take the minimum and maximum of their corresponding $\mathbf R$ values. This provides a fast, though approximate, way to get the necessary bounds, significantly speeding up the overall algorithm.

\section{Experiments \label{sec:exp}}

We implemented the proposed algorithm, which we refer to as RPM-BP, using Matlab R2023b. All comparative experiments were conducted on a computer equipped with 6-core CPUs running at 3.2 GHz.

For competing methods that output only point correspondences, we used these correspondences to compute the optimal affine transformation between the two point sets. We defined the matching error as the root-mean-square distance between the transformed model inliers and their corresponding scene inliers.

To ensure efficient computation, the maximum branching depth of RPM-BP was set to 12.

\subsection{2D Registration}
We compared our method with RPM-HTB \cite{LIAN2023126482}, RPM-PA \cite{lian2021polyhedral}, and RPM-CAV \cite{RPM_model_occlude_PR}. These methods were chosen because they are all based on global optimization techniques, can handle partial overlap, and allow for arbitrary similarity transformations, making them suitable for direct comparison with our approach.

\subsubsection{2D Synthetic Data \label{sec:2D_synth_test}}
Synthetic data allows us to systematically evaluate an algorithm's resilience to specific disturbances. We performed five distinct types of tests:
\begin{enumerate}
	\item \textbf{Deformation test}: The prototype shape is non-rigidly deformed to create the scene point set.
	\item \textbf{Noise test}: Positional noise is added to the prototype shape to generate the scene point set.
	\item \textbf{Mixed outliers test}: Random outliers are superimposed on both the prototype and scene shapes.
	\item \textbf{Separate outliers test}: Random outliers are added to different areas of the prototype shape to generate the two point sets.
	\item \textbf{Occlusion + Outlier test}: The prototype shape is occluded to produce two point sets, to which random outliers (with a fixed outlier-to-data ratio of 0.5) are then added in different regions.
\end{enumerate}
These tests are visually illustrated in Fig. \ref{rot_2D_test_data_exa}. To assess a method's ability to handle arbitrary rotations and scaling, random rotations and scalings within the range of $[0.5, 1.5]$ were applied when generating the point sets. Examples of registration results from different methods are provided in Fig. \ref{rot_2D_syn_match_exa}.

The registration errors produced by each method are shown in the top two rows of Fig. \ref{2D_simi_sta}. Our results indicate that RPM-BP and RPM-HTB, when the number of correspondences $n_p$ is chosen as the ground truth, are robust against deformation, noise, and separate outliers. However, their performance degrades when outliers are mixed with inliers. This limitation is a consequence of our method not being $\epsilon$-globally optimal, as discussed in Sec. \ref{sec:converge_2d}.

When using different transformation types, RPM-BP demonstrates greater resilience to disturbances when employing a similarity transformation compared to an affine transformation. Furthermore, the performance of RPM-BP is significantly influenced by the chosen $n_p$ value, with results improving as $n_p$ approaches the ground truth. We also found that RPM-BP is less sensitive to variations in $n_p$ than RPM-HTB, especially in outlier and occlusion tests.

The average run times are depicted in the bottom row of Fig. \ref{2D_simi_sta}. RPM-BP is the most efficient method in the deformation, positional noise, and mixed inlier-outlier tests. Comparing transformation types, RPM-BP is more efficient with similarity transformations due to the smaller number of parameters.

Finally, the outlier tests also provide insight into how each method scales with problem size. RPM-BP, RPM-HTB, and RPM-CAV demonstrate the best scalability, followed by RPM-PA.

\begin{figure*} [t]
		\setlength{\abovecaptionskip}{-1pt plus 0pt minus 12pt} 
	\centering
	\newcommand{\scale}{0.101	}

	\begin{tabular}{@{\hspace{-0mm}}c@{}|@{}c@{}|@{}c@{}|@{}c@{}|@{}c@{}|@{}c@{}|@{}c@{}|@{}c|@{}c }	
		\includegraphics[width=\scale\linewidth]{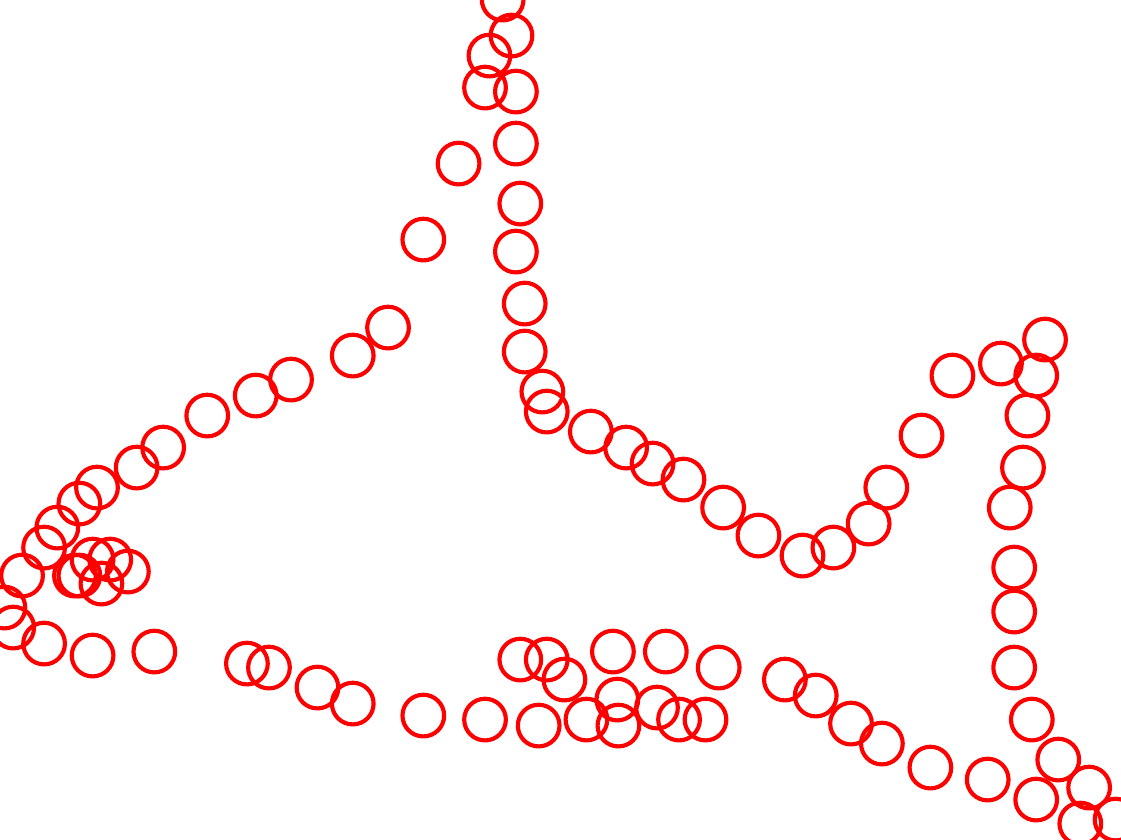}&		
		\includegraphics[width=\scale\linewidth]{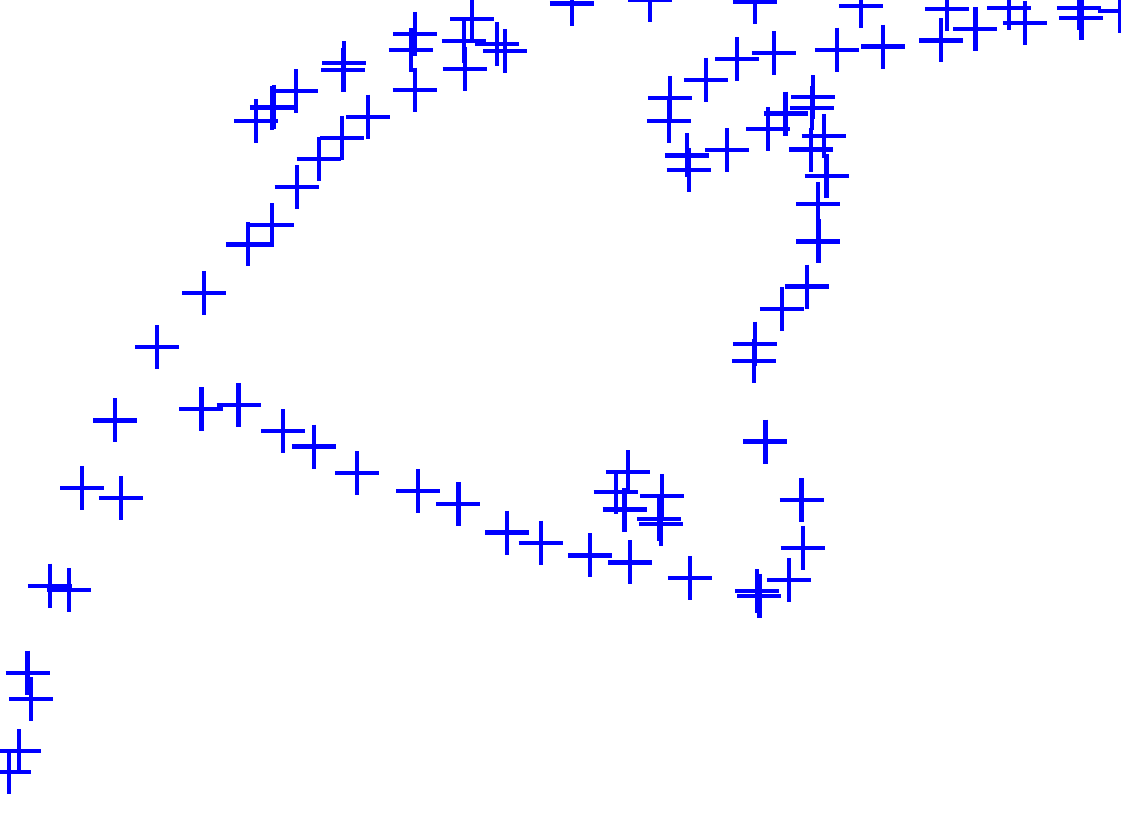}&
		\includegraphics[width=\scale\linewidth]{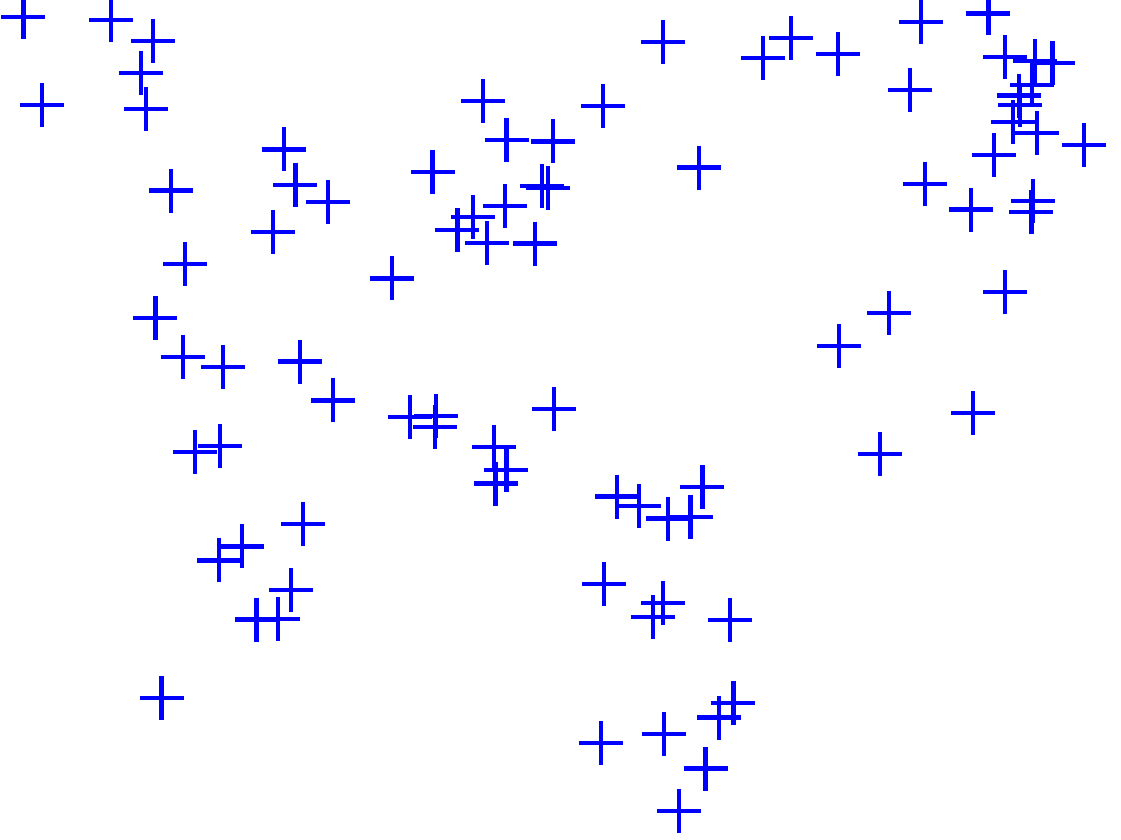}&		
		\includegraphics[width=\scale\linewidth]{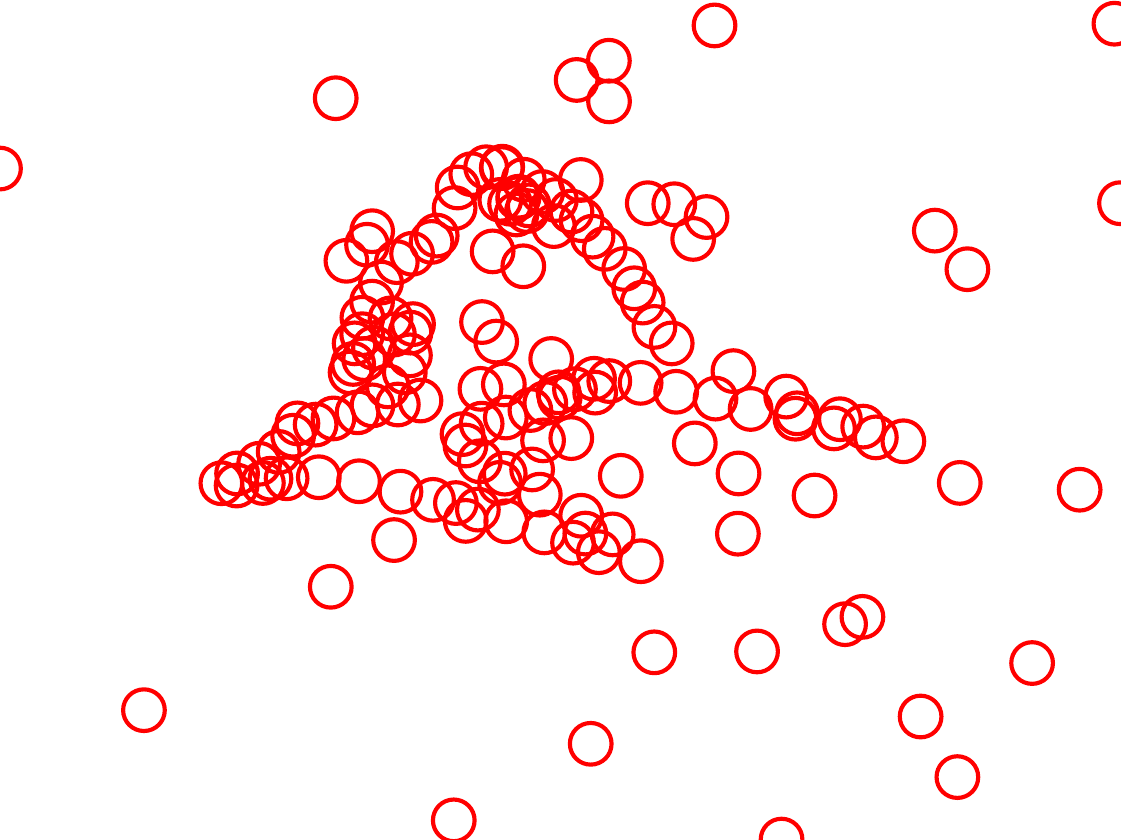}&
		\includegraphics[width=\scale\linewidth]{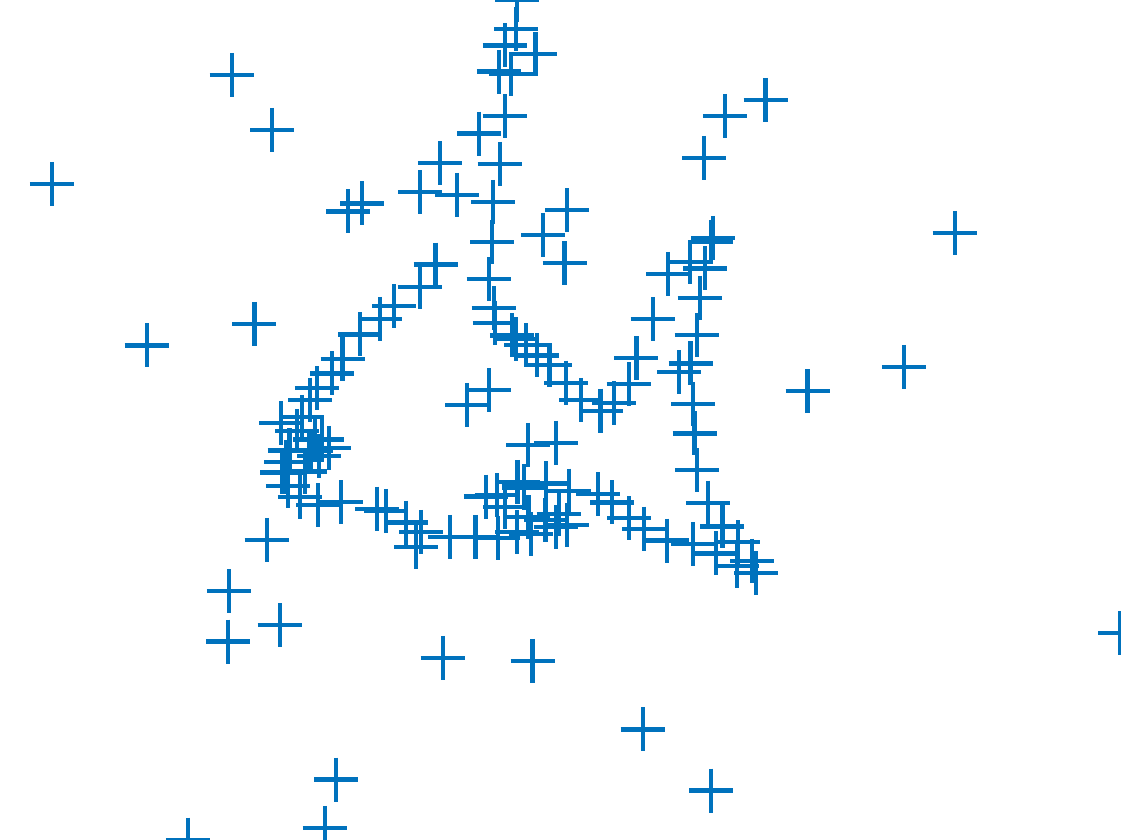}&
		\includegraphics[width=\scale\linewidth]{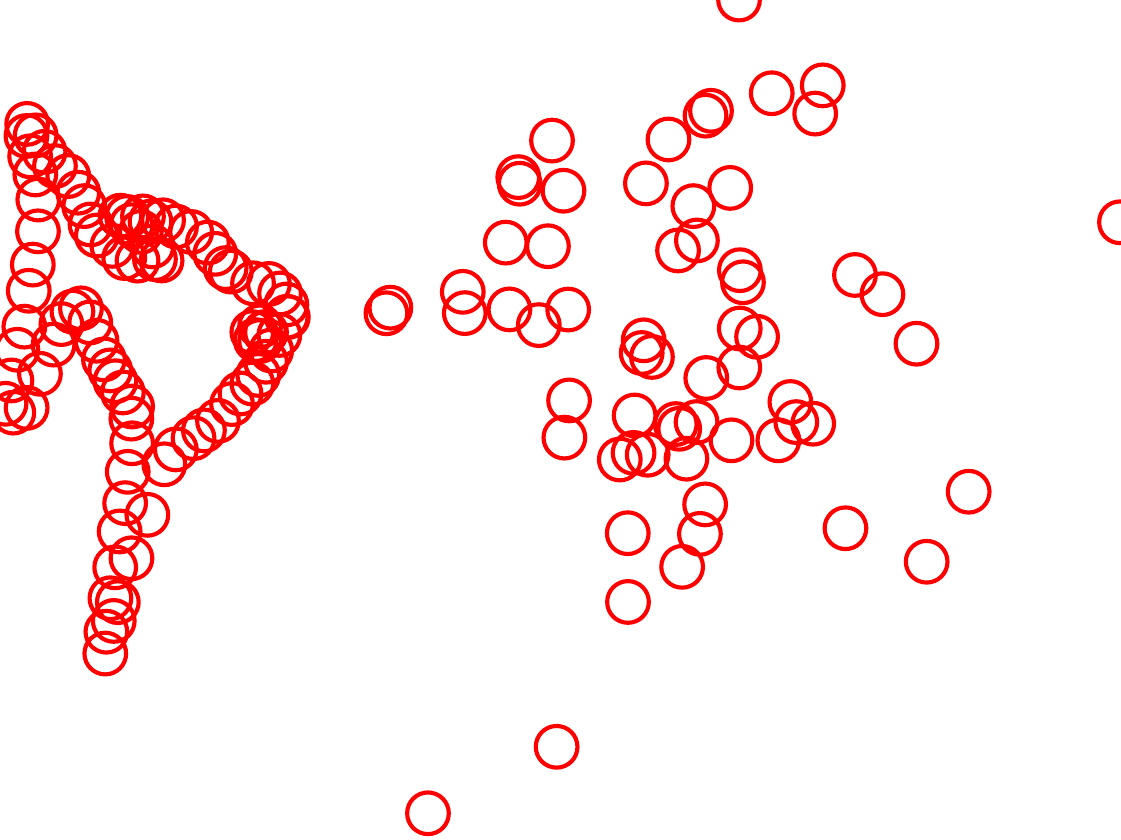}&
		\includegraphics[width=\scale\linewidth]{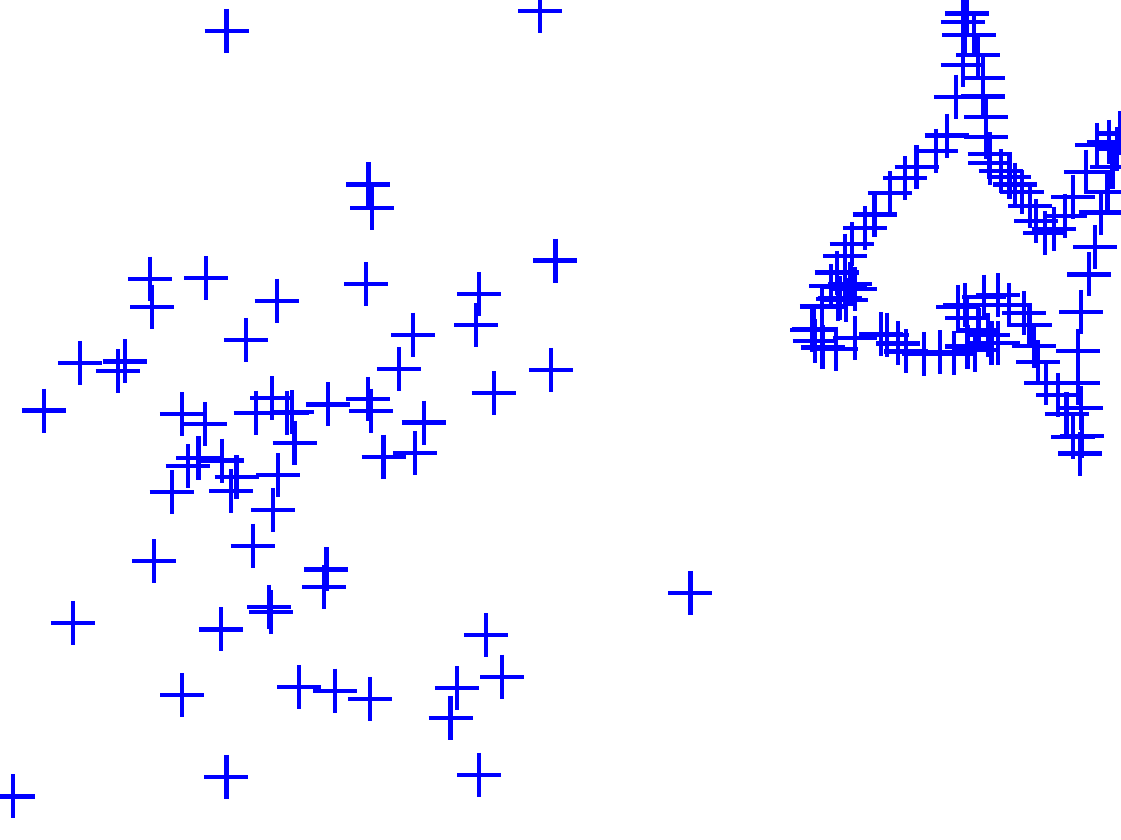}&
		\includegraphics[width=\scale\linewidth]{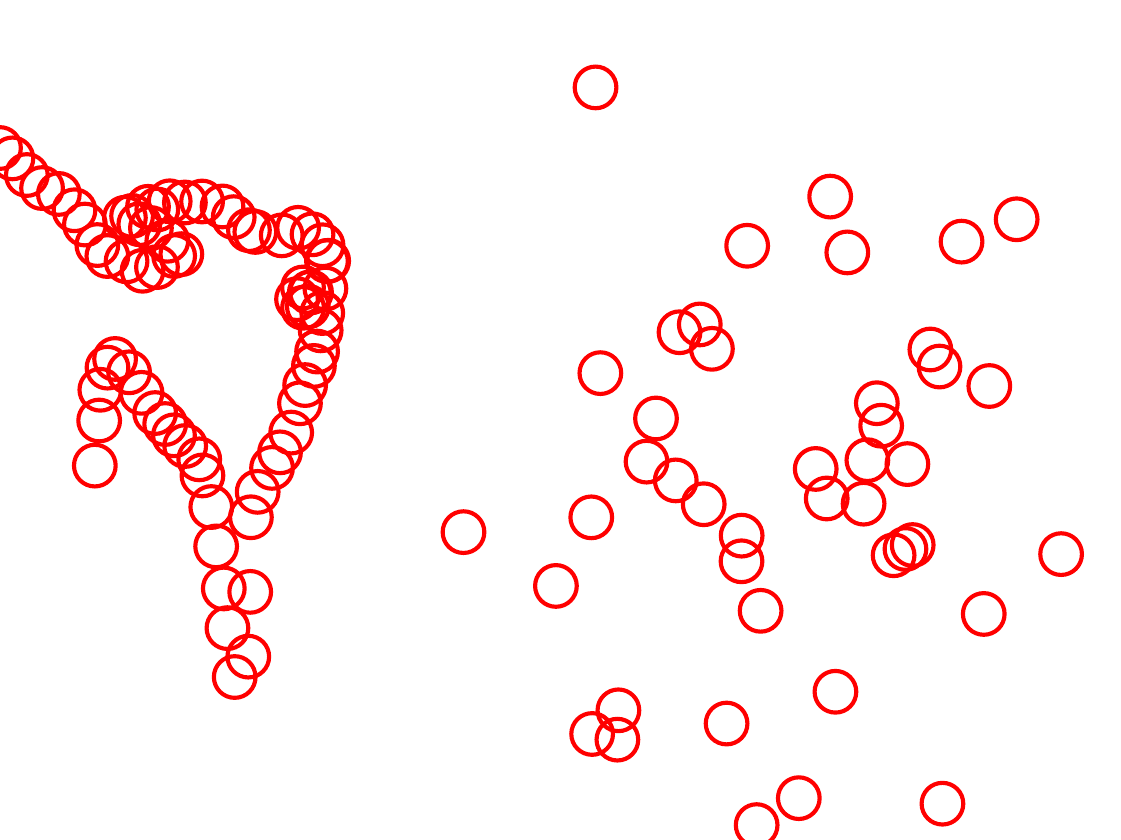}&
		\includegraphics[width=\scale\linewidth]{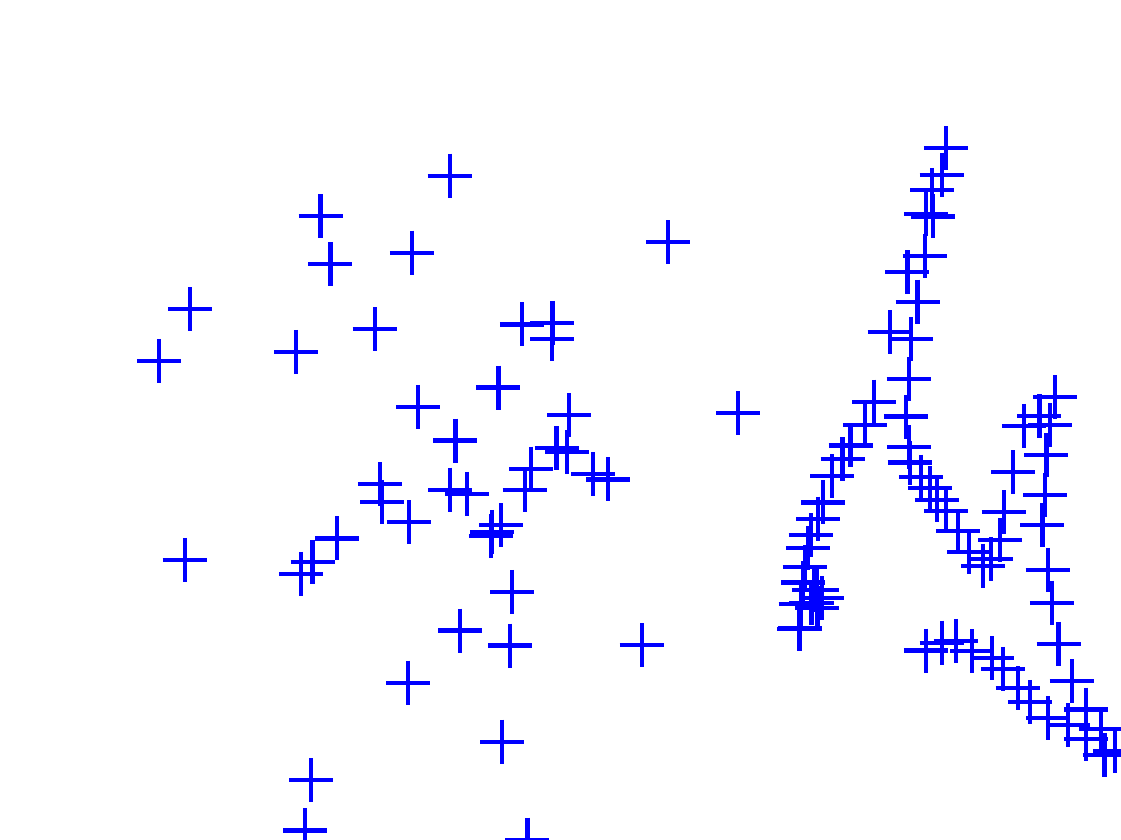} 
		\\	\hline
		\subfigure[ ]{		\includegraphics[width=\scale\linewidth]{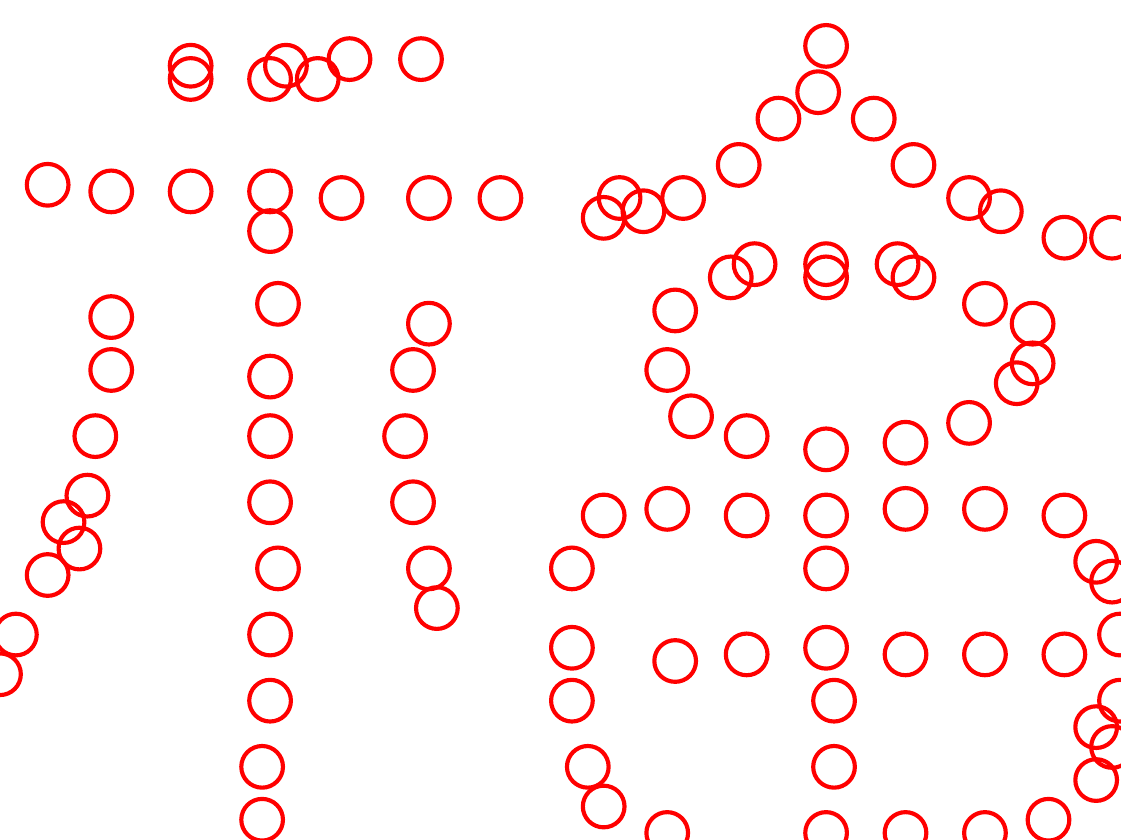}} &
		\subfigure[]{		
			\includegraphics[width=\scale\linewidth]{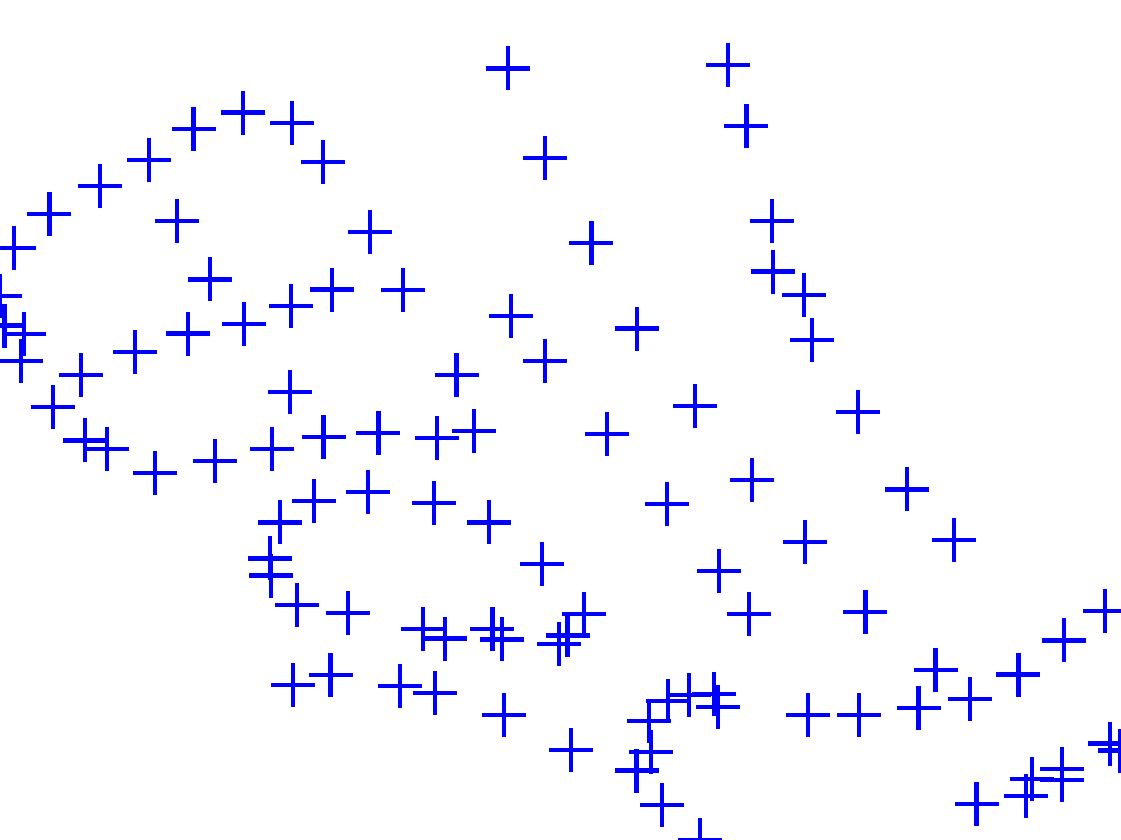}}&
		\subfigure[]{
			\includegraphics[width=\scale\linewidth]{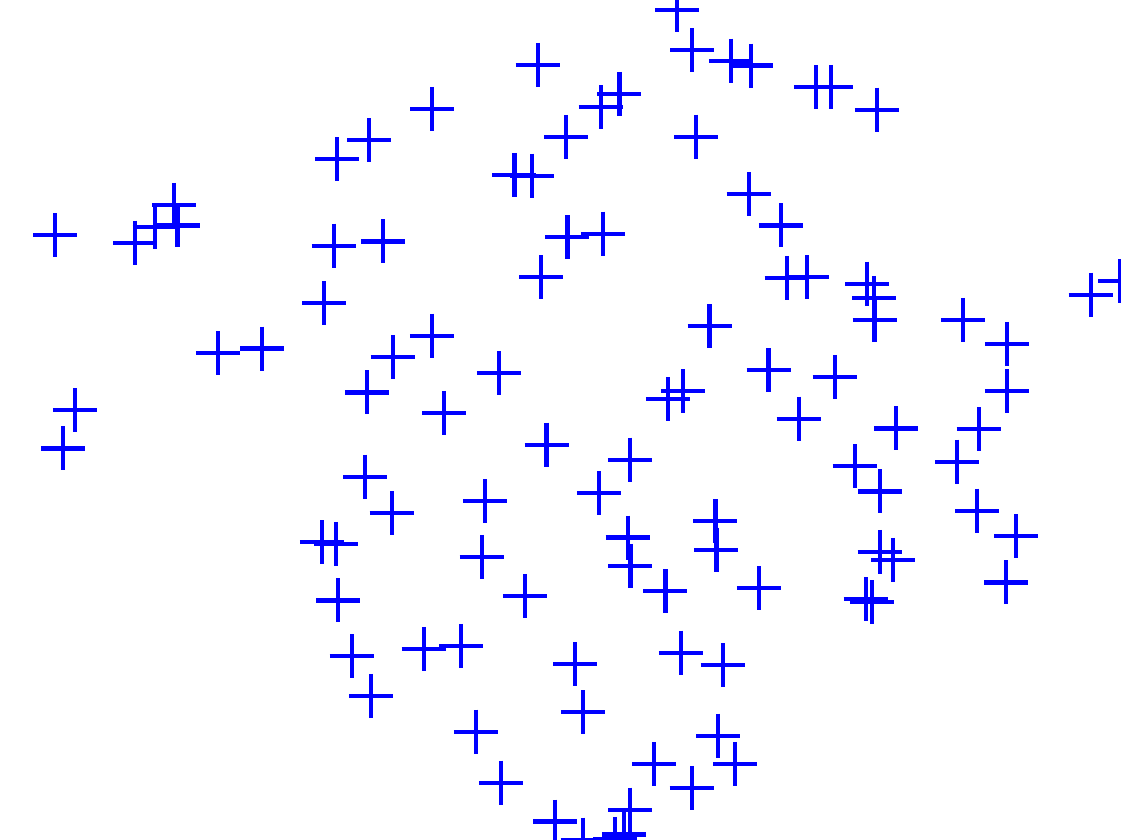}}&	\subfigure[]{		
			\includegraphics[width=\scale\linewidth]{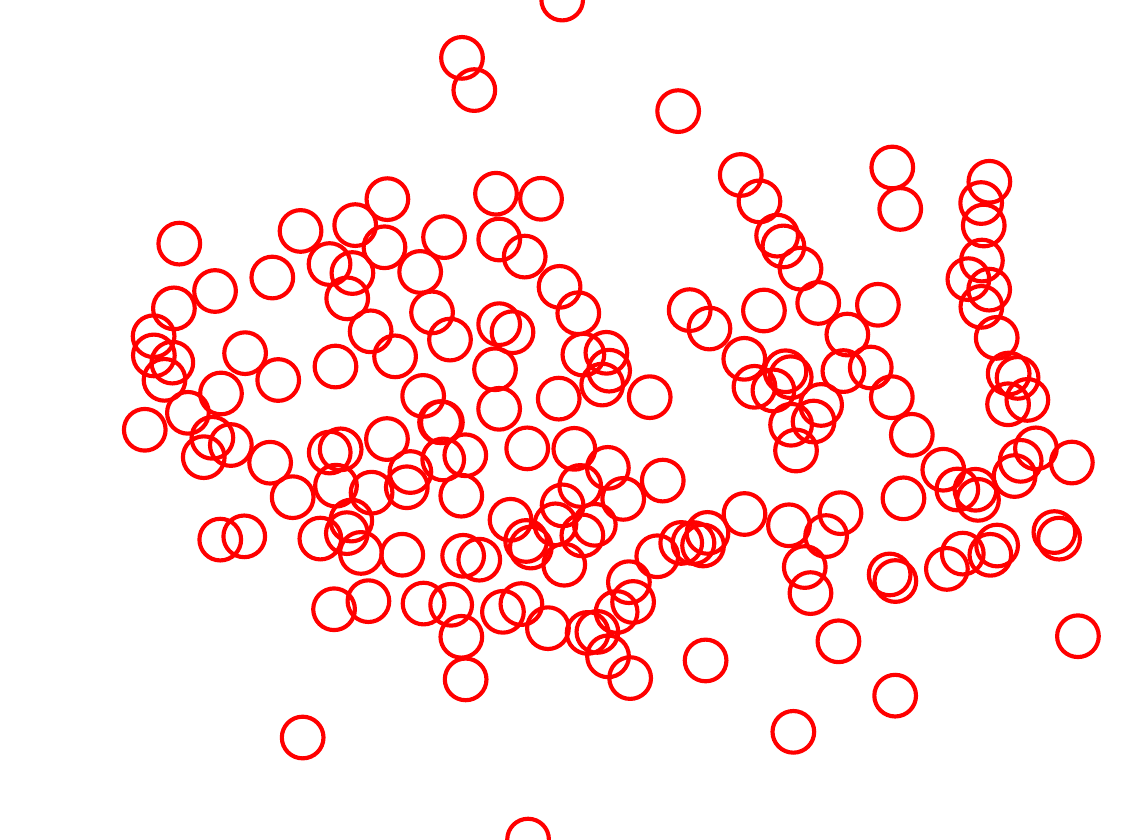}} &
		\subfigure[]{	\includegraphics[width=\scale\linewidth]{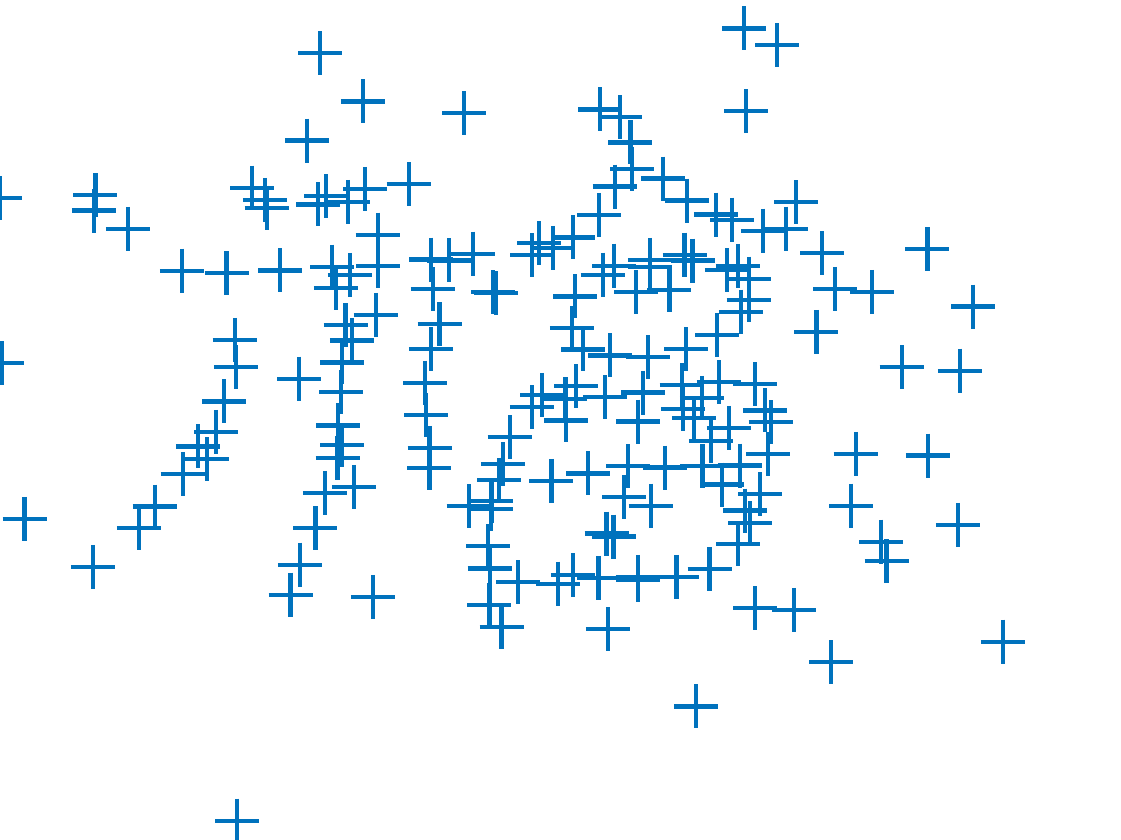}}&	  
		\subfigure[]{ \includegraphics[width=\scale\linewidth]{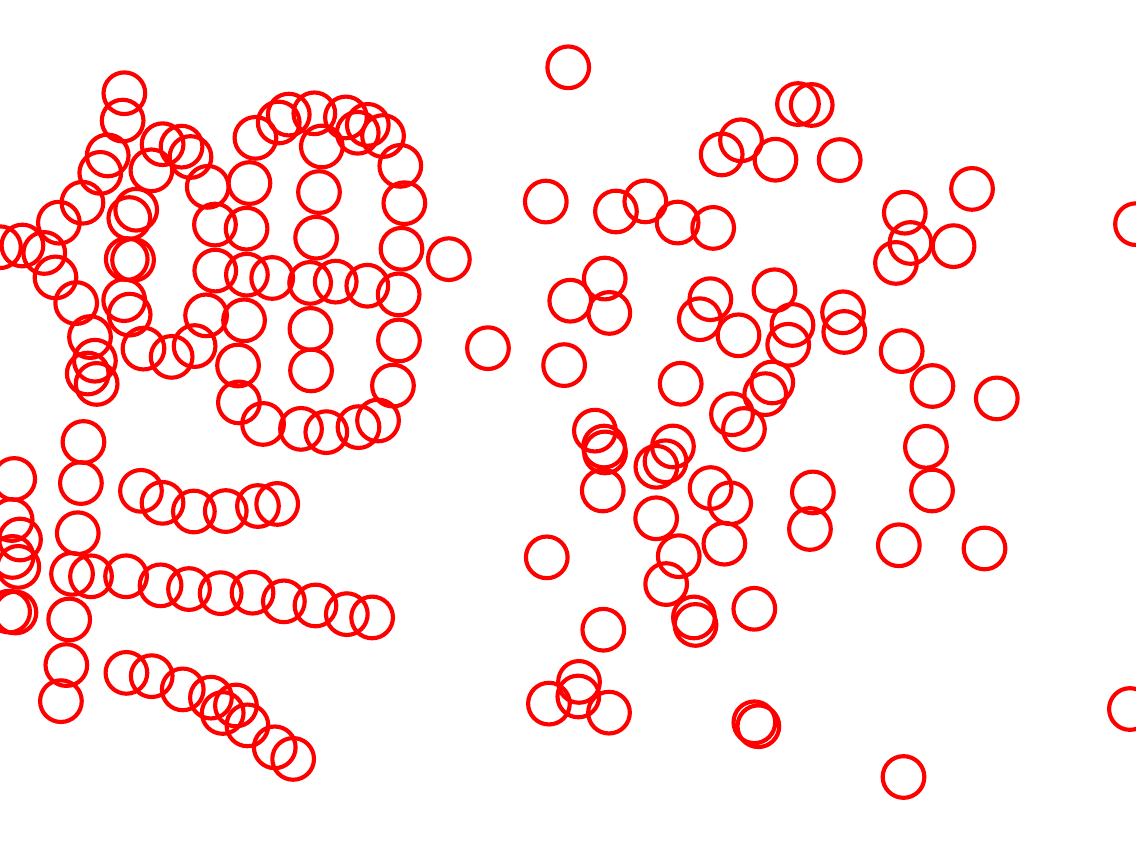}} &
		\subfigure[]{ \includegraphics[width=\scale\linewidth]{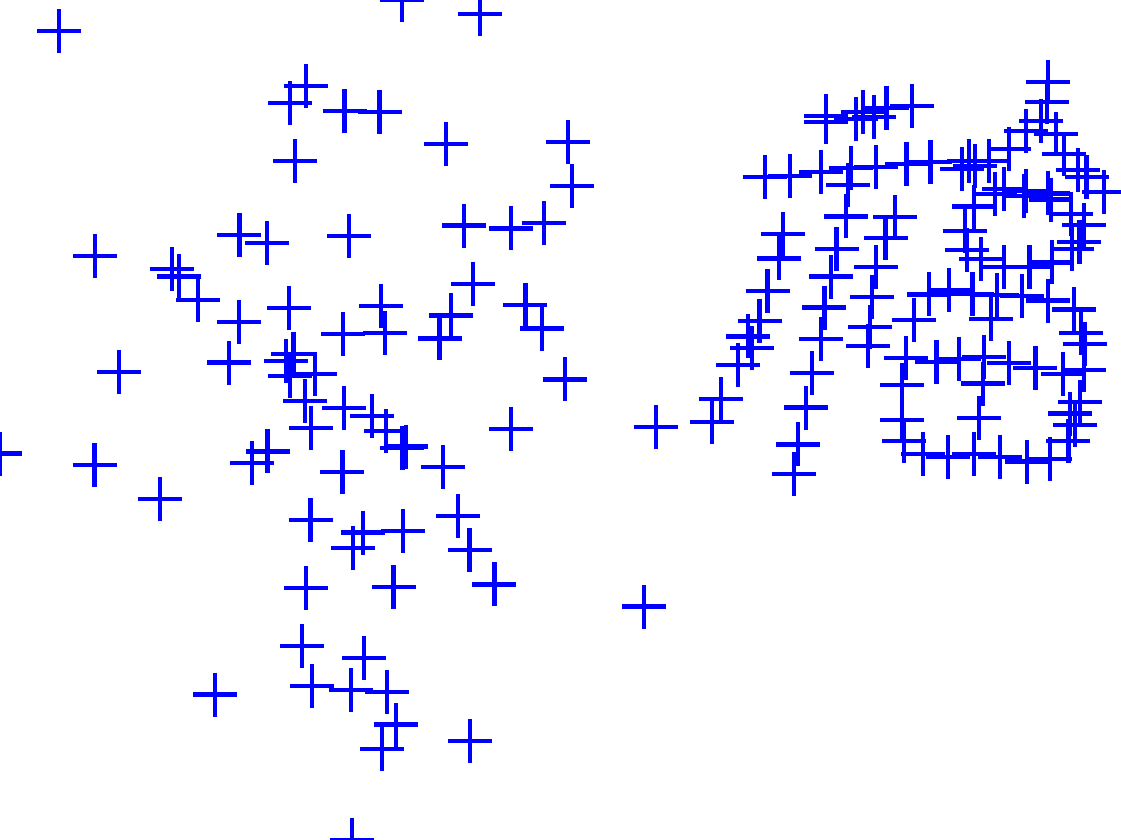}}&
		\subfigure[]{ \includegraphics[width=\scale\linewidth]{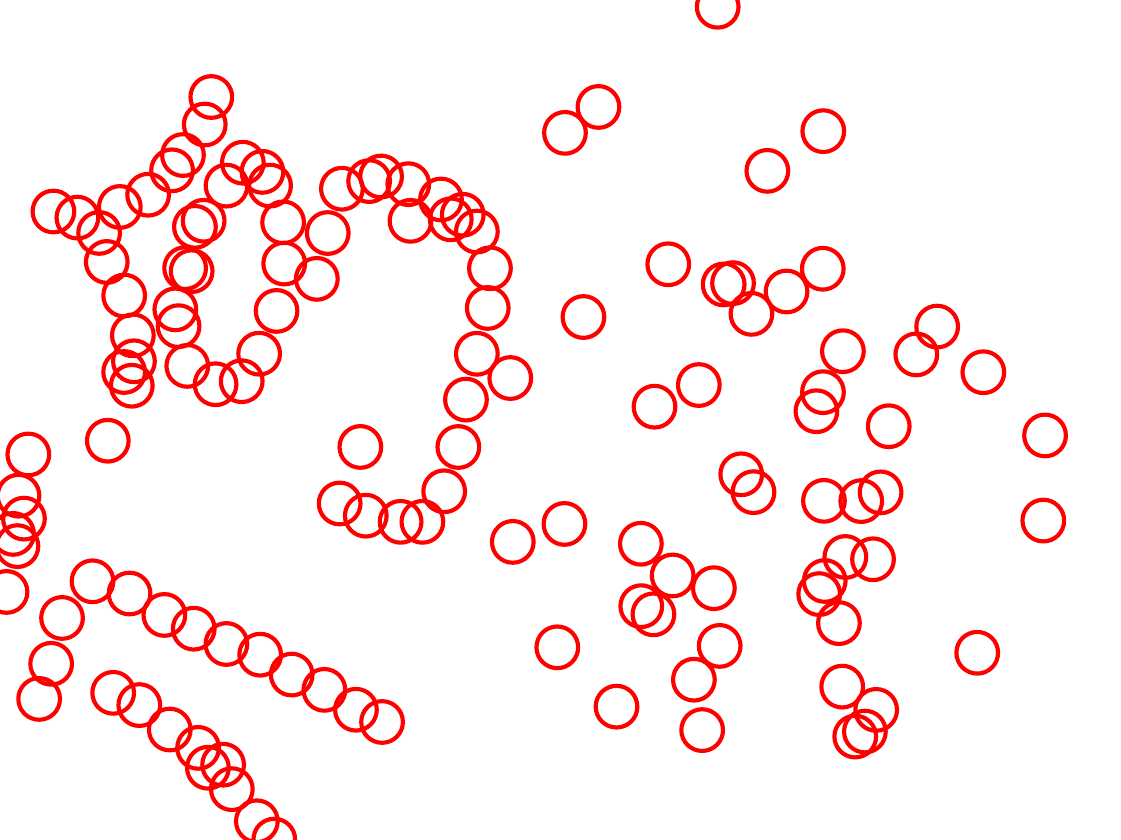}}&
		\subfigure[]		{ \includegraphics[width=\scale\linewidth]{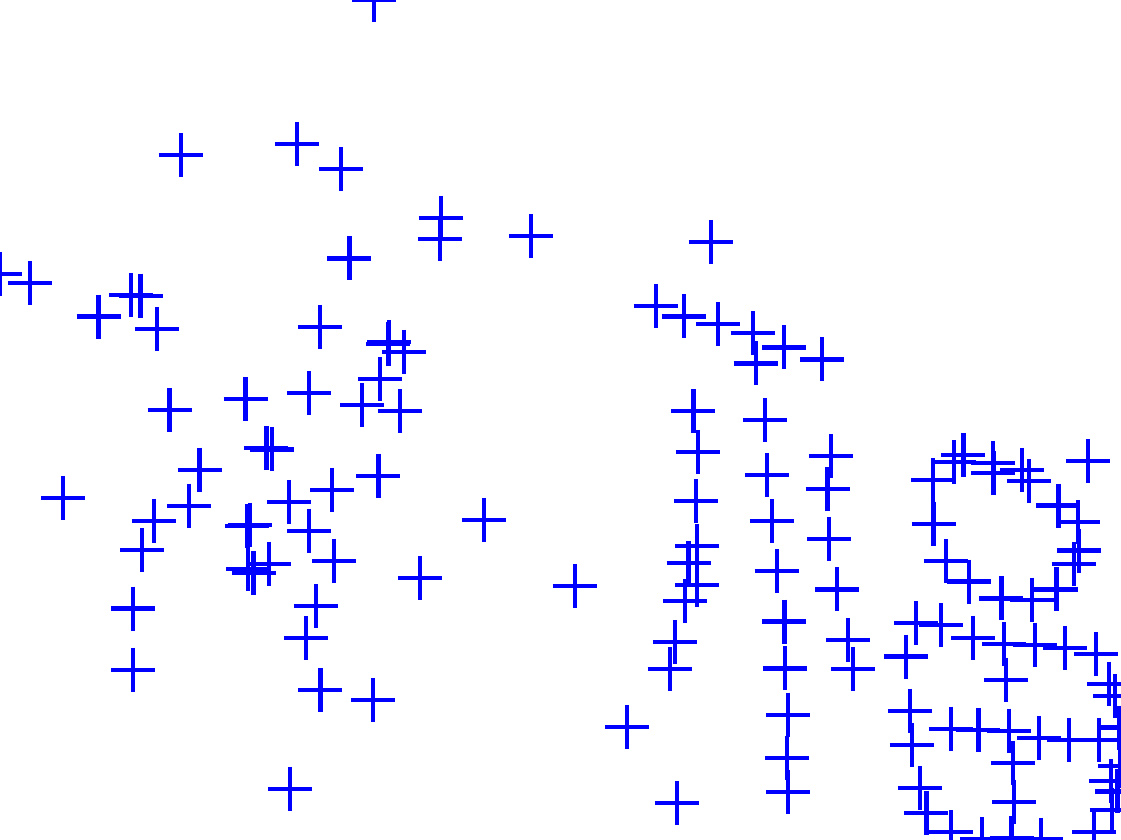}}  
	\end{tabular}
	\caption{
a) to (c): Model point sets and examples of scene point sets in the deformation and noise tests, respectively.
		(d) to (i): Examples of model and scene point sets in the mixed outliers and inliers test ((d), (e)), separate outliers and inliers test ((f), (g)), and occlusion+outlier test ((h), (i)), respectively.
		In all cases, model points are indicated by red circles, while scene points are represented by blue crosses.		
		\label{rot_2D_test_data_exa}}
	\centering
	\newcommand\scaleGd{0.21}
\begin{tabular}{@{\hspace{-0mm}}c@{\hspace{-1.6mm}}c@{\hspace{-1.6mm}}c@{\hspace{-1.6mm}}c@{\hspace{-1.6mm}} c }
		\includegraphics[width=\scaleGd\linewidth]{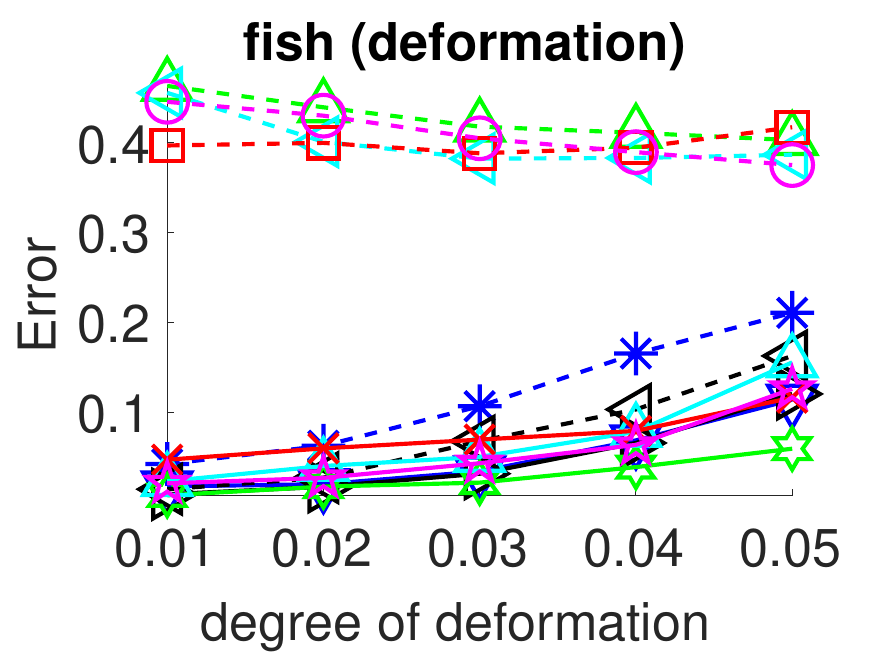}&
		\includegraphics[width=\scaleGd\linewidth]{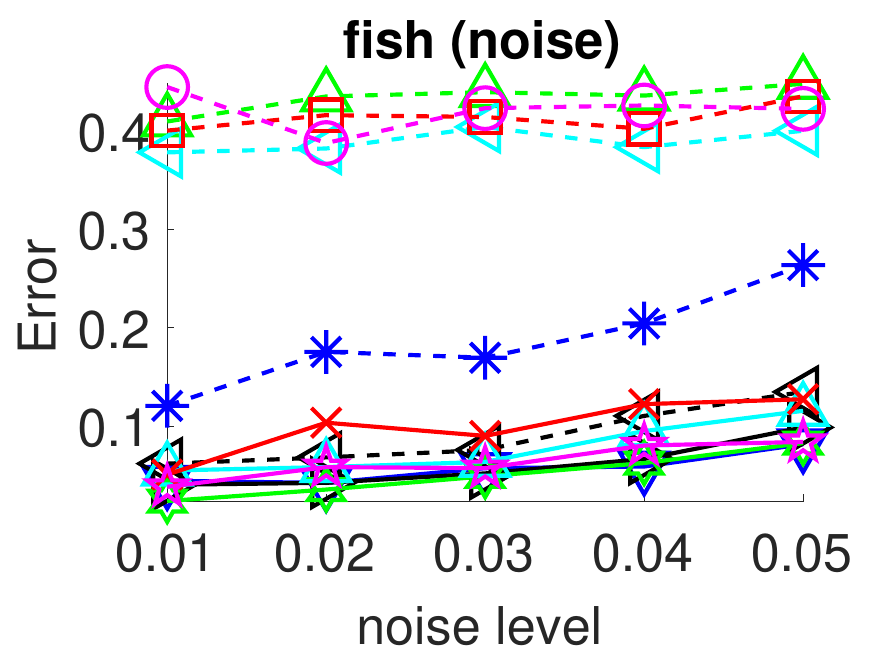}&
		\includegraphics[width=\scaleGd\linewidth]{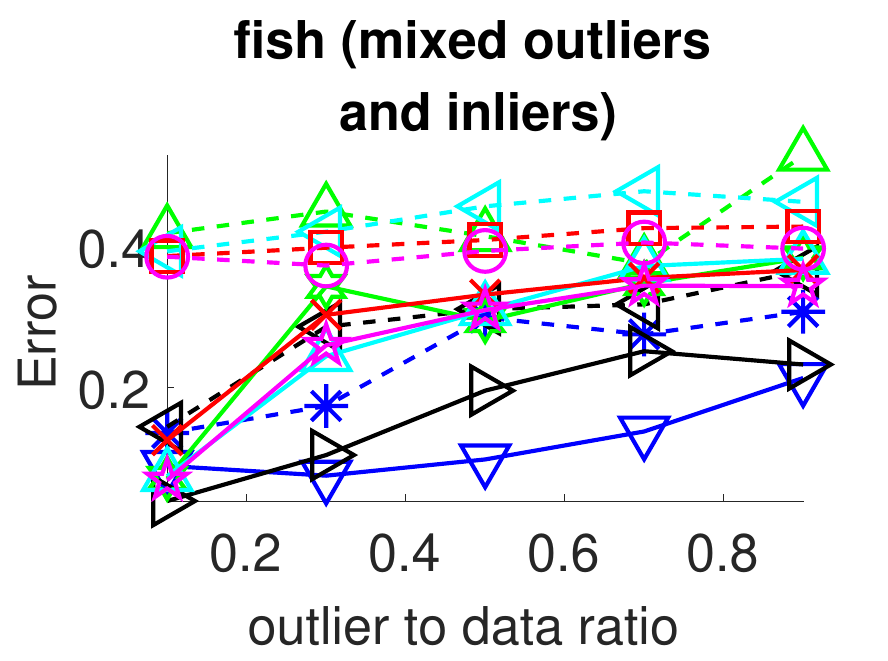}&			
		\includegraphics[width=\scaleGd\linewidth]{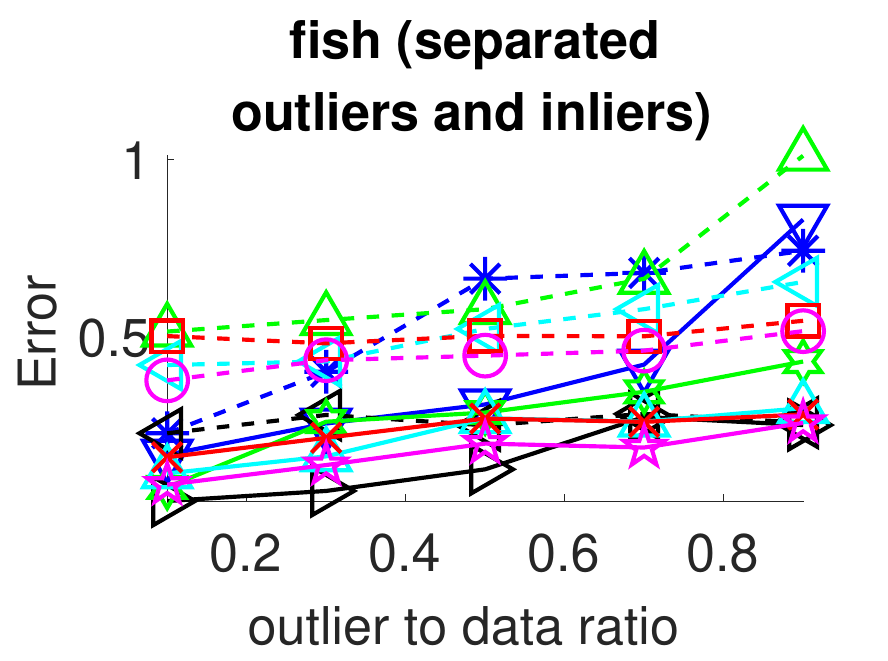}&
		\includegraphics[width=\scaleGd\linewidth]{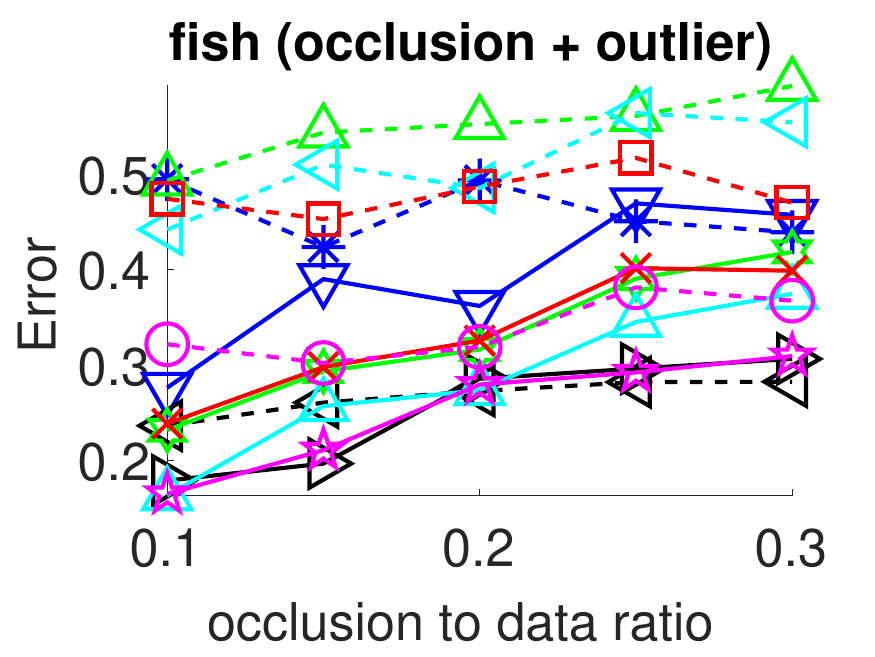}\\
		\includegraphics[width=\scaleGd\linewidth]{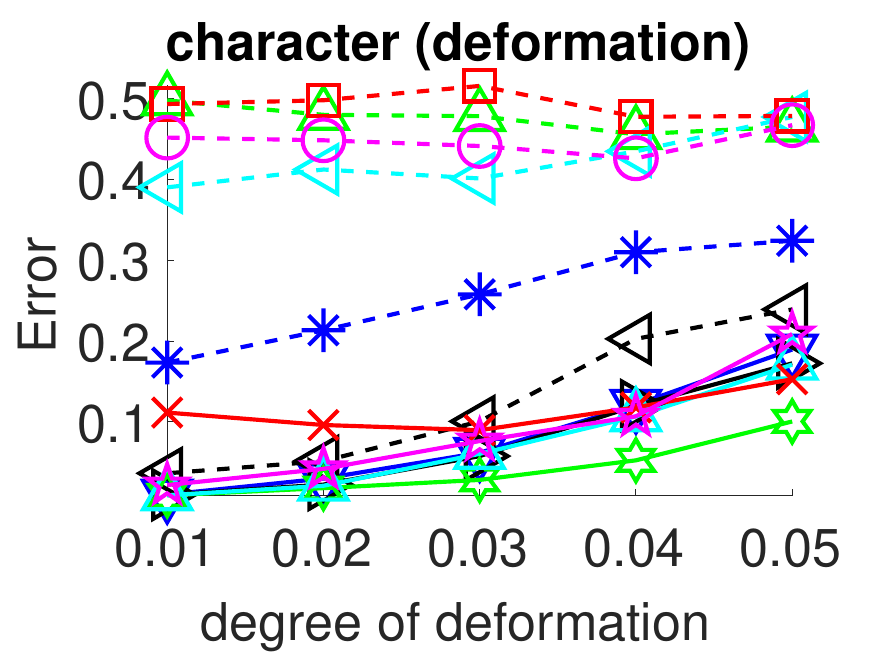}&	
		\includegraphics[width=\scaleGd\linewidth]{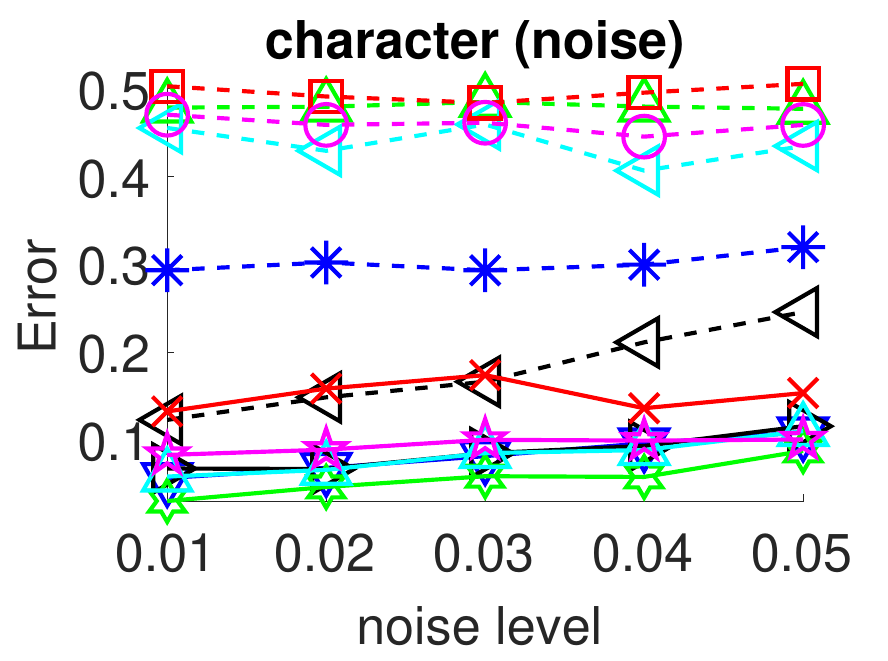}&		
		\includegraphics[width=\scaleGd\linewidth]{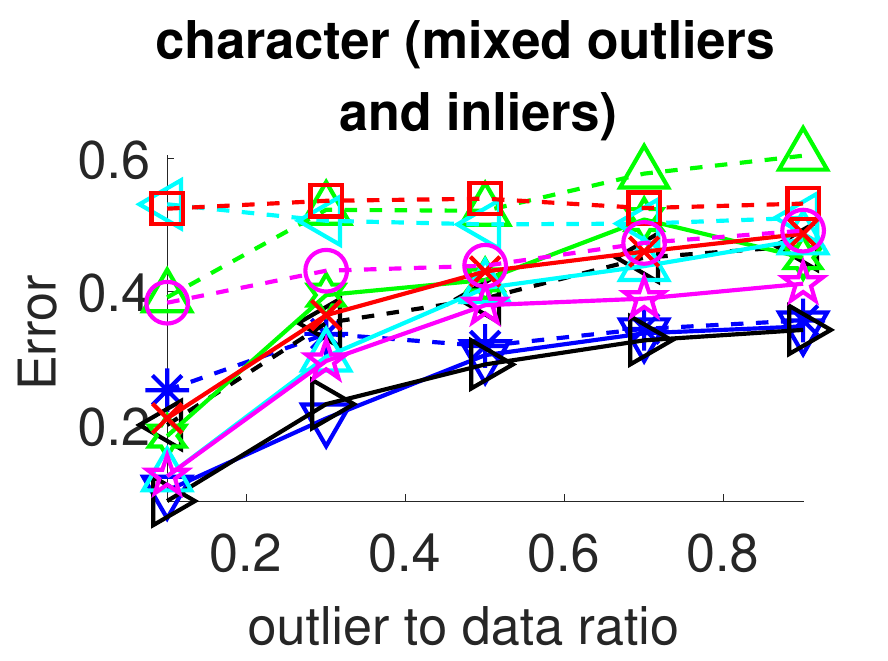}&
		\includegraphics[width=\scaleGd\linewidth]{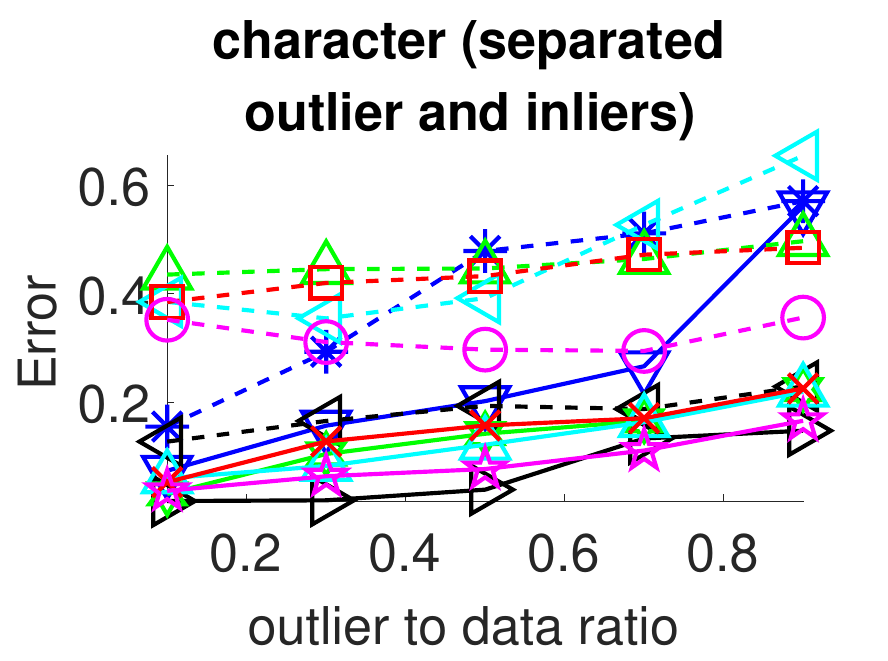}&		
		\includegraphics[width=\scaleGd\linewidth]{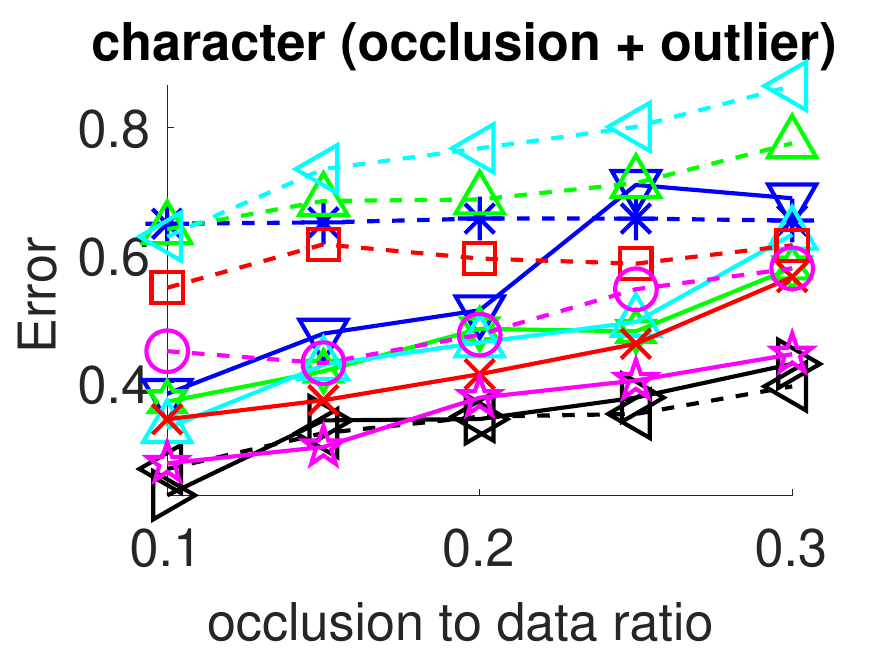}
	\end{tabular}
		\includegraphics[width=.9\linewidth]{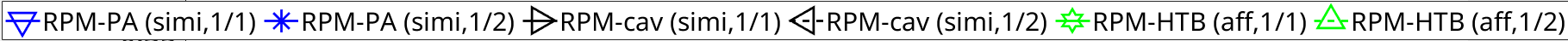}\\
		\vspace{-8pt}
	\includegraphics[width=.9\linewidth]{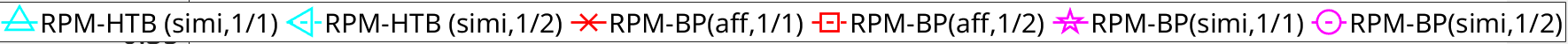}
\begin{tabular}{@{\hspace{-0mm}}c@{\hspace{-1.6mm}}c@{\hspace{-1.6mm}}c@{\hspace{-1.6mm}}c@{\hspace{-1.6mm}} c }
	\includegraphics[width=\scaleGd\linewidth]{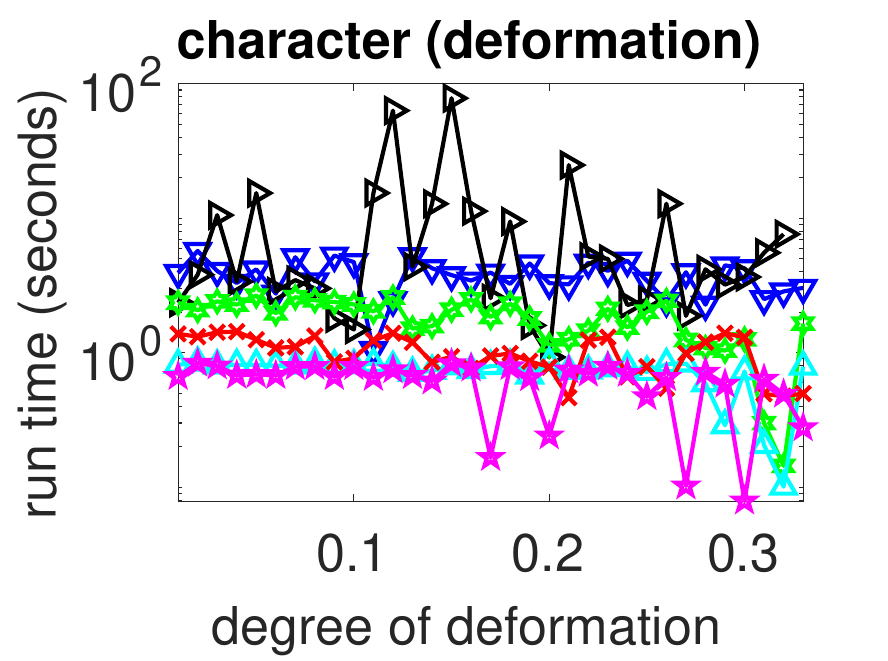}&
	\includegraphics[width=\scaleGd\linewidth]{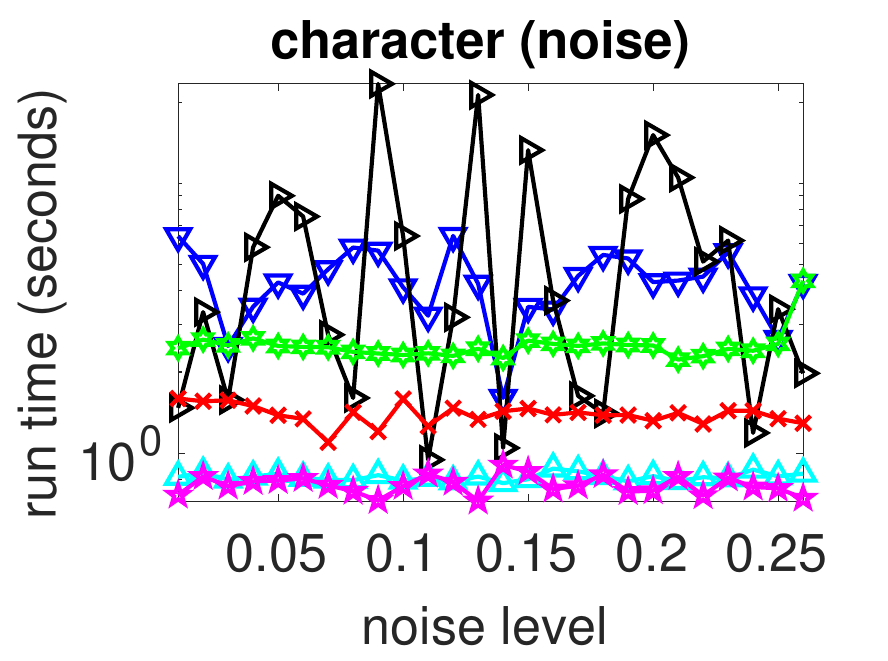}&
	\includegraphics[width=\scaleGd\linewidth]{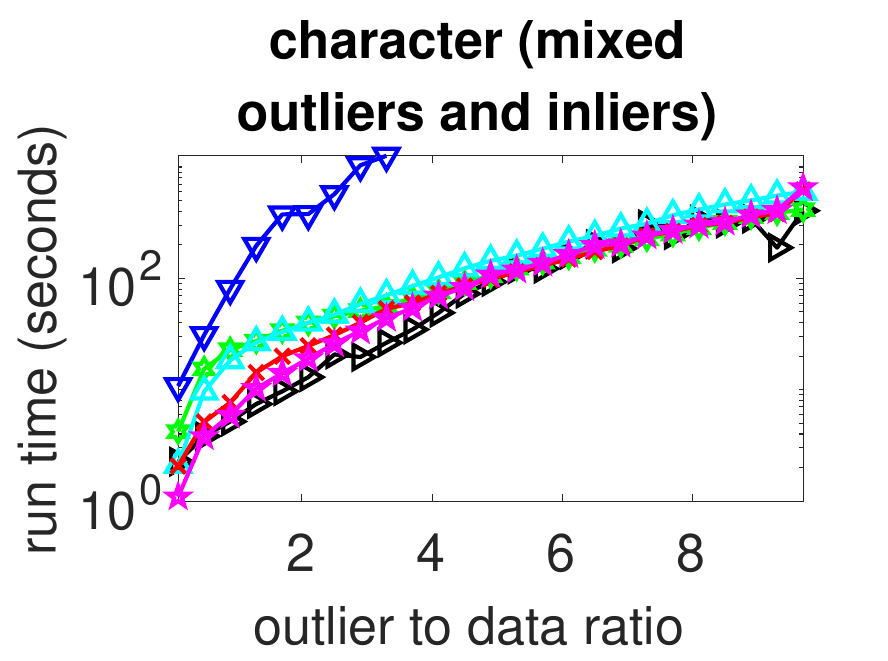}&
	\includegraphics[width=\scaleGd\linewidth]{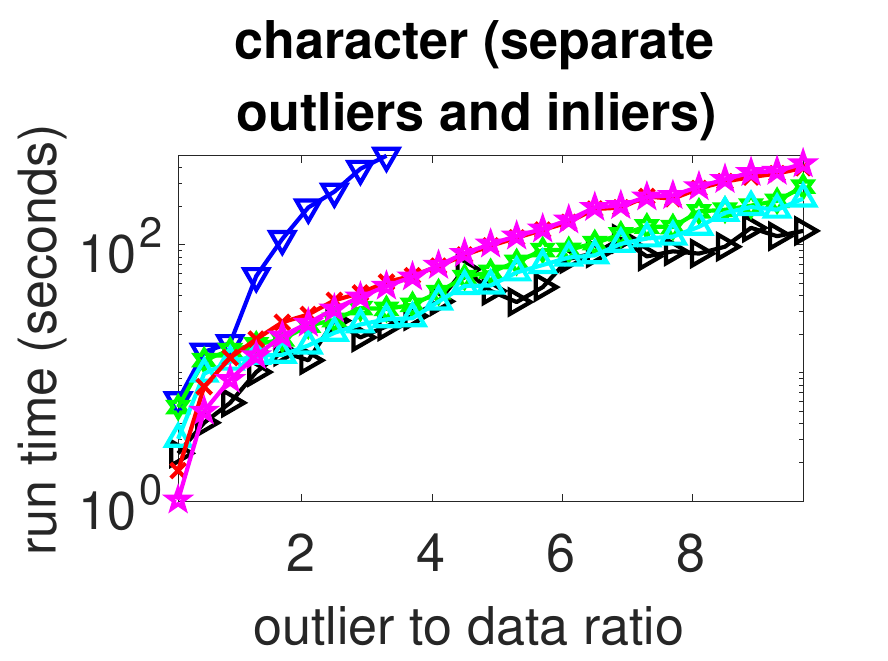}	&
	\includegraphics[width=\scaleGd\linewidth]{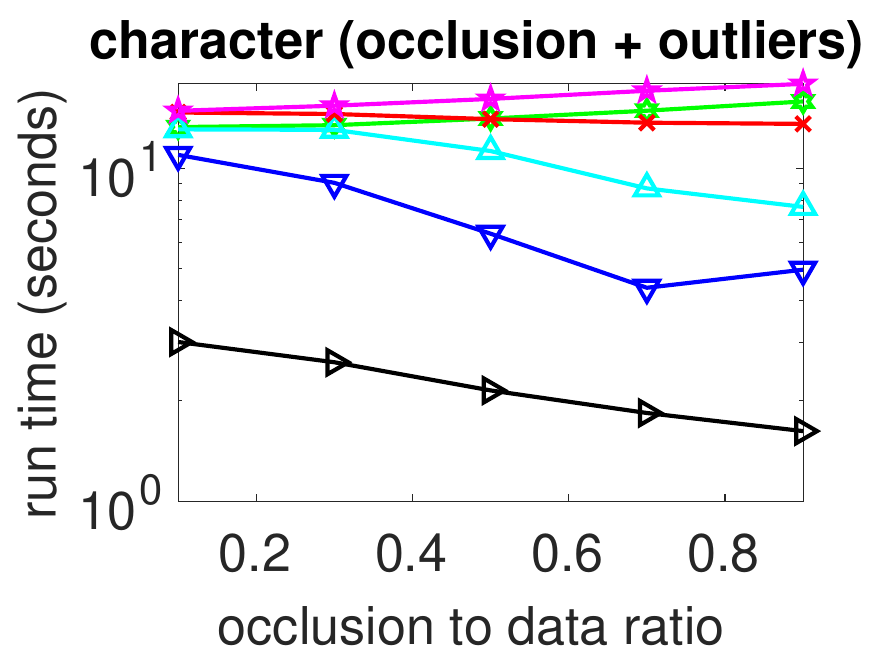}	\end{tabular}
\begin{tabular}{@{\hspace{-1.5mm}}c}	\includegraphics[width=.95\linewidth]{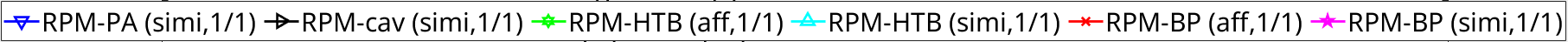}
\end{tabular}
\caption{
	Average registration errors (top 2 rows) and run times (bottom row) by 
	various methods
	 under various $n_p$ values (ranging from $1/2$ to $1/1$ of the ground truth value) over 100 random trials for 2D deformation, positional noise, mixed outliers and inliers, separate outliers and inliers, and occlusion+outlier tests.
	\label{2D_simi_sta}
}
	%
	\centering
	\includegraphics[width=.95\linewidth]{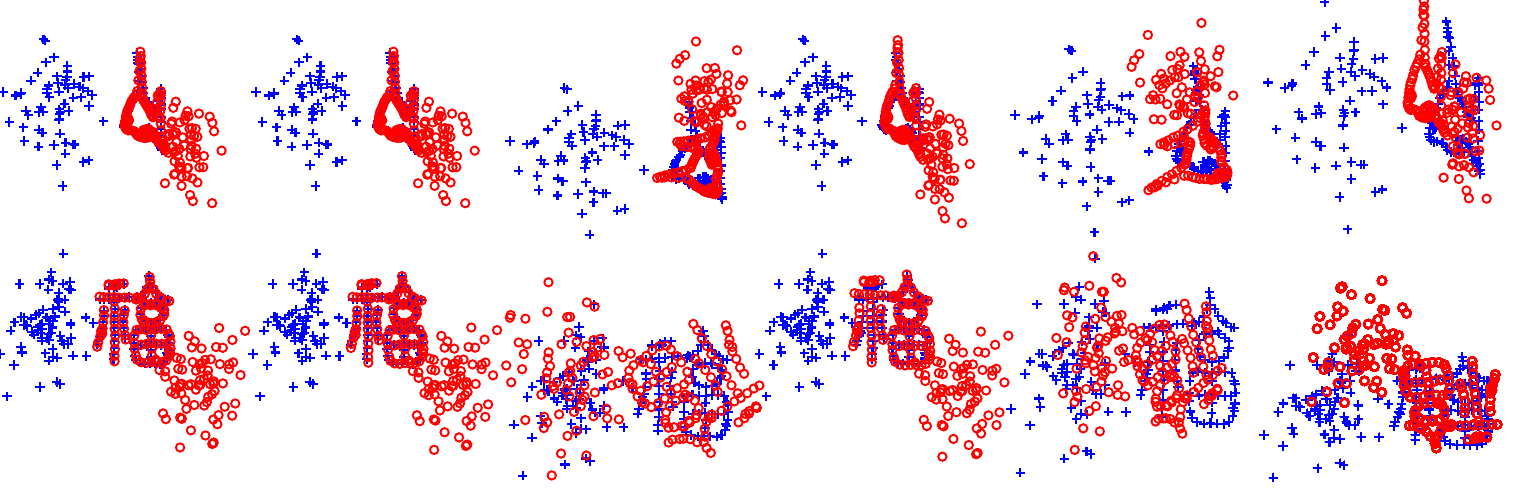}
\newcommand\sctwo{4}		
{\scriptsize
	\begin{tabular}{@{\hspace{0mm}}c @{\hspace{\sctwo mm}} c@{\hspace{\sctwo mm}}  c@{\hspace{\sctwo mm}}  c@{\hspace{\sctwo mm}}   c@{\hspace{\sctwo mm}} c}	
		(a)RPM-BP(simi) &  (b)RPM-BP(aff) & (c)RPM-HTB(simi) & (d)RPM-HTB(aff) & (e)RPM-PA & (f)RPM-CAV
	\end{tabular}
}
	\caption{
		Example of registration results from different methods in the separate outliers and inliers test, with $n_p$ chosen as ground truth for all methods.		
	\label{rot_2D_syn_match_exa}}	
\end{figure*}

\subsubsection{2D Real-World Data}
Point sets extracted from images represent a realistic and challenging data type for evaluating the robustness and generalizability of registration methods, as they inherently contain noise, irregular contours, and complex structural features.

\paragraph{Experimental Setup}
To generate these point sets, we utilized the standard Canny edge detector on selected images sourced from the Caltech-256 \cite{caltech_database} and VOC2007 \cite{pascal-voc-2007} datasets. A critical test of rotational invariance was implemented by subjecting the model point set to a deliberate $180^\circ$ rotation prior to registration against the scene point set. For consistency, a Similarity transformation model was assumed and used for all tested methods. The visual setup for this experiment is detailed in Fig. \ref{rot_2D_canny_model}, with final alignment results shown in Fig. \ref{rot_2D_canny}.

\paragraph{Results}
The registration outcomes presented in Fig. \ref{rot_2D_canny} demonstrate that our method consistently achieves a  tighter  alignment to the underlying target objects than competing algorithms. This advantage is notably pronounced in the bike image test.

\begin{figure*}[!t]
	\setlength\arrayrulewidth{1pt}
	
	\centering
	\newcommand\scale{0.6}
	\includegraphics[width=\scale\linewidth]{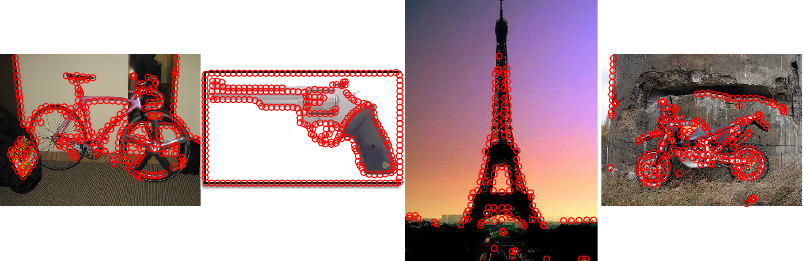}
	
	\caption{
		Model  images with model point sets superimposed.		
		\label{rot_2D_canny_model}}
	\setlength\arrayrulewidth{1pt}	
	\centering
	\begin{tabular}{c@{\hspace{-0mm}}c} 
		\includegraphics[width=0.45\linewidth]{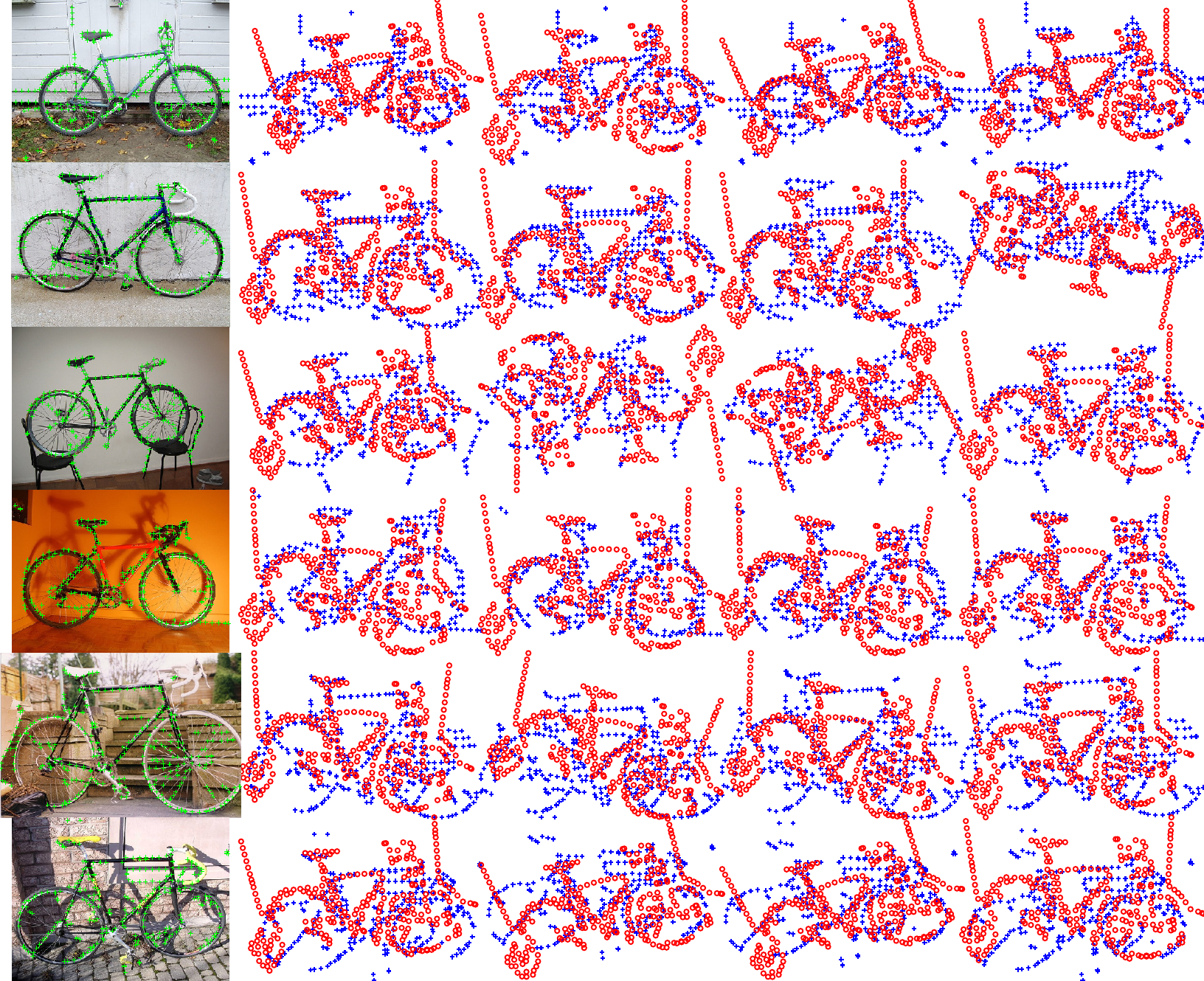} &
\includegraphics[width=.52\linewidth]{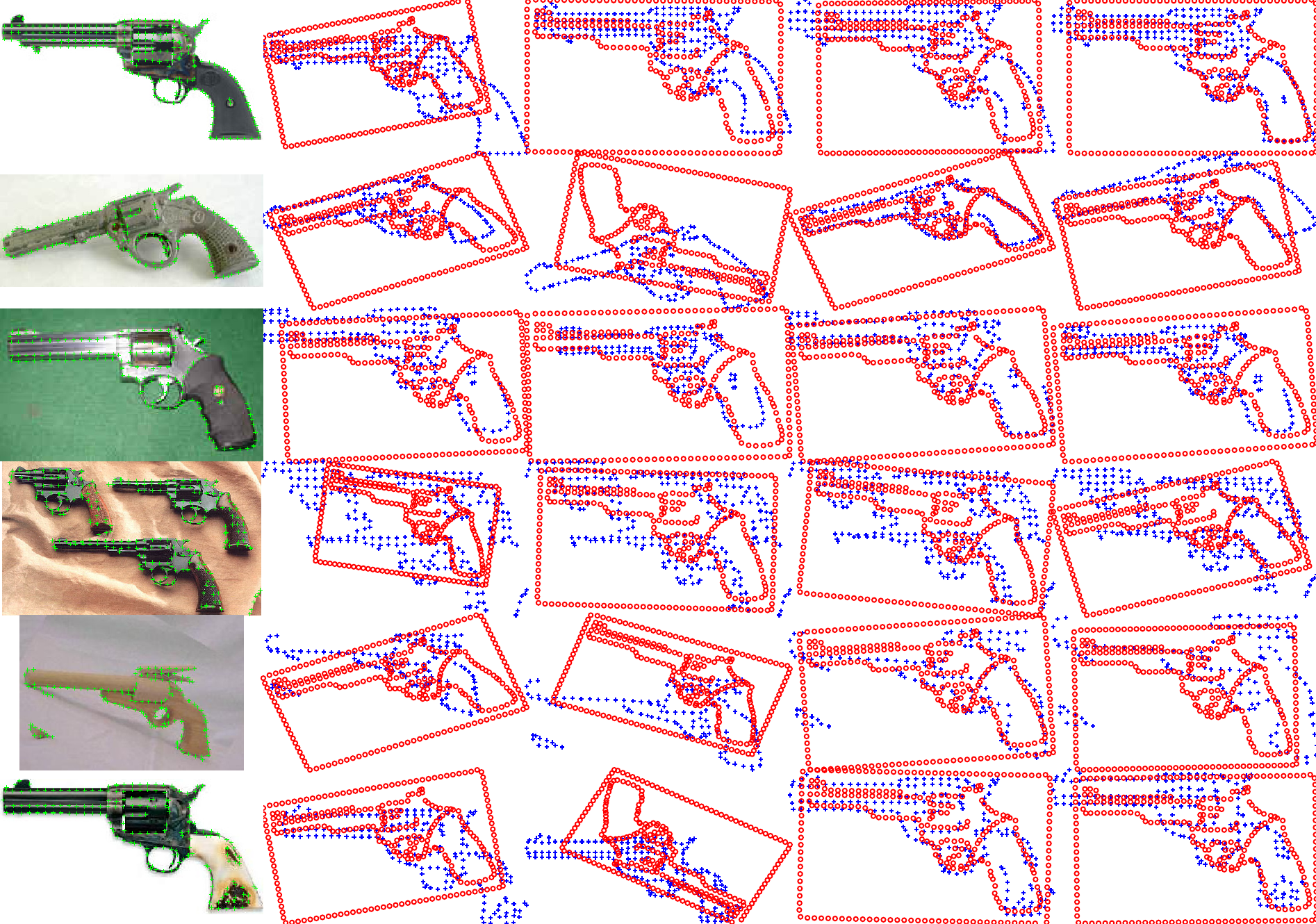} 		\\	
		\includegraphics[width=.43\linewidth]{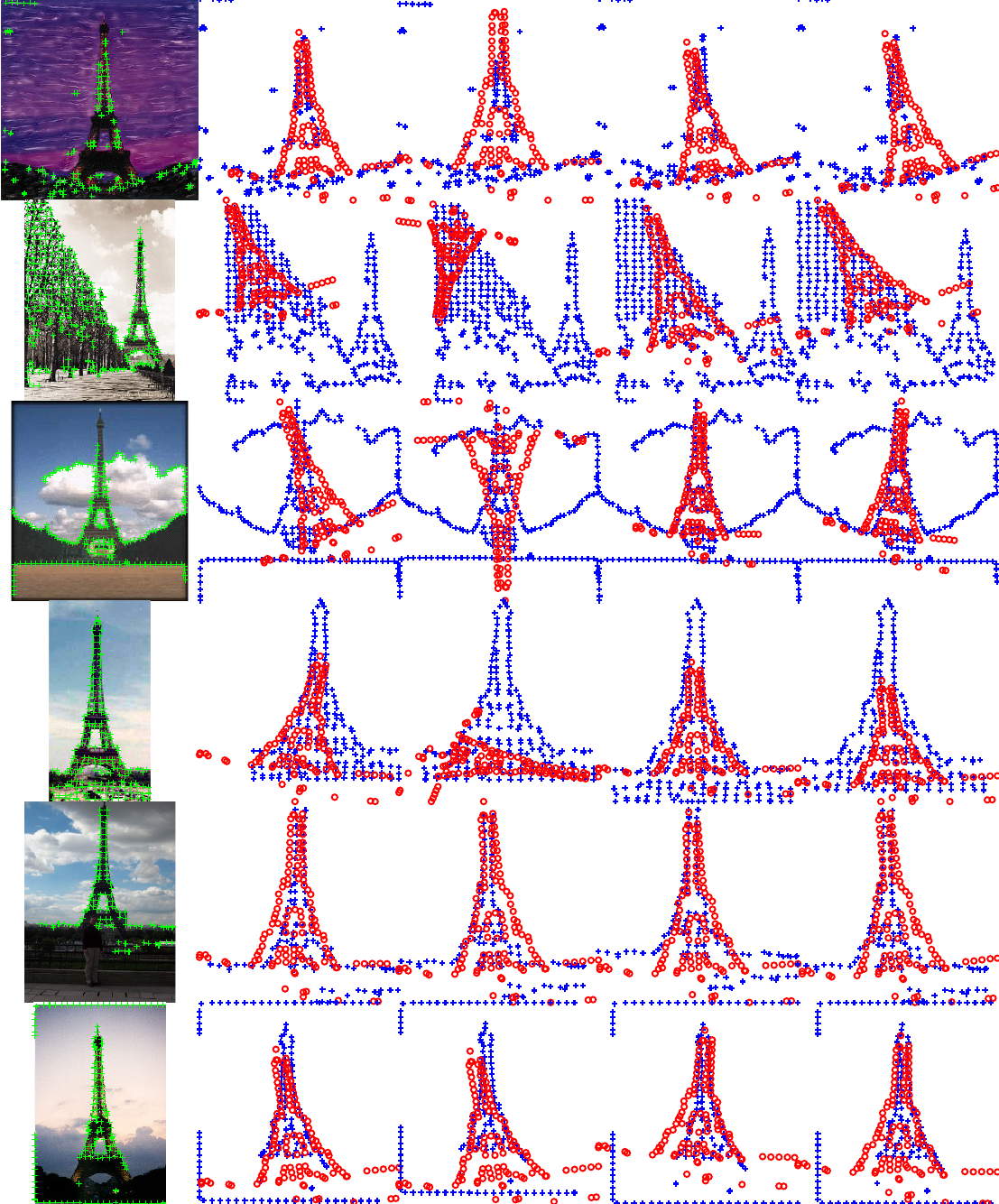} &
		\includegraphics[width=.56\linewidth]{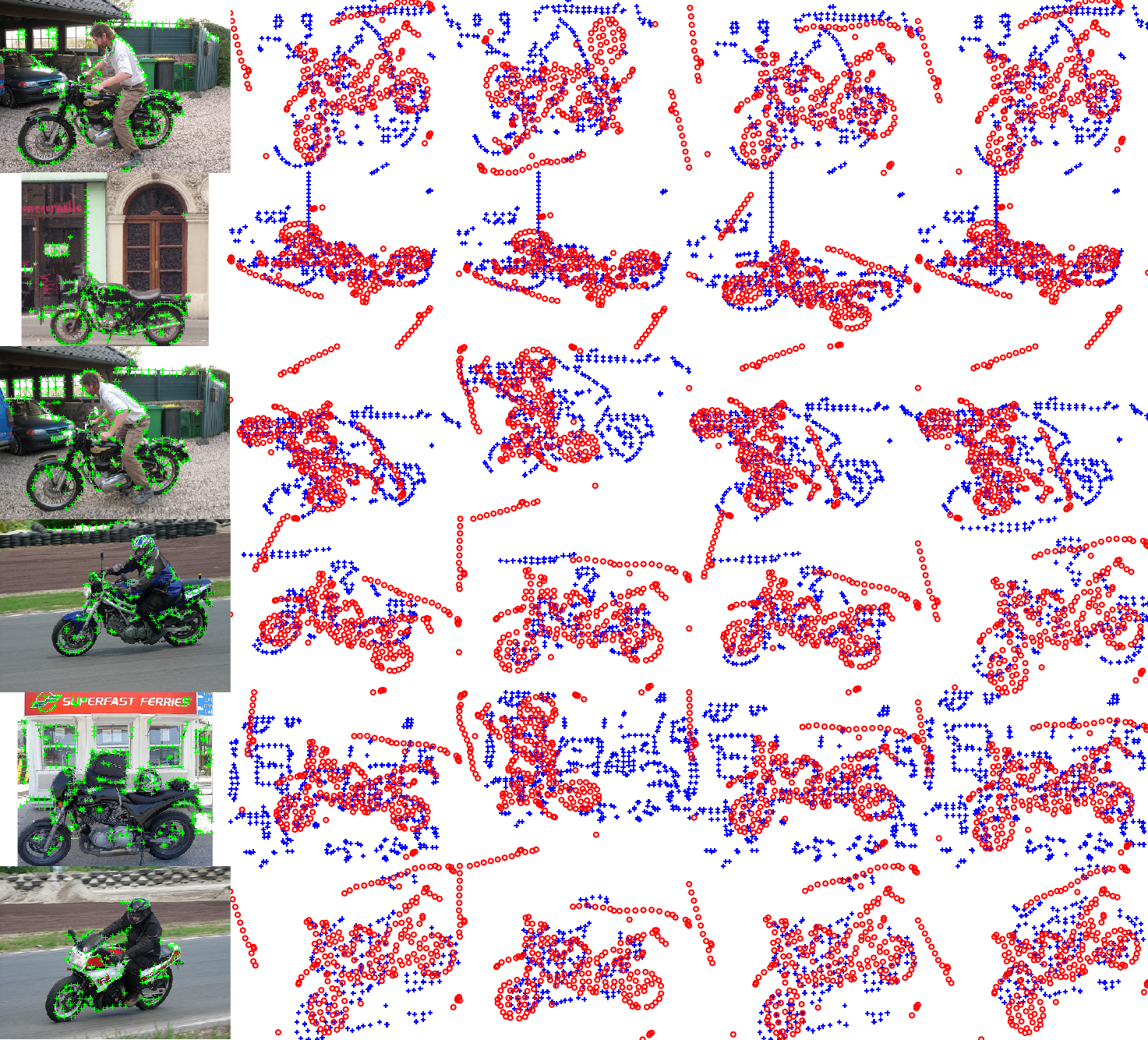} 	
	\end{tabular}
	\caption{
		For each category: 
		scene images with scene point sets superimposed, registration results by RPM-BP,  RPM-HTB \cite{LIAN2023126482}, RPM-PA \cite{lian2021polyhedral} and RPM-CAV \cite{RPM_model_occlude_PR} 
		using similarity  transformation.
		The $n_p$ value for each method is chosen as $0.9$ the minimum of the cardinalities of two point sets.
		\label{rot_2D_canny}}
\end{figure*}

\subsection{3D Registration}
\subsubsection{3D Synthetic Data \label{sec:3D_synth_test}}
Similar to the 2D experiments, we conducted five types of tests to evaluate each method's resilience to various disturbances: \textit{i)} Deformation, \textit{ii)} Noise, \textit{iii)} Mixed outliers, \textit{iv)} Separate outliers, and \textit{v)} Occlusion + Outlier tests. Fig. \ref{rot_3D_test_data_exa} provides a visual illustration of these tests, while Fig. \ref{rot_3D_syn_match_exa} shows examples of registration results from different methods.

The registration errors for various methods are shown in the top two rows of Fig. \ref{3D_rigid_sta}. The results indicate that RPM-BP and RPM-HTB, when $n_p$ is set to the ground truth value, are robust against deformation, noise, and separate outliers. However, their robustness diminishes when outliers are mixed with inliers. In contrast, GORE and TEASER++ struggle with deformation, noise, and mixed outliers, likely because they rely on features. As in the 2D case, the performance of RPM-BP improves significantly when $n_p$ closely matches the ground truth.

The bottom row of Fig. \ref{3D_rigid_sta} shows the average run times. TEASER++ is the fastest, followed by Go-ICP, and then RPM-BP and RPM-HTB. GORE is consistently slower in most cases, while FRS experiences a rapid decrease in efficiency when the registration problem becomes more challenging, such as with an increasing number of outliers.

In terms of scalability with problem size, Go-ICP and GORE perform best, followed by TEASER++, RPM-BP, and RPM-HTB. FRS, on the other hand, shows the poorest scalability.

\begin{figure*} [t]
			\setlength{\abovecaptionskip}{-1pt plus 0pt minus 2pt} 
	\centering
	\newcommand\scale{0.104}		
	\begin{tabular}{@{\hspace{-3mm}}c@{}|@{}c@{}|@{}c@{}|@{}c@{}|@{}c@{}|@{} c@{}|@{} c@{} |c@{}|c }						
		\includegraphics[width=\scale\linewidth]{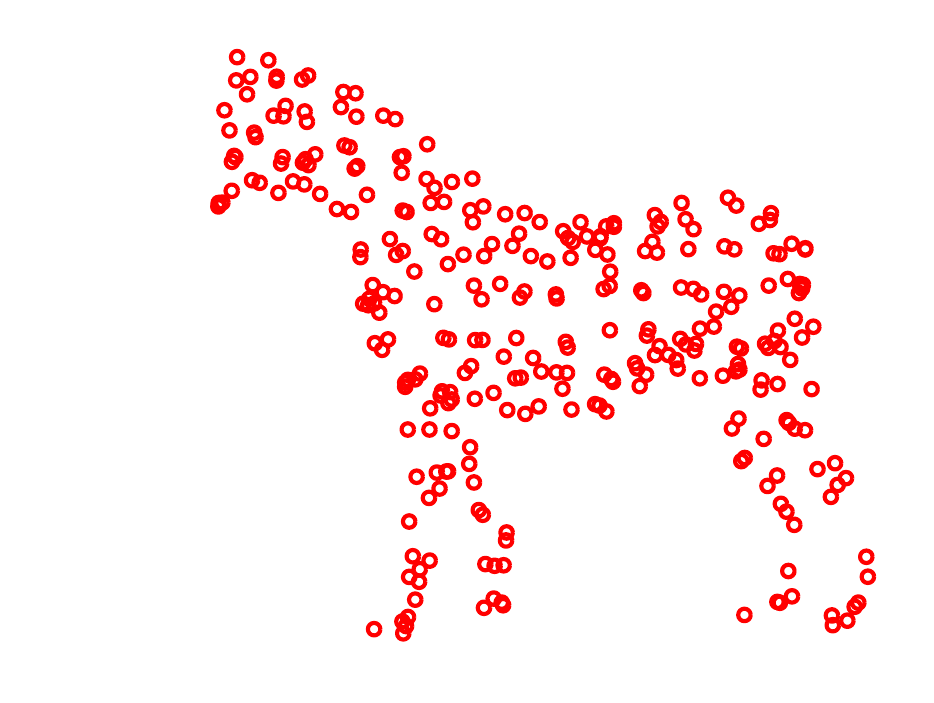}&
		\includegraphics[width=\scale\linewidth]{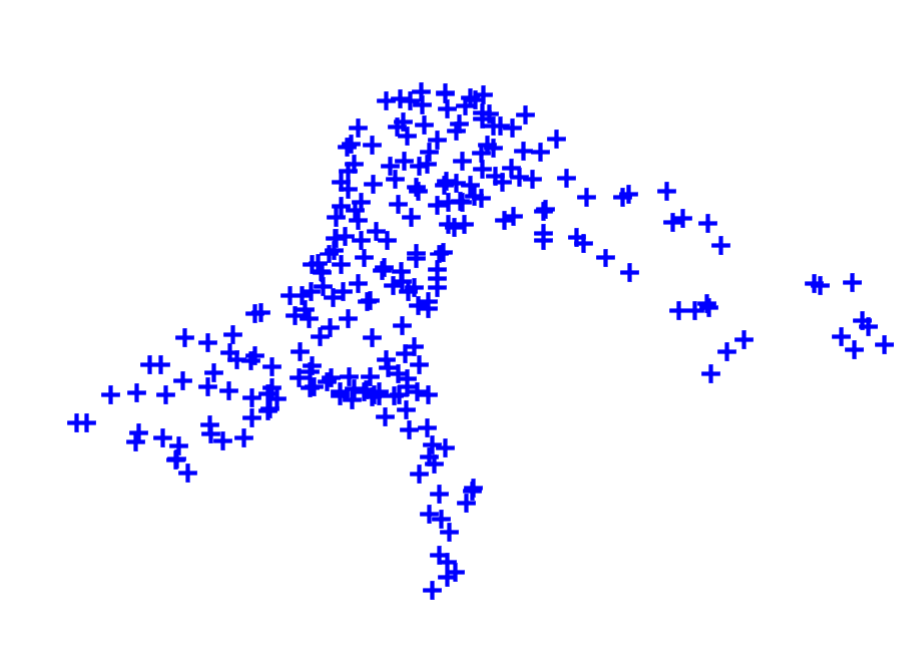}&
		\includegraphics[width=\scale\linewidth]{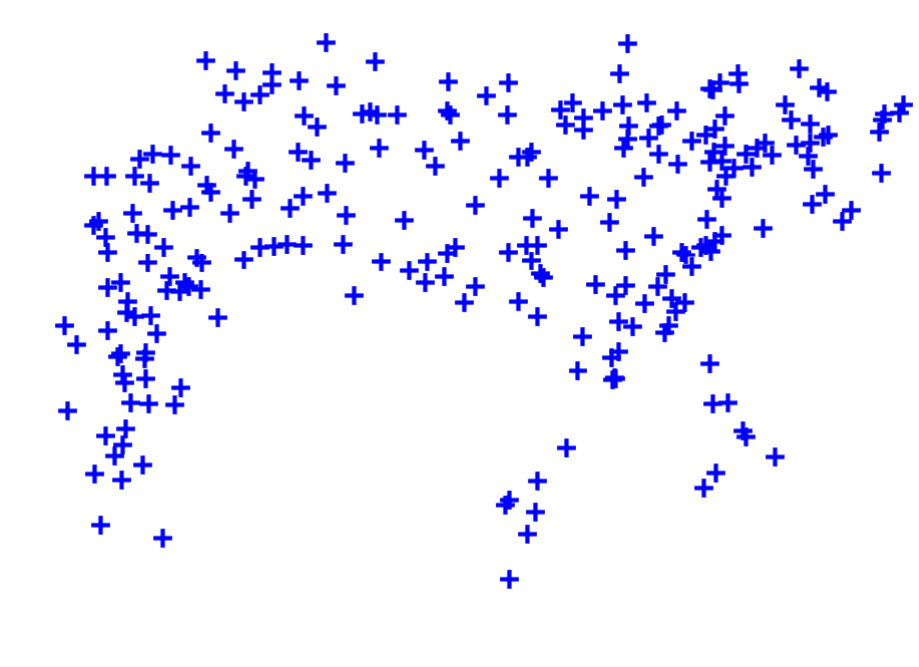}&
		\includegraphics[width=\scale\linewidth]{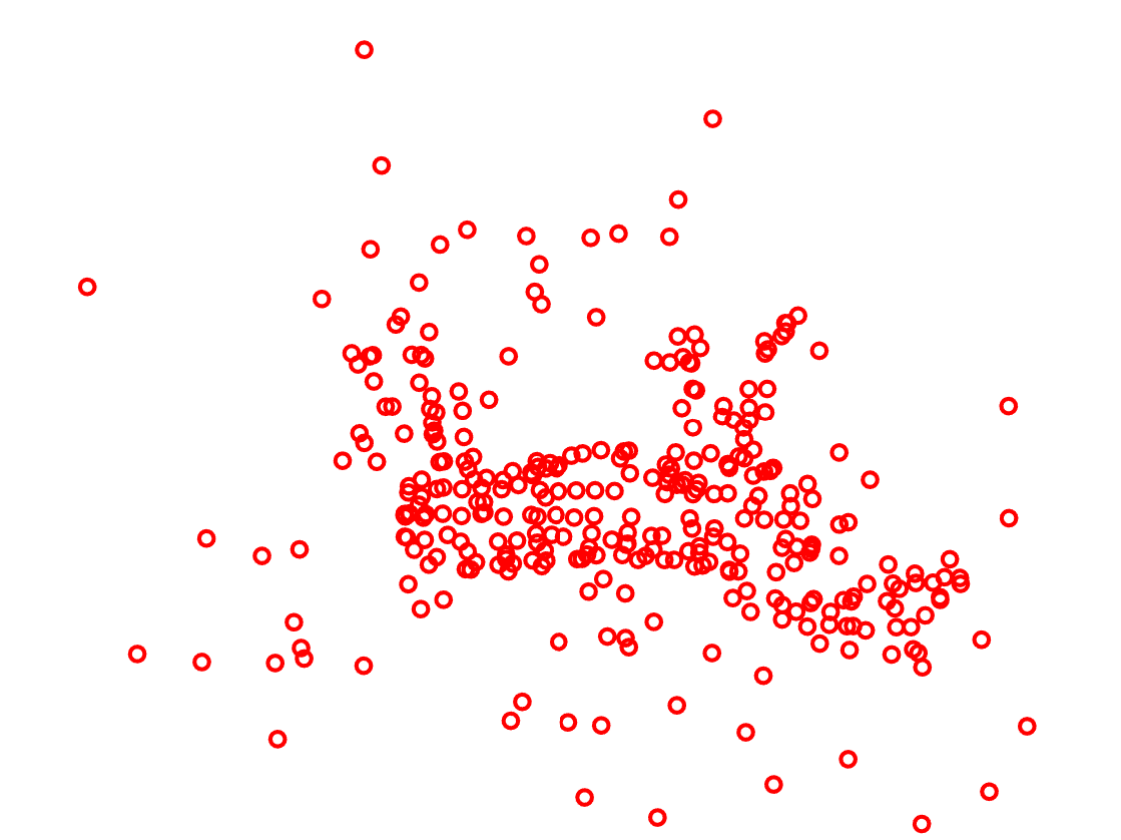}&	
		\includegraphics[width=\scale\linewidth]{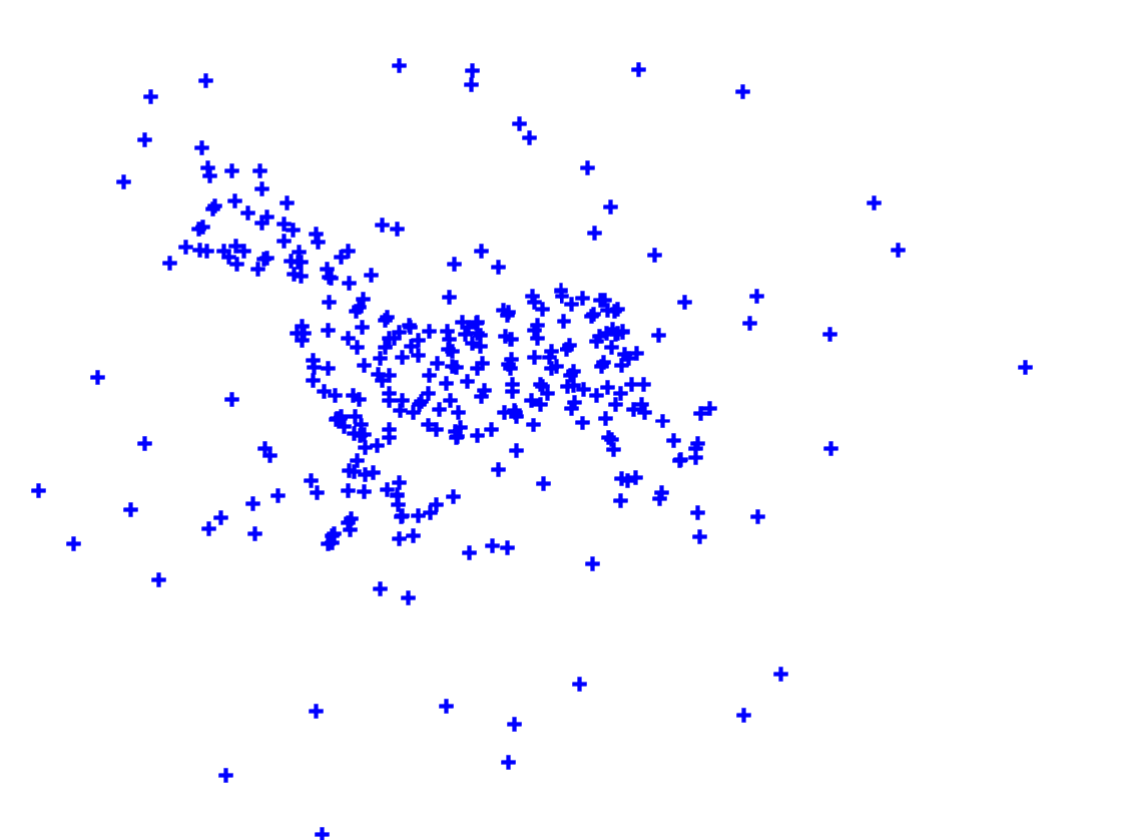}&					
		\includegraphics[width=\scale\linewidth]{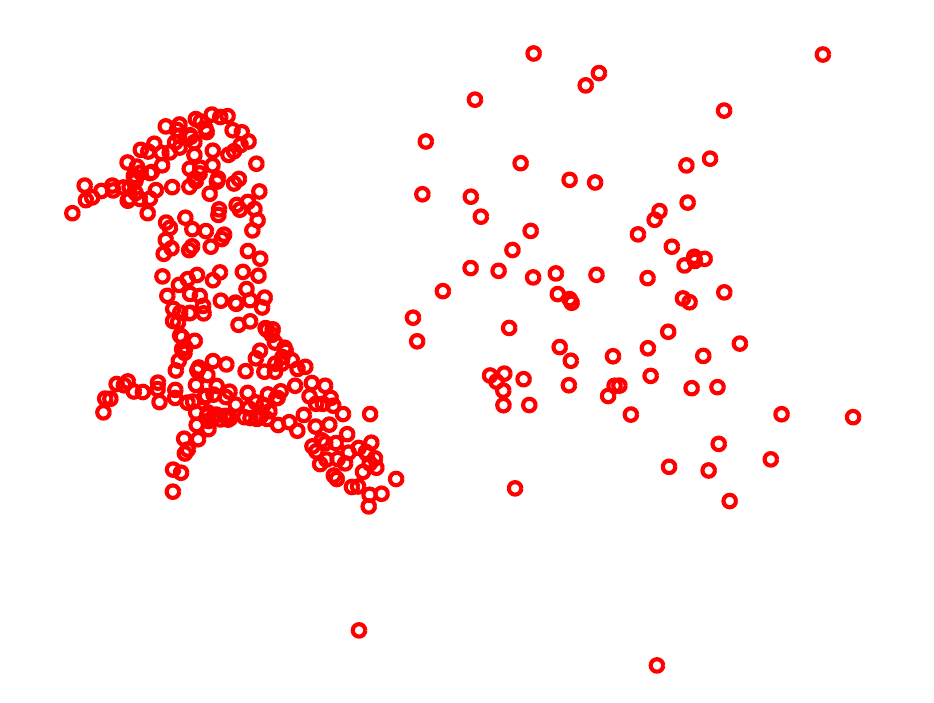}&
		\includegraphics[width=\scale\linewidth]{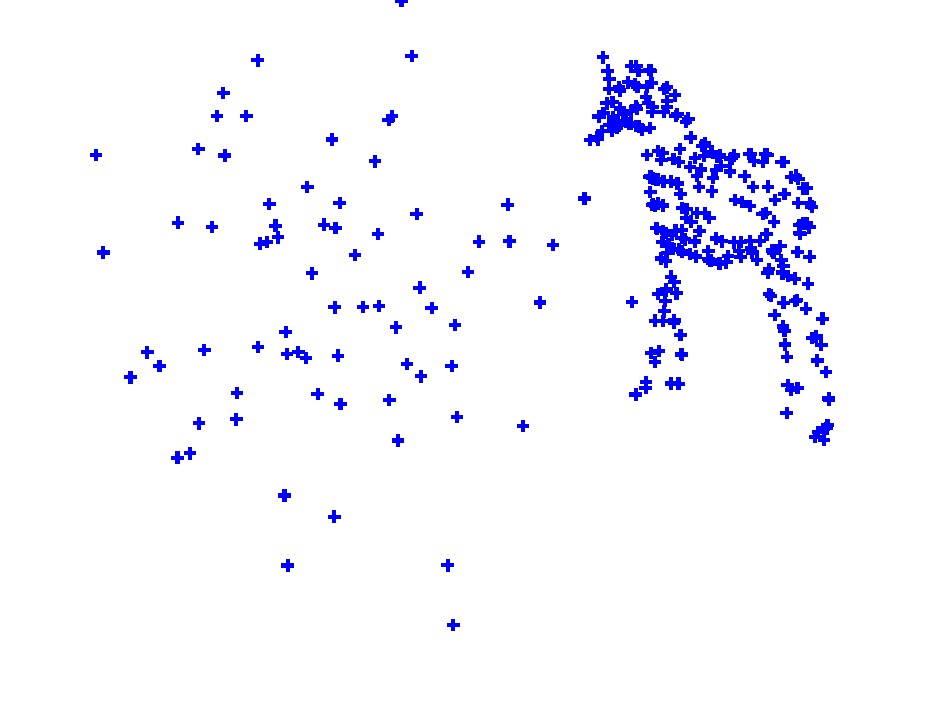}&
		\includegraphics[width=\scale\linewidth]{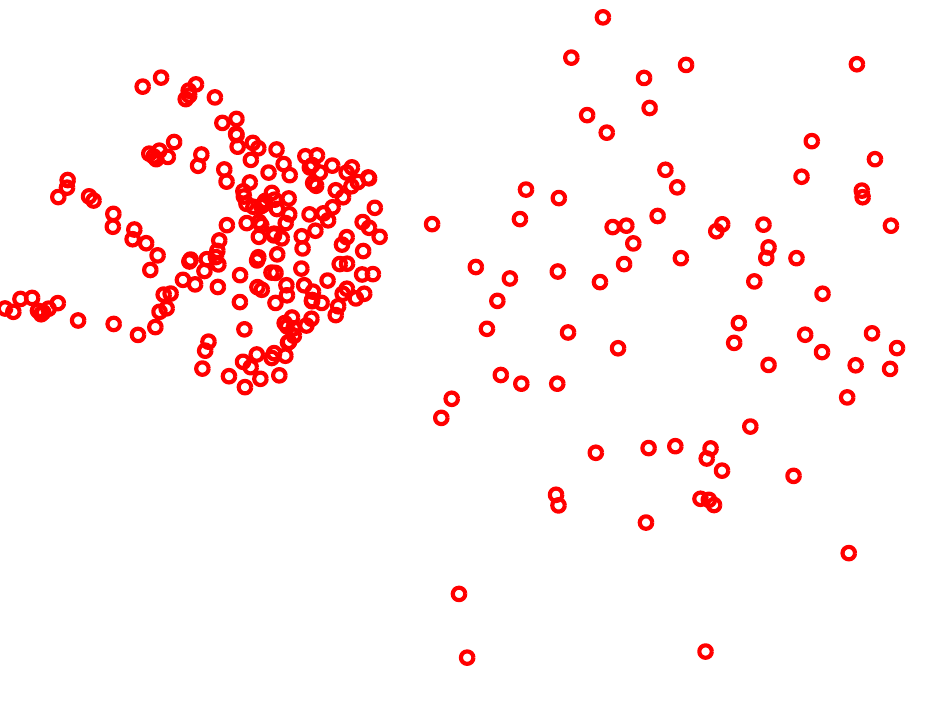}&
		\includegraphics[width=\scale\linewidth]{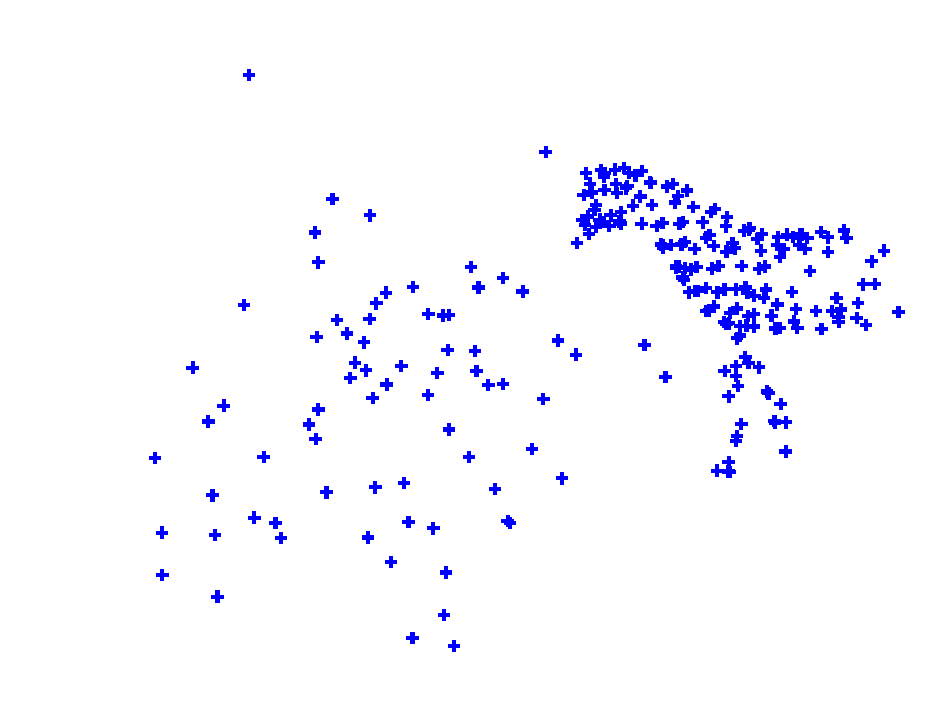} 
		\\\hline
		%
		\subfigure[]{		
			\includegraphics[width=\scale\linewidth]{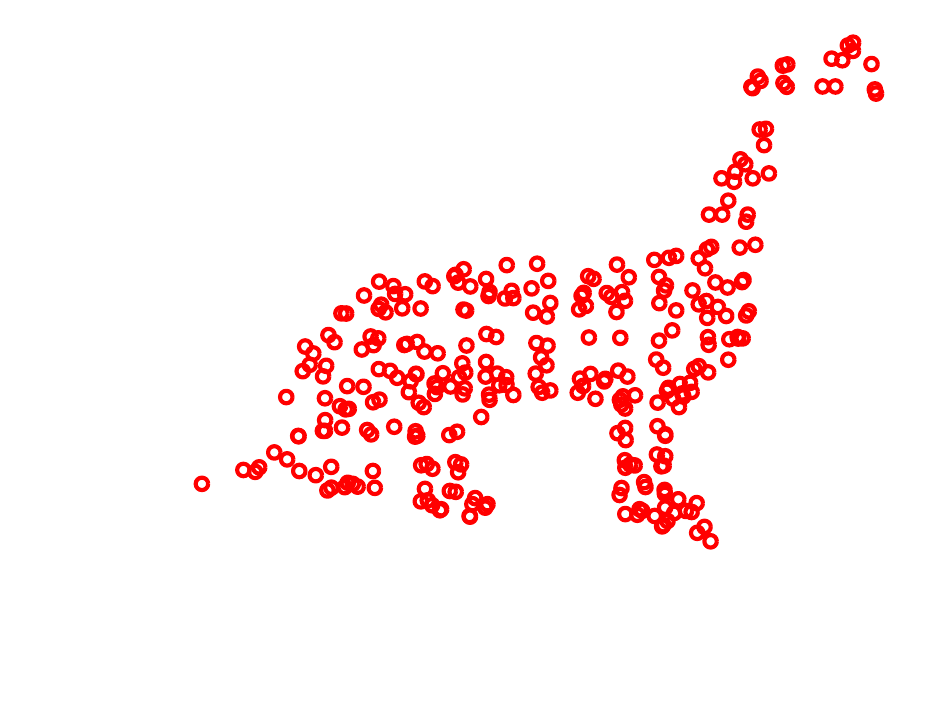}} &
		\subfigure[]{		
			\includegraphics[width=\scale\linewidth]{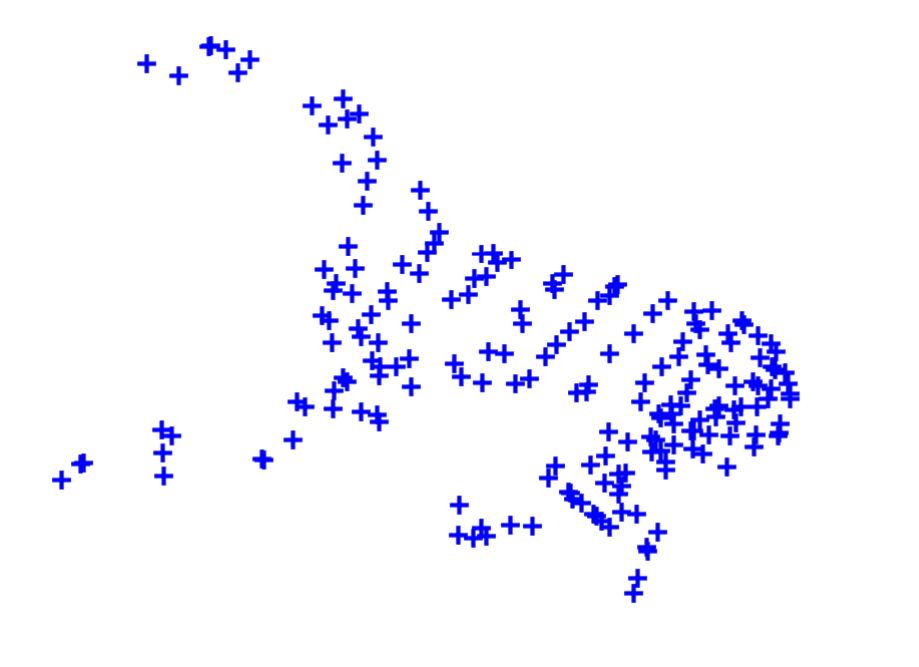}}&
		\subfigure[]{		
			\includegraphics[width=\scale\linewidth]{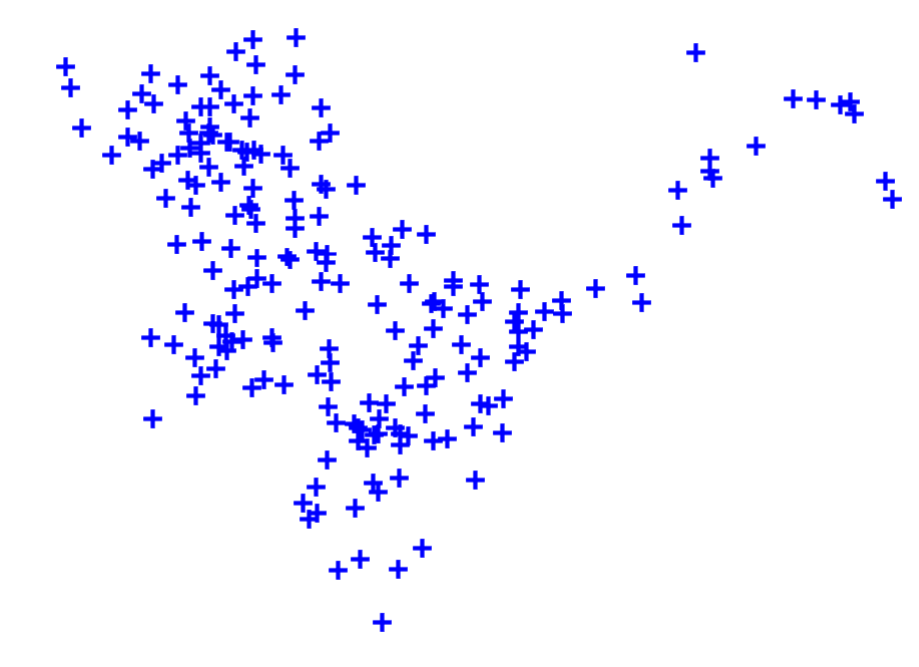}}&
		\subfigure[]{\includegraphics[width=\scale\linewidth]{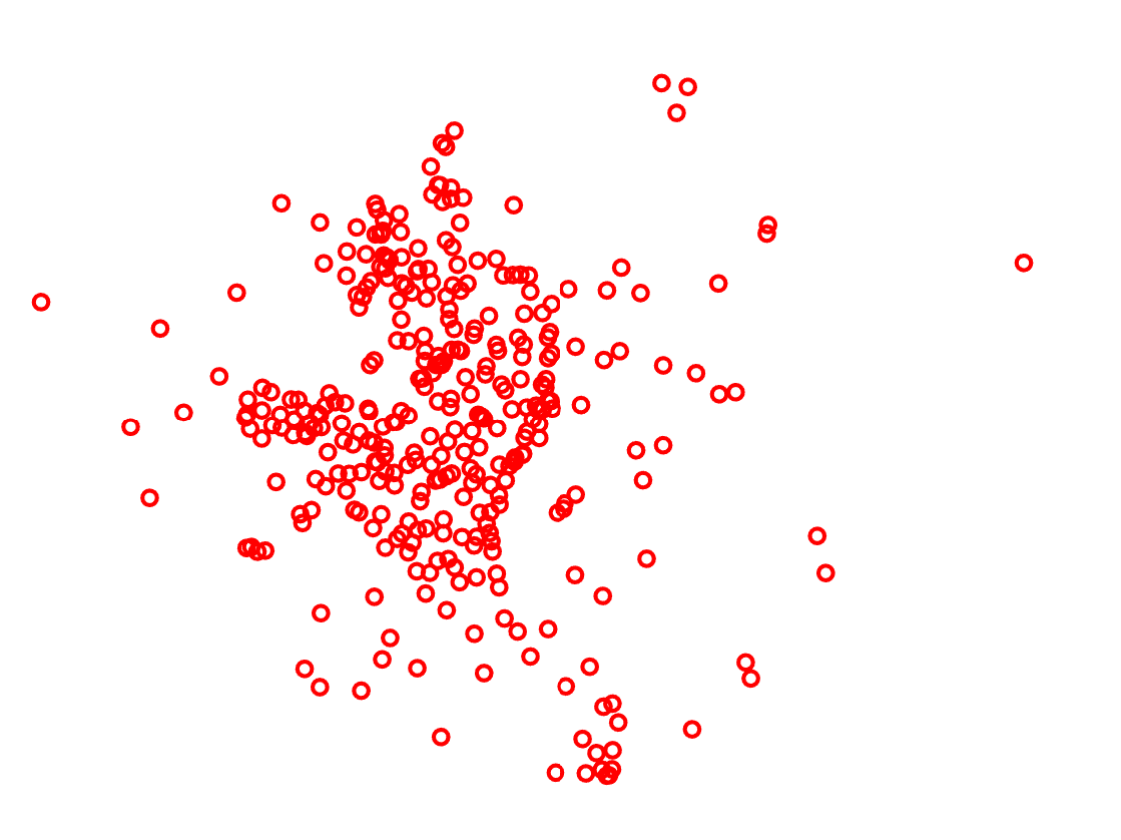}}&
		\subfigure[]{\includegraphics[width=\scale\linewidth]{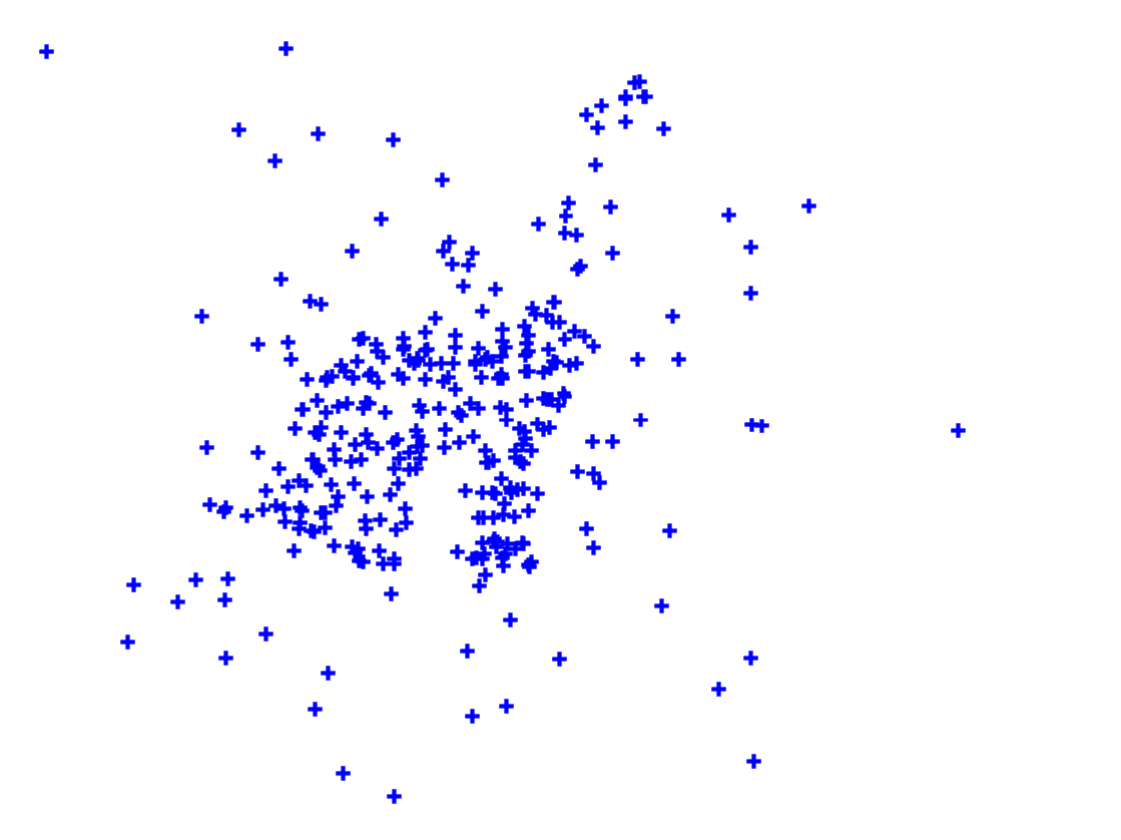}}&			
		\subfigure[]{		
			\includegraphics[width=\scale\linewidth]{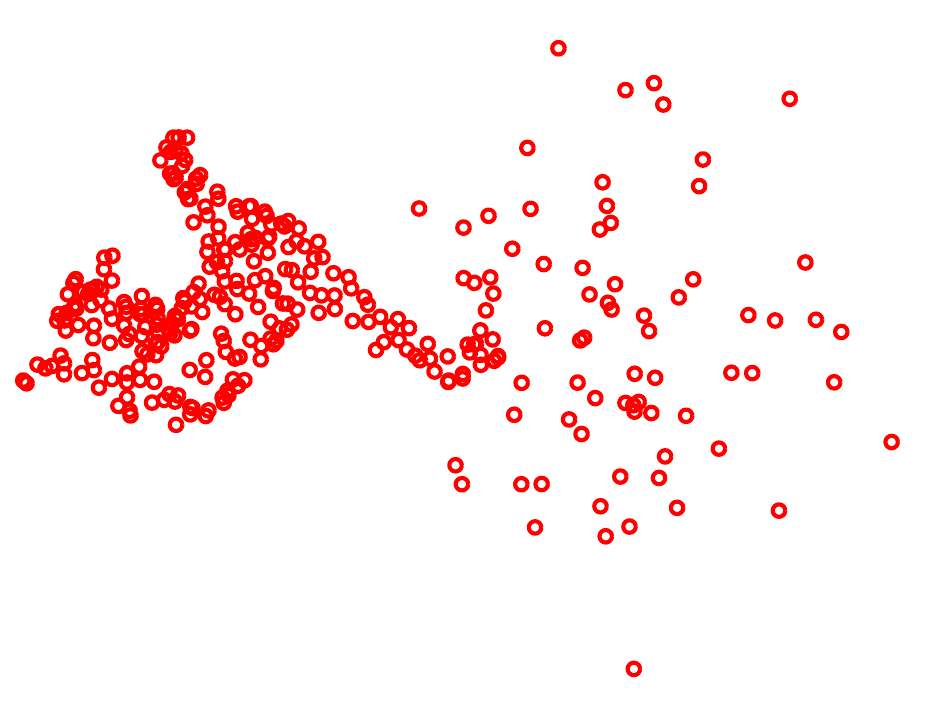}}&
		\subfigure[]{		
			\includegraphics[width=\scale\linewidth]{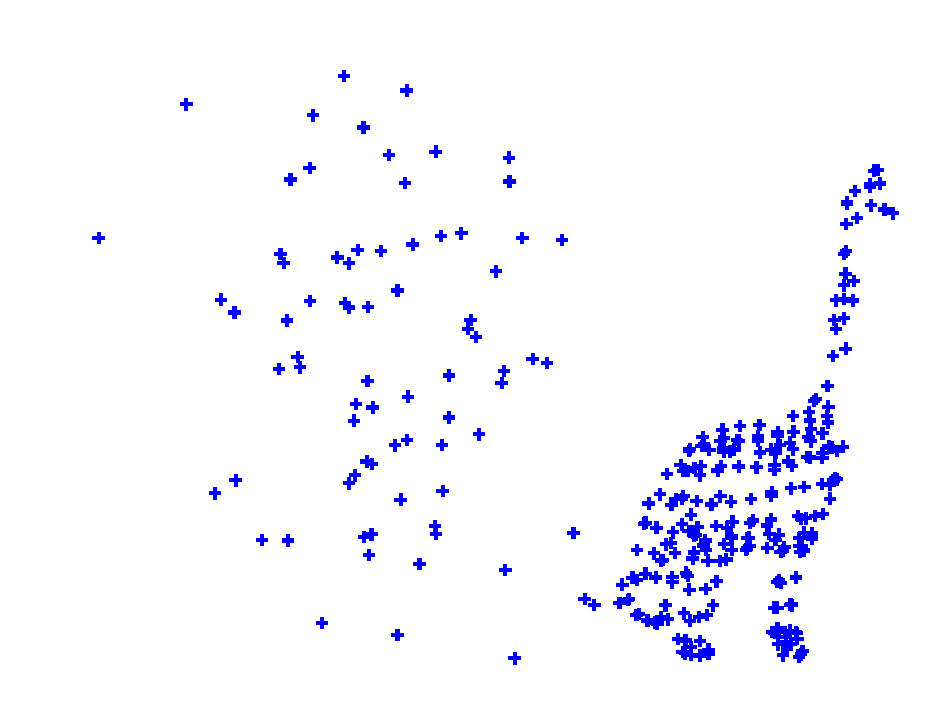}} &
		\subfigure[]{		
			\includegraphics[width=\scale\linewidth]{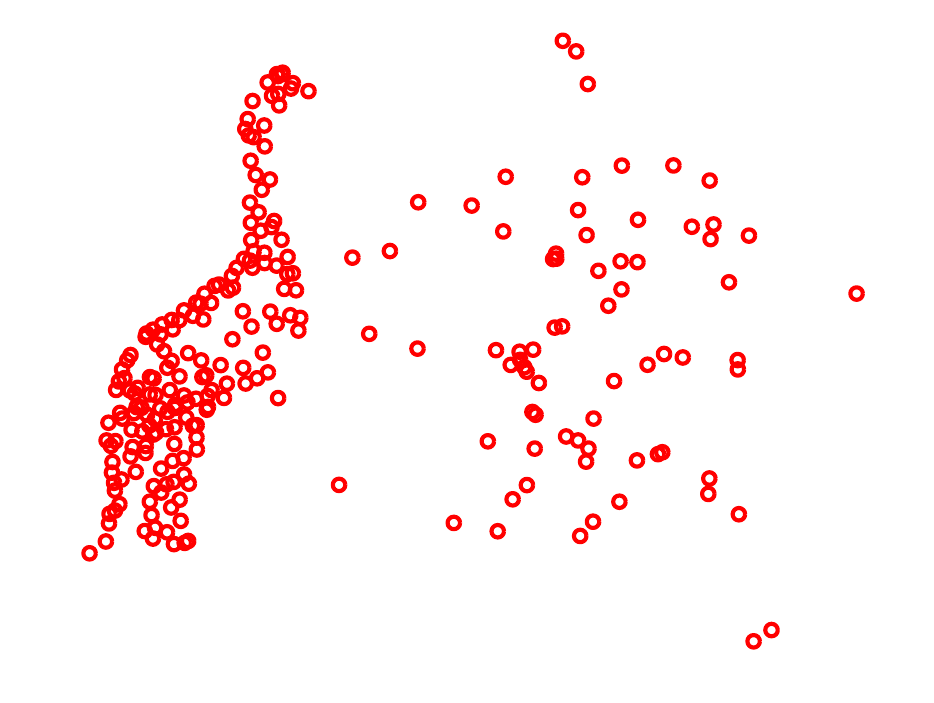}} &
		\subfigure[]{
			\includegraphics[width=\scale\linewidth]{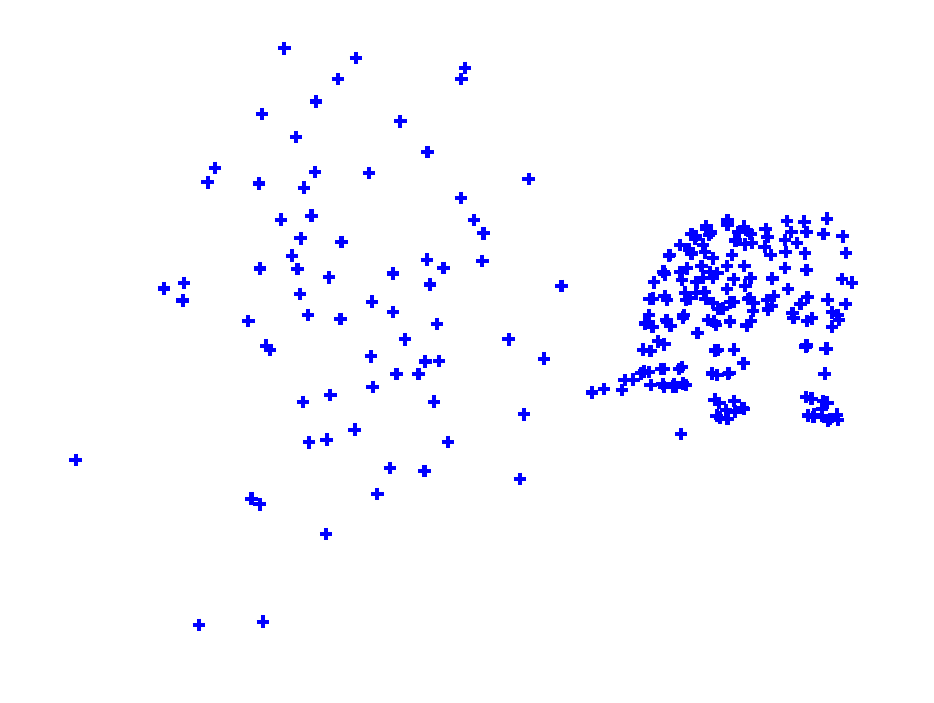}}
	\end{tabular}	
	\caption{
		(a) to (c): Model point sets and  examples of scene point sets in the deformation and noise tests, respectively.
		(d) to (i): Examples of model and scene point sets in the mixed outliers and inliers test ((d), (e)), separate outliers and inliers test ((f), (g)), and occlusion+outlier test ((h), (i)), respectively.
		In all cases, model points are indicated by red circles, while scene points are represented by blue crosses.	
		%
		\label{rot_3D_test_data_exa}}
%
\centering
\newcommand\scaleGd{0.215}
\begin{tabular}{@{\hspace{-0mm}}c@{\hspace{-1.6mm}}c@{\hspace{-1.6mm}}c@{\hspace{-1.6mm}}c@{\hspace{-1.6mm}} c }
\includegraphics[width=\scaleGd\linewidth]{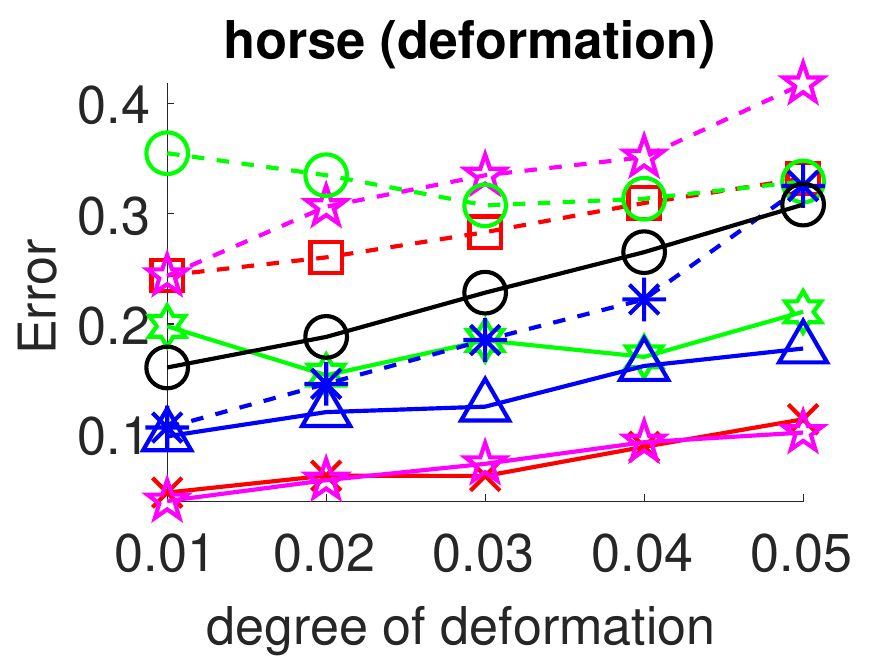}&	
	\includegraphics[width=\scaleGd\linewidth]{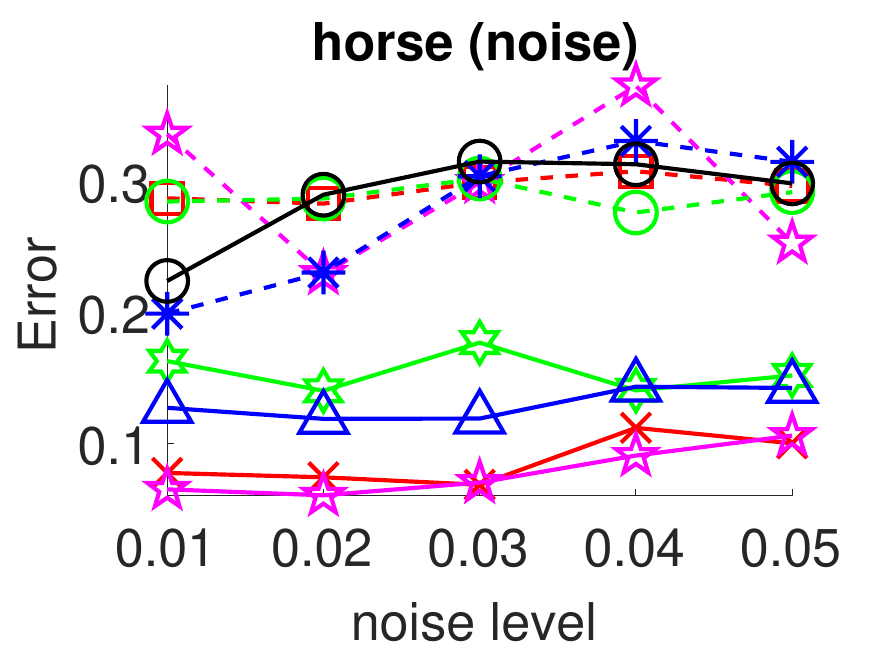}&		
	\includegraphics[width=\scaleGd\linewidth]{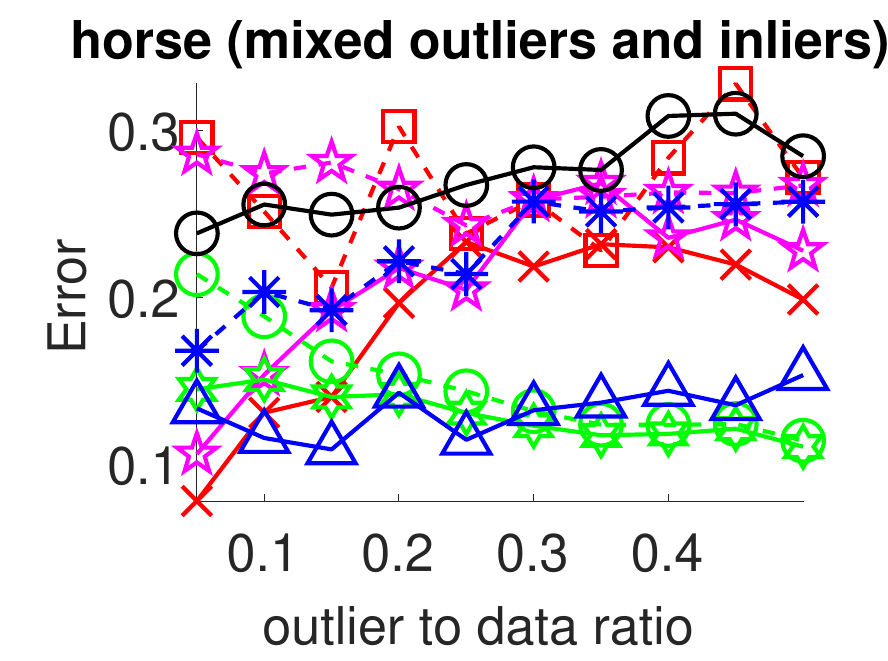}&
	
	\includegraphics[width=\scaleGd\linewidth]{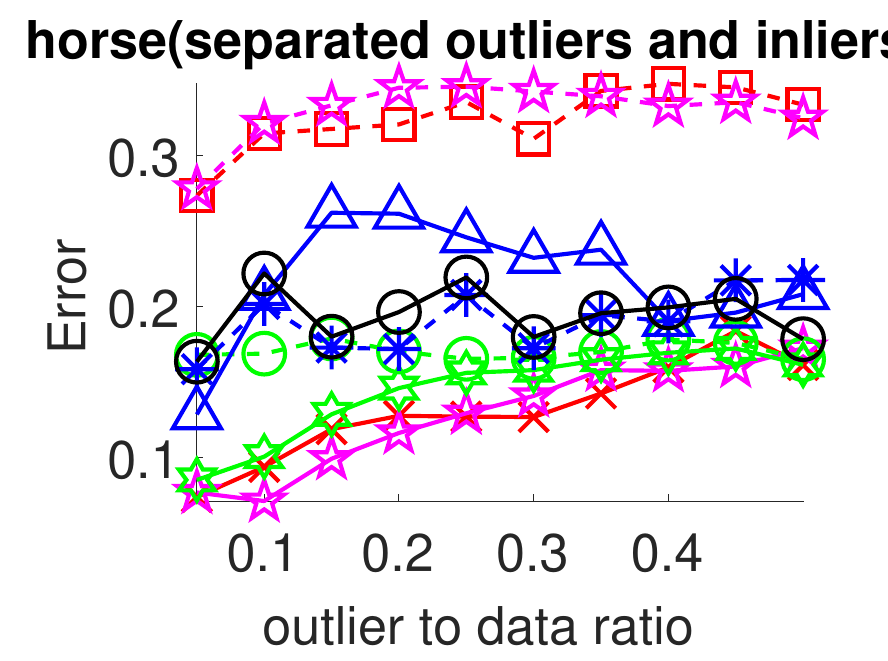}&
	\includegraphics[width=\scaleGd\linewidth]{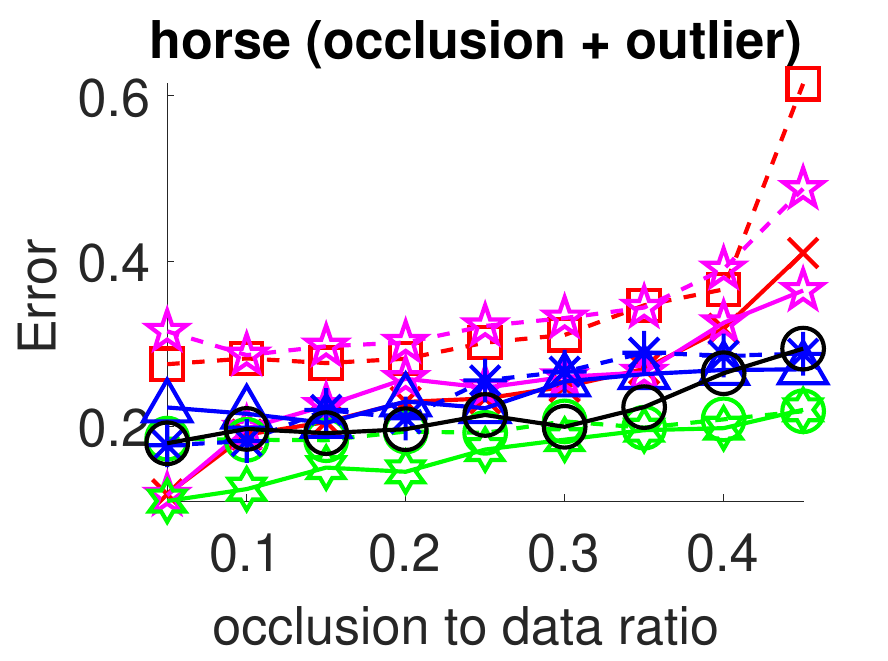}\\
	
	\includegraphics[width=\scaleGd\linewidth]{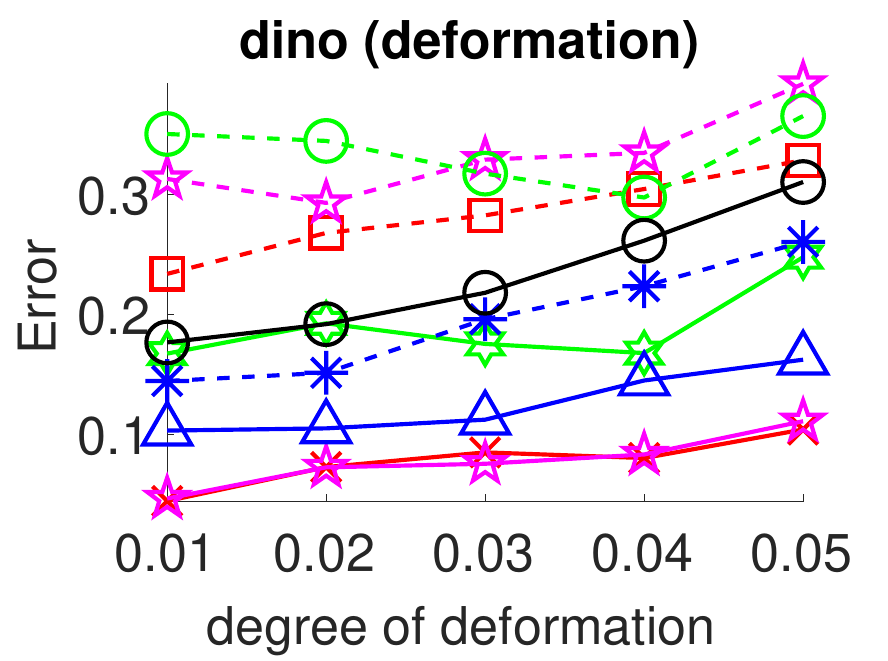}&
	\includegraphics[width=\scaleGd\linewidth]{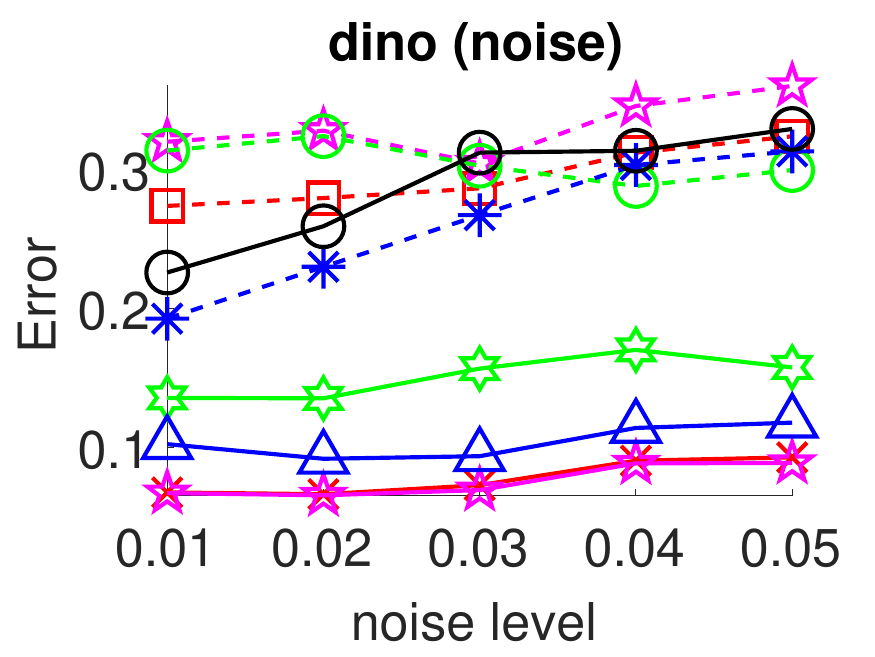}&
	\includegraphics[width=\scaleGd\linewidth]{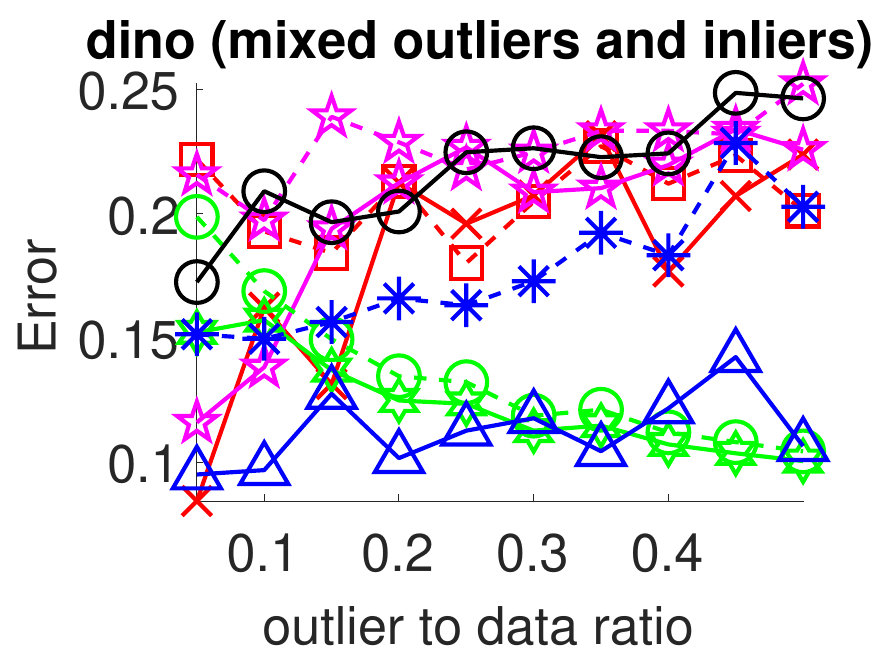}&		
	\includegraphics[width=\scaleGd\linewidth]{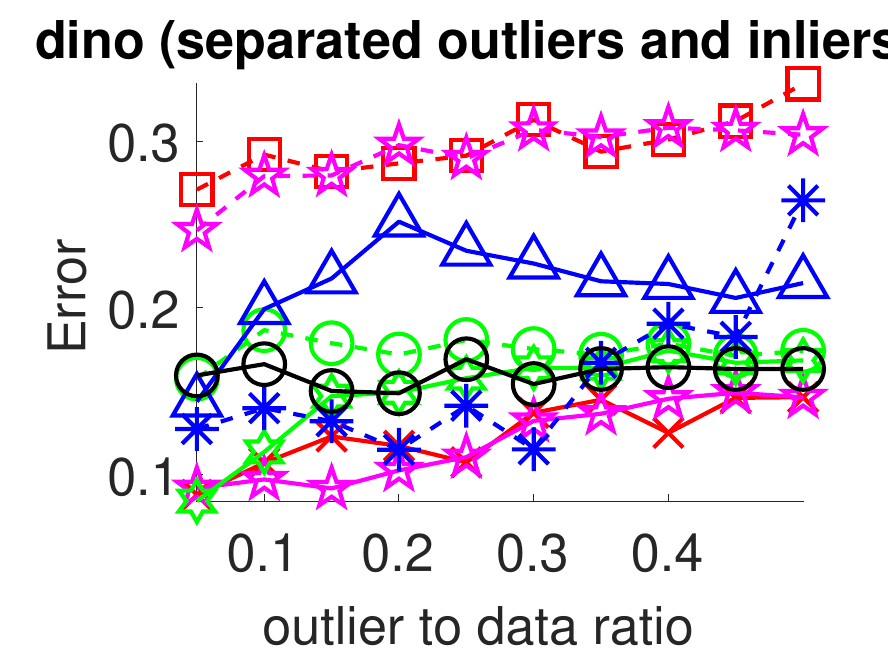}&
	\includegraphics[width=\scaleGd\linewidth]{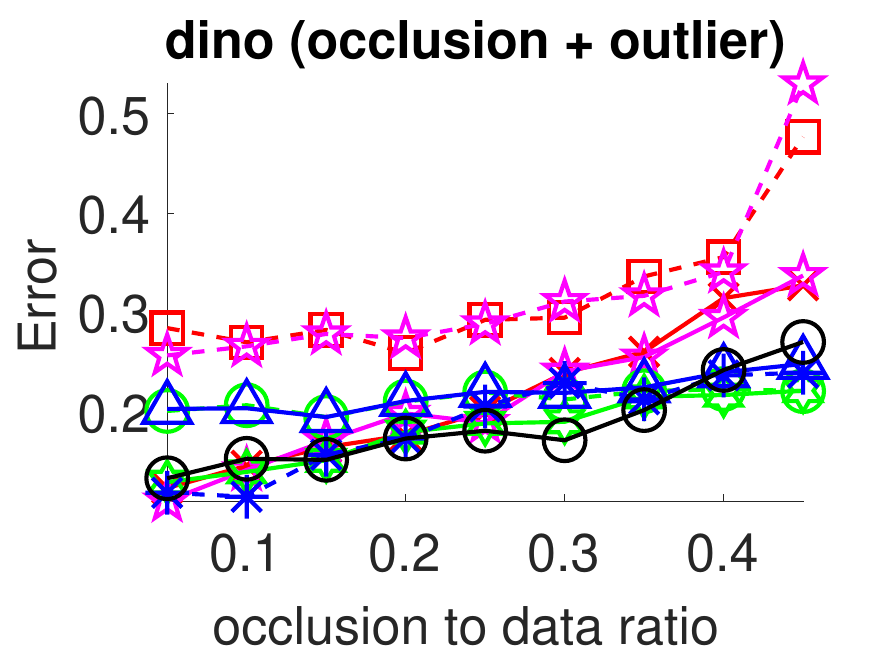} 
\end{tabular}
\includegraphics[width=.9\linewidth]{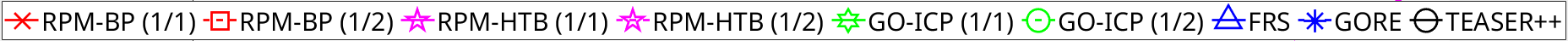}		
\begin{tabular}{@{\hspace{-0mm}}c@{\hspace{-1.6mm}}c@{\hspace{-1.6mm}}c@{\hspace{-1.6mm}}c@{\hspace{-1.6mm}} c }
\includegraphics[width=\scaleGd\linewidth]{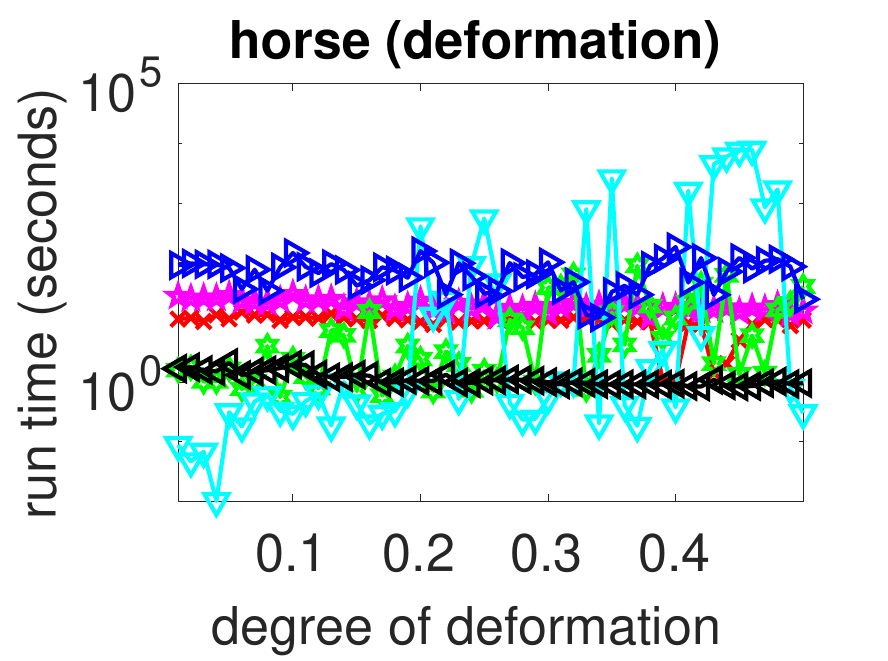}&
\includegraphics[width=\scaleGd\linewidth]{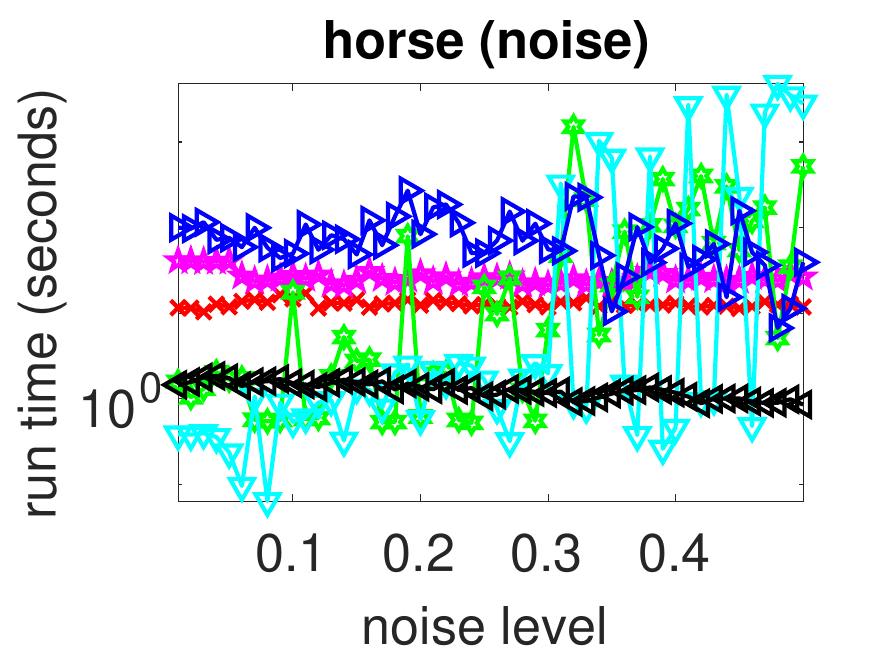}&
\includegraphics[width=\scaleGd\linewidth]{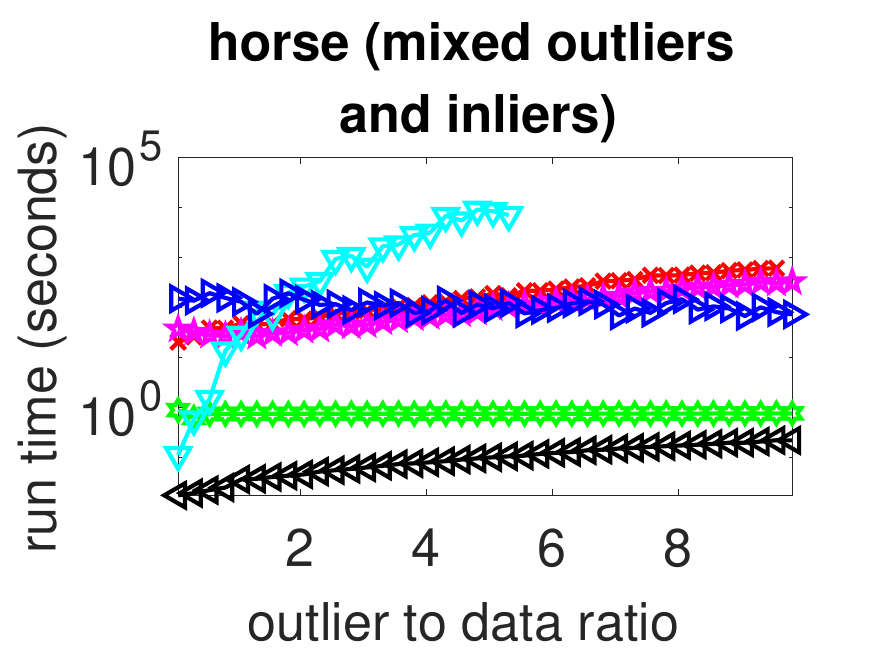}&
\includegraphics[width=\scaleGd\linewidth]{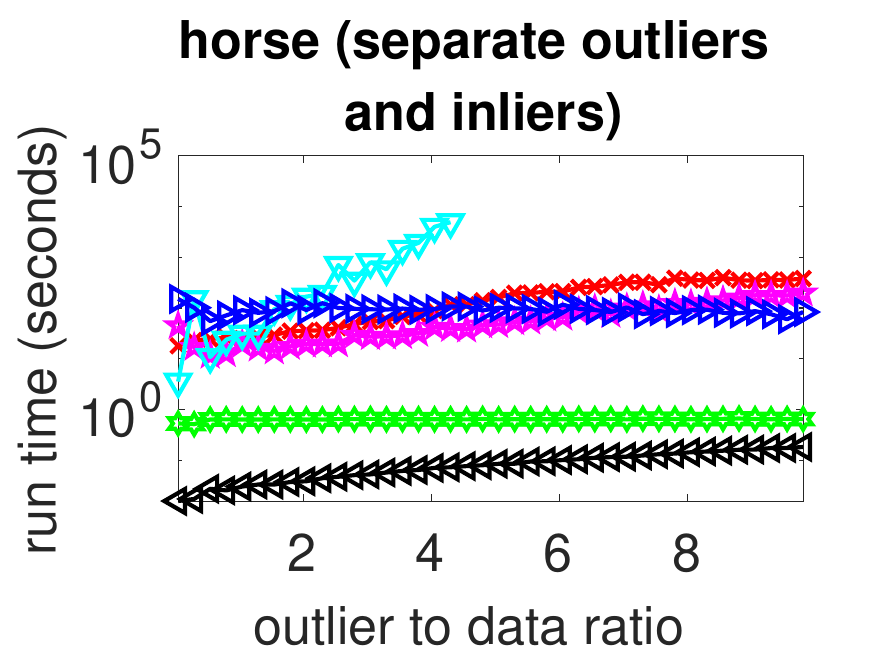}	&
\includegraphics[width=\scaleGd\linewidth]{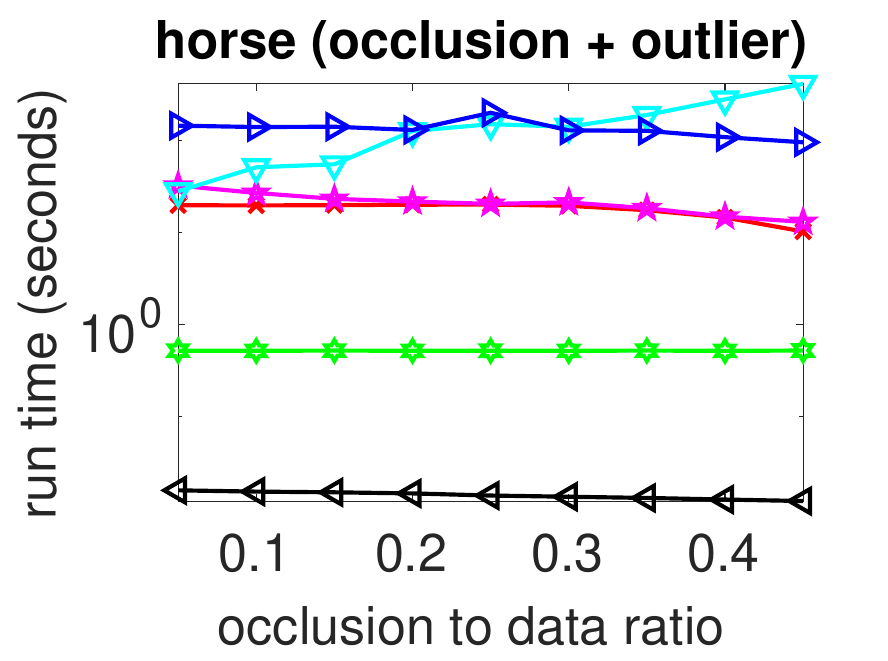}		\end{tabular}
\includegraphics[width=.65\linewidth]{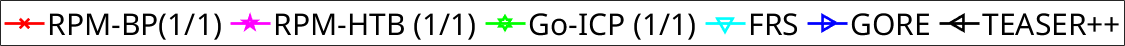}	
\caption{
Average registration errors (top 2 rows) and run times (bottom row) of
various methods
 under varying $n_p$ values ($1/2$ or $1/1$ of the ground truth value) across 100 random trials for 3D deformation, positional noise, mixed outliers and inliers, separate outliers and inliers, and occlusion+outlier tests.
%
\label{3D_rigid_sta}}
	\centering
	\includegraphics[width=\linewidth]{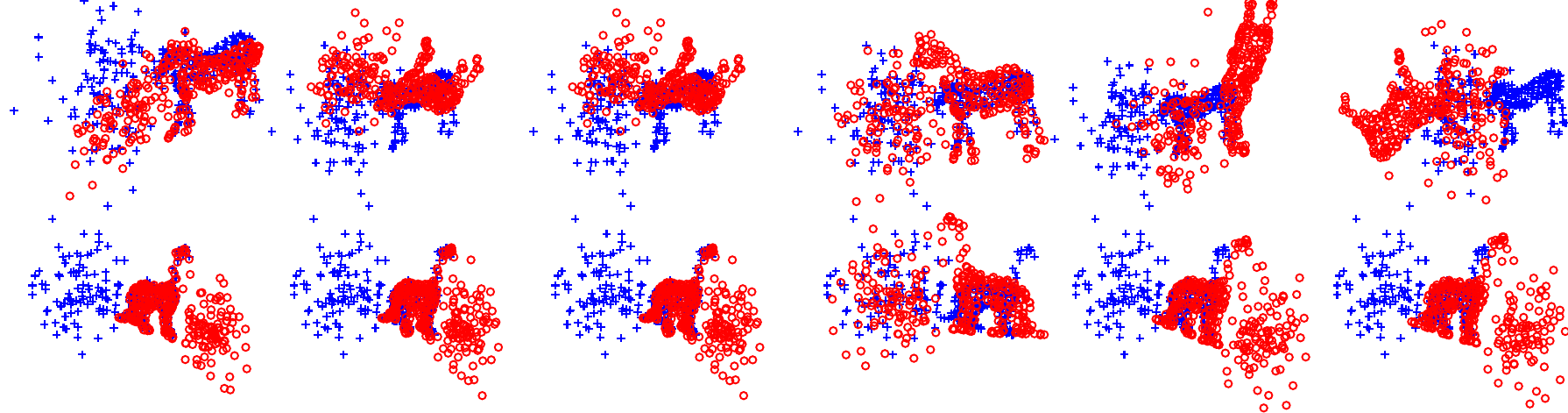}
	\newcommand\sctwo{8}	
	{\scriptsize
\begin{tabular}{@{\hspace{\sctwo mm}}c @{\hspace{\sctwo mm}} c@{\hspace{\sctwo mm}}  c@{\hspace{\sctwo mm}}  c@{\hspace{\sctwo mm}}   c@{\hspace{\sctwo mm}} c}	
	(a) RPM-BP & (b) RPM-HTB & (c) Go-ICP & (d) FRS & (e) GORE & (f) TEASER++
\end{tabular}
}
	\caption{
		Examples of registration results from different methods in the separate outliers and inliers test, where the $n_p$ values of RPM-HTB and Go-ICP are both chosen as  the ground truth.		
	\label{rot_3D_syn_match_exa}}	
\end{figure*}

\subsubsection{3DMatch Dataset}
The 3DMatch benchmark \cite{3Dmatch_dataset} provides scans from 62 scenes across five established RGB-D reconstruction datasets (sun3d, 7-scenes, rgbd-scenes-v2, bundlefusion, and analysis-by-synthesis). For convenience, we used the scan pairs from D3Feat \cite{D3feat_dataset} to evaluate the performance of our method.

Fig. \ref{3DMatch_tests} shows the matching errors for the different methods. The results indicate that RPM-BP and RPM-HTB generally outperform Go-ICP and FRS but are less accurate than GORE and TEASER++. The superior performance of GORE and TEASER++ can be attributed to their use of features, which significantly enhances their ability to match complex structures. In contrast, the other methods rely solely on point position information. Furthermore, the 3DMatch benchmark contains purely rigid transformations, which aligns perfectly with the assumptions of GORE and TEASER++. Between our method and RPM-HTB, RPM-BP demonstrates slightly better performance, especially in the sun3d and rgbd-scenes-v2 tests.

Examples of registration results are presented in Fig. \ref{3DMatch_exa}.

\begin{figure*}[t]
\begin{minipage}{1\textwidth}	
\centering
\newcommand\scale{0.22} 
\begin{tabular}{@{\hspace{-1mm}}c@{\hspace{-3mm}}c@{\hspace{-3mm}}c@{\hspace{-3mm}}c@{\hspace{-3mm}}c}	
\includegraphics[width=\scale\linewidth]{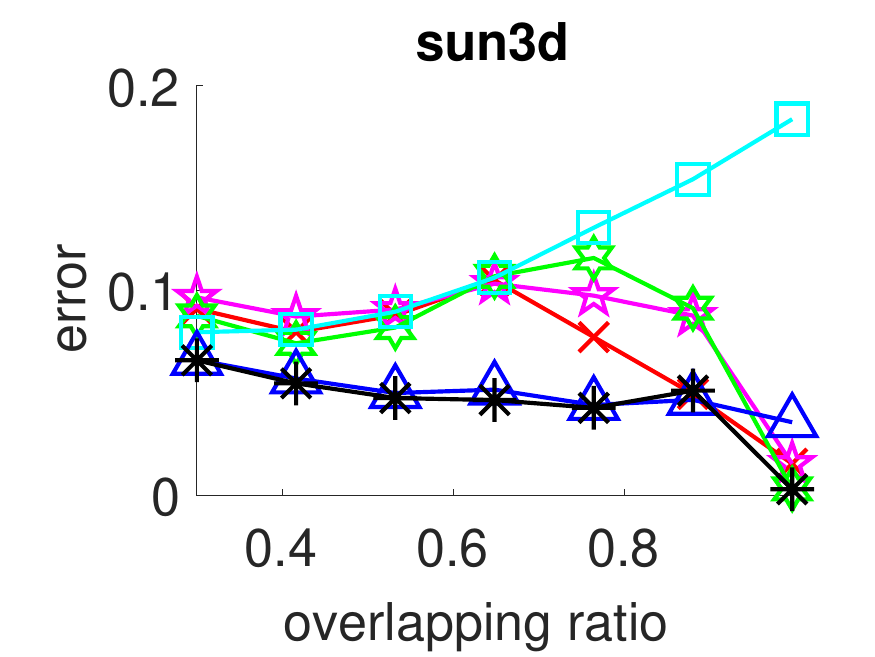}	&
\includegraphics[width=\scale\linewidth]{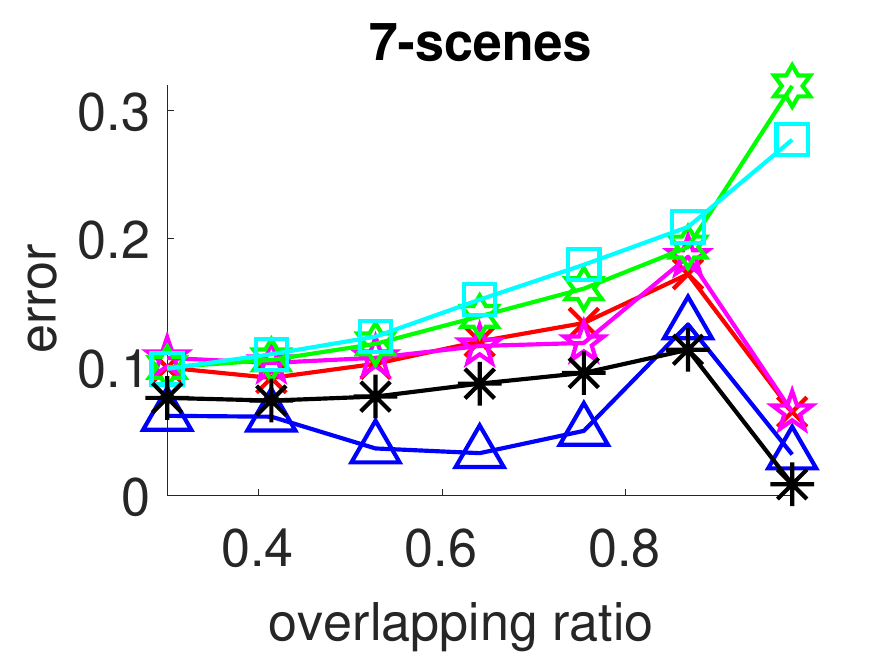} &
\includegraphics[width=\scale\linewidth]{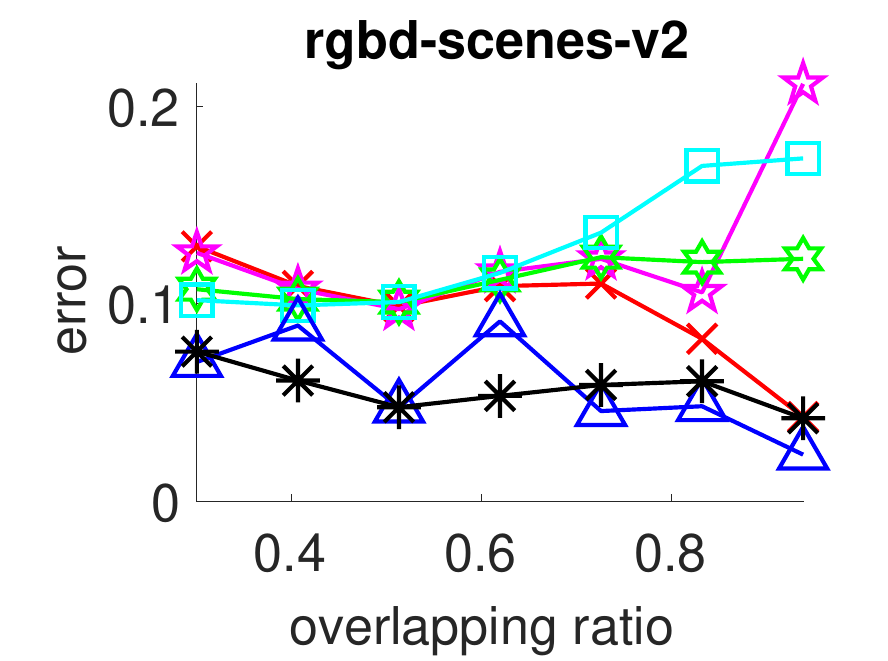} &
\includegraphics[width=\scale\linewidth]{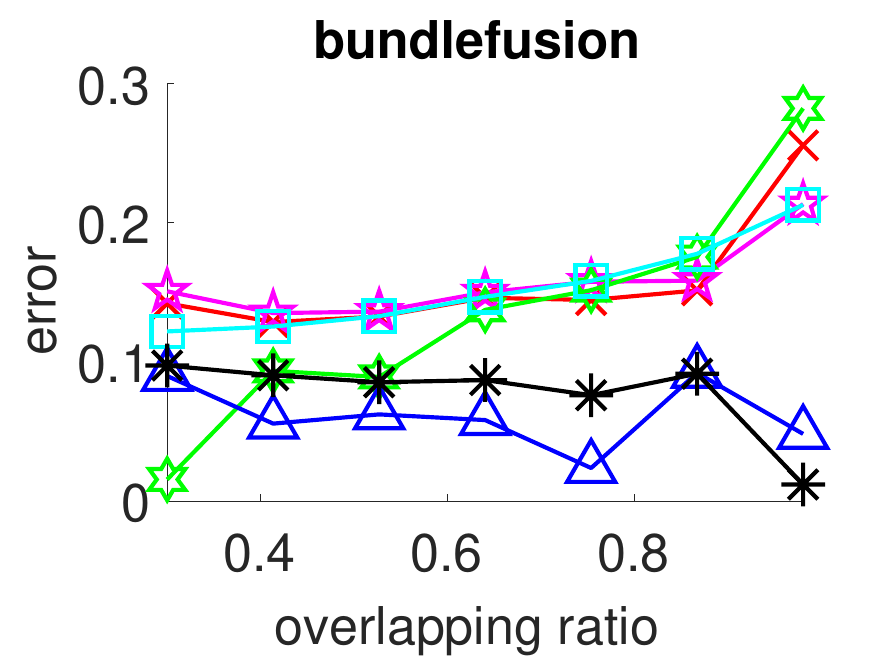}	&
\includegraphics[width=\scale\linewidth]{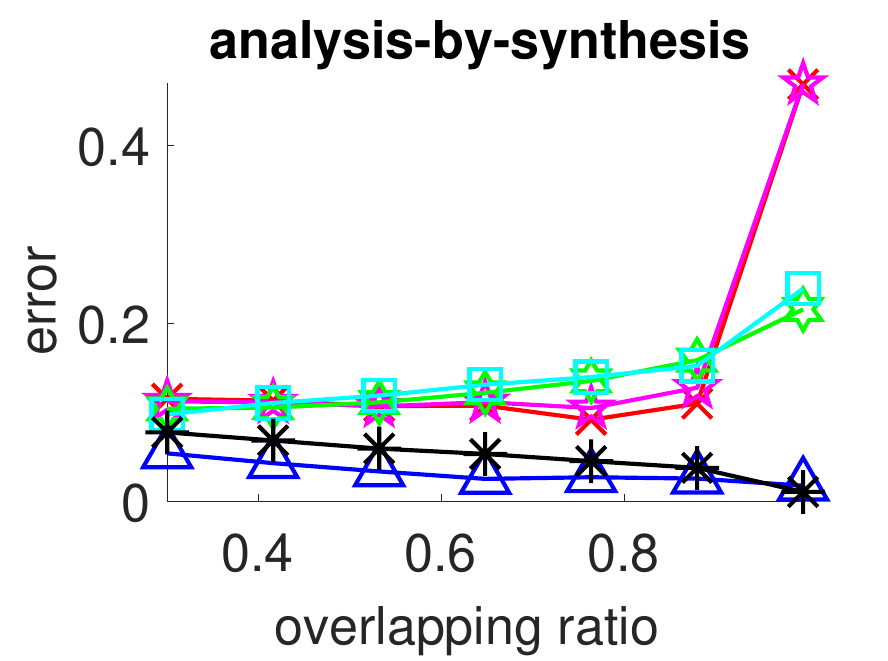}
\end{tabular}	
\includegraphics[width=0.7\linewidth]{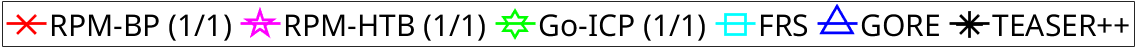}

\caption{
	Average registration errors from various methods across five RGB-D reconstruction datasets.
\label{3DMatch_tests}	}
\end{minipage}	
\begin{minipage}{1\textwidth}
\centering
\newcommand\scale{1}	
\includegraphics[width=\scale\linewidth]{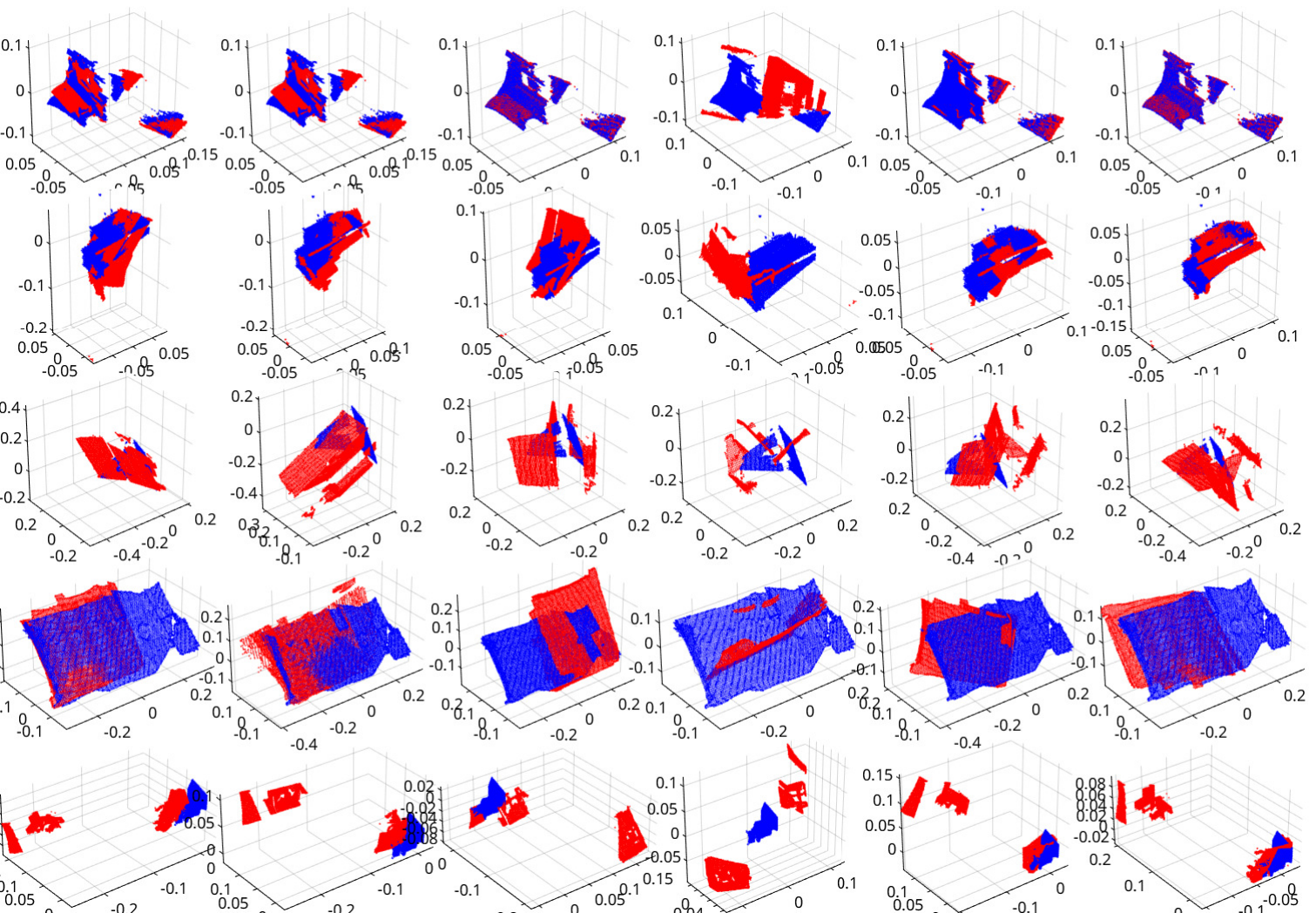}
\newcommand\sctwo{8}
{\scriptsize
\begin{tabular}{@{\hspace{-0mm}}c @{\hspace{\sctwo mm}} c@{\hspace{\sctwo mm}}  c@{\hspace{\sctwo mm}}  c@{\hspace{\sctwo mm}}   c@{\hspace{\sctwo mm}} c}	
(a) RPM-BP & (b) RPM-HTB & (c) Go-ICP & (d) FRS & (e) GORE & (f) TEASER++
\end{tabular}
}
\caption{
	Examples of registration results generated by different methods on the five RGB-D reconstruction datasets, arranged from top to bottom: sun3d, 7-scenes, rgbd-scenes-v2 (repeated), and analysis-by-synthesis.
\label{3DMatch_exa}	}
\end{minipage}	
\end{figure*}

\section{Conclusion \label{sec:conclude}}
In this paper, we presented RPM-BP, a novel Branch-and-Bound (BnB) algorithm for globally optimal point set registration under partial overlap. Our method is designed to be invariant to the underlying transformation, encompassing both 2D similarity/affine and 3D rigid transformations.

The efficiency of RPM-BP stems from several key features:
\begin{enumerate}
	\item \textbf{Efficient Lower Bound Computation:} The lower bound is efficiently computed by decoupling the minimization problem into a tractable linear assignment problem (for correspondences) and a low-dimensional convex quadratic program (for transformation parameters).
	\item \textbf{Low-Dimensional Branching Space:} The dimensionality of the branching space is confined solely to the transformation parameters, which is inherently low-dimensional (e.g., 4 for 2D similarity, 6 for 3D rigid). This greatly mitigates the exponential complexity typical of BnB algorithms.
\end{enumerate}
These attributes contribute to the method's computational efficiency and its strong scalability with respect to the number of points. Our extensive experiments confirm that RPM-BP demonstrates superior robustness against non-rigid deformations, positional noise, and outliers, particularly in scenarios where outliers and inliers are spatially separated.

Despite its advantages, the proposed method has two primary limitations. First, its use is inherently restricted to transformations with a small number of parameters, as the complexity of the BnB algorithm remains exponential with respect to the branching space dimensionality. Second, the method's overall robustness is  sensitive to the parameter setting for the number of inliers, $n_p$.

For future work, we plan to focus on developing an adaptive scheme for setting the parameter $n_p$ to enhance the method's performance and practical applicability.


			%
			%


				
				
			
			{\small
				\bibliographystyle{elsarticle-num}
				\bibliography{DP_SC_CVPR}

\begin{thebibliography}{10}
\expandafter\ifx\csname url\endcsname\relax
  \def\url#1{\texttt{#1}}\fi
\expandafter\ifx\csname urlprefix\endcsname\relax\def\urlprefix{URL }\fi
\expandafter\ifx\csname href\endcsname\relax
  \def\href#1#2{#2} \def\path#1{#1}\fi

\bibitem{RPM_concave_PAMI}
W.~Lian, L.~Zhang, M.-H. Yang, An efficient globally optimal algorithm for
  asymmetric point matching, IEEE Transactions on Pattern Analysis and Machine
  Intelligence 39~(7) (2017) 1281--1293.

\bibitem{RPM_model_occlude}
W.~Lian, L.~Zhang, Point matching in the presence of outliers in both point
  sets: A concave optimization approach, in: IEEE Conference on Computer Vision
  and Pattern Recognition, 2014, pp. 352--359.

\bibitem{RPM_model_occlude_PR}
W.~Lian, L.~Zhang, A concave optimization algorithm for matching partially
  overlapping point sets, Pattern Recognition 103 (2020) 107322.
\newblock \href
  {http://dx.doi.org/https://doi.org/10.1016/j.patcog.2020.107322}
  {\path{doi:https://doi.org/10.1016/j.patcog.2020.107322}}.

\bibitem{lian2021polyhedral}
W.~Lian, W.~Zuo, Z.~Cui, A polyhedral annexation algorithm for aligning
  partially overlapping point sets, IEEE Access 9 (2021) 166750--166761.
\newblock \href {http://dx.doi.org/10.1109/ACCESS.2021.3135863}
  {\path{doi:10.1109/ACCESS.2021.3135863}}.

\bibitem{LIAN2023126482}
W.~Lian, W.~Zuo,
  \href{https://www.sciencedirect.com/science/article/pii/S0925231223006057}{Hybrid
  trilinear and bilinear programming for aligning partially overlapping point
  sets}, Neurocomputing 551 (2023) 126482.
\newblock \href
  {http://dx.doi.org/https://doi.org/10.1016/j.neucom.2023.126482}
  {\path{doi:https://doi.org/10.1016/j.neucom.2023.126482}}.
\newline\urlprefix\url{https://www.sciencedirect.com/science/article/pii/S0925231223006057}

\bibitem{10.1007/s11263-020-01359-2}
J.~Ma, X.~Jiang, A.~Fan, J.~Jiang, J.~Yan,
  \href{https://doi.org/10.1007/s11263-020-01359-2}{Image matching from
  handcrafted to deep features: A survey}, Int. J. Comput. Vision 129~(1)
  (2021) 23--79.
\newblock \href {http://dx.doi.org/10.1007/s11263-020-01359-2}
  {\path{doi:10.1007/s11263-020-01359-2}}.
\newline\urlprefix\url{https://doi.org/10.1007/s11263-020-01359-2}

\bibitem{ICP}
P.~J. Besl, N.~D. McKay, A method for registration of 3-d shapes, IEEE Trans.
  Pattern Analysis and Machine Intelligence 14~(2) (1992) 239--256.

\bibitem{ICP2}
Z.~Zhang, Iterative point matching for registration of free-form curves and
  surfaces, International Journal of Computer Vision 13~(2) (1994) 119--152.

\bibitem{ICP_2D_rot}
C.~Zhang, S.~Du, J.~Liu, Y.~Li, J.~Xue, Y.~Liu, Robust iterative closest point
  algorithm with bounded rotation angle for 2d registration, Neurocomputing 195
  (2016) 172--180.

\bibitem{ICP_rgb_correntropy}
L.~Liang, Precise iterative closest point algorithm for rgb-d data registration
  with noise and outliers, Neurocomputing 399 (2020) 361--368.

\bibitem{RPM_TPS}
H.~Chui, A.~Rangarajan, A new point matching algorithm for non-rigid
  registration, Computer Vision and Image Understanding 89~(2-3) (2003)
  114--141.

\bibitem{CDC}
M.~Sofka, G.~Yang, C.~V. Stewart, Simultaneous covariance driven correspondence
  (cdc) and transformation estimation in the expectation maximization
  framework, in: IEEE Conf. Computer Vision and Pattern Recognition, 2007, pp.
  1--8.

\bibitem{ieee7001713}
J.~Ma, W.~Qiu, J.~Zhao, Y.~Ma, A.~L. Yuille, Z.~Tu, Robust $l_{2}e$ estimation
  of transformation for non-rigid registration, IEEE Transactions on Signal
  Processing 63~(5) (2015) 1115--1129.
\newblock \href {http://dx.doi.org/10.1109/TSP.2014.2388434}
  {\path{doi:10.1109/TSP.2014.2388434}}.

\bibitem{CPD_match}
A.~Myronenko, X.~Song, Point set registration: Coherent point drift, IEEE
  Transactions on Pattern Analysis and Machine Intelligence 32~(12) (2010)
  2262--2275.

\bibitem{kernel_Gaussian_journal}
B.~Jian, B.~C. Vemuri, Robust point set registration using gaussian mixture
  models, IEEE Trans. Pattern Analysis and Machine Intelligence 33~(8) (2011)
  1633--1645.

\bibitem{mixture_SVC}
D.~Campbell, L.~Petersson, An adaptive data representation for robust point-set
  registration and merging, in: 2015 IEEE International Conference on Computer
  Vision (ICCV), 2015, pp. 4292--4300.
\newblock \href {http://dx.doi.org/10.1109/ICCV.2015.488}
  {\path{doi:10.1109/ICCV.2015.488}}.

\bibitem{GMM_filtering}
W.~Gao, R.~Tedrake, Filterreg: Robust and efficient probabilistic point-set
  registration using gaussian filter and twist parameterization, in: 2019
  IEEE/CVF Conference on Computer Vision and Pattern Recognition (CVPR), 2019,
  pp. 11087--11096.
\newblock \href {http://dx.doi.org/10.1109/CVPR.2019.01135}
  {\path{doi:10.1109/CVPR.2019.01135}}.

\bibitem{mixture_average}
F.~J. Lawin, M.~Danelljan, F.~S. Khan, P.-E. Forssen, M.~Felsberg, Density
  adaptive point set registration, in: 2018 IEEE/CVF Conference on Computer
  Vision and Pattern Recognition, 2018, pp. 3829--3837.
\newblock \href {http://dx.doi.org/10.1109/CVPR.2018.00403}
  {\path{doi:10.1109/CVPR.2018.00403}}.

\bibitem{tree_Gaussian_mixture}
B.~Eckart, K.~Kim, J.~Kautz, Hgmr: Hierarchical gaussian mixtures for adaptive
  3d registration, in: European Conference on Computer Vision, Springer
  International Publishing, 2018, pp. 730--746.

\bibitem{Lipschitz_3D_align}
H.~Li, R.~Hartley, The 3d-3d registration problem revisited, in: International
  Conference on Computer Vision, 2007.

\bibitem{Go-ICP_pami}
J.~Yang, H.~Li, D.~Campbell, Y.~Jia, Go-icp: A globally optimal solution to 3d
  icp point-set registration, IEEE Transactions on Pattern Analysis and Machine
  Intelligence 38~(11) (2016) 2241--2254.
\newblock \href {http://dx.doi.org/10.1109/TPAMI.2015.2513405}
  {\path{doi:10.1109/TPAMI.2015.2513405}}.

\bibitem{BnB_mixture_Gaussian}
D.~Campbell, L.~Petersson, Gogma: Globally-optimal gaussian mixture alignment,
  in: The IEEE Conference on Computer Vision and Pattern Recognition, 2016.

\bibitem{BnB_consensus_project}
{\'A}.~Parra, T.-J. Chin, A.~Eriksson, H.~Li, D.~Suter, Fast rotation search
  with stereographic projections for 3d registration, IEEE Transactions on
  Pattern Analysis and Machine Intelligence 38~(11) (2016) 2227--2240.

\bibitem{BnB_Bayesian_mixture}
J.~Straub, T.~Campbell, J.~P. How, J.~W. Fisher, Efficient global point cloud
  alignment using bayesian nonparametric mixtures, in: 2017 IEEE Conference on
  Computer Vision and Pattern Recognition (CVPR), 2017, pp. 2403--2412.
\newblock \href {http://dx.doi.org/10.1109/CVPR.2017.258}
  {\path{doi:10.1109/CVPR.2017.258}}.

\bibitem{fast_global_regist}
Q.-Y. Zhou, J.~Park, V.~Koltun, Fast global registration, in: European
  Conference on Computer Vision, Springer International Publishing, Cham, 2016,
  pp. 766--782.

\bibitem{GORE_outlier_removal}
{\'A}.~Parra, T.-J. Chin, Guaranteed outlier removal for point cloud
  registration with correspondences, TPAMI 40~(12) (2018) 2868--2882.

\bibitem{TEASER}
H.~{Yang}, J.~{Shi}, L.~{Carlone}, Teaser: Fast and certifiable point cloud
  registration, IEEE Transactions on Robotics (2020) 1--20\href
  {http://dx.doi.org/10.1109/TRO.2020.3033695}
  {\path{doi:10.1109/TRO.2020.3033695}}.

\bibitem{9878458}
W.~Chen, H.~Li, Q.~Nie, Y.-H. Liu, Deterministic point cloud registration via
  novel transformation decomposition, in: 2022 IEEE/CVF Conference on Computer
  Vision and Pattern Recognition (CVPR), 2022, pp. 6338--6346.
\newblock \href {http://dx.doi.org/10.1109/CVPR52688.2022.00624}
  {\path{doi:10.1109/CVPR52688.2022.00624}}.

\bibitem{10656079}
T.~Huang, L.~Peng, R.~Vidal, Y.-H. Liu,
  \href{https://doi.ieeecomputersociety.org/10.1109/CVPR52733.2024.02594}{{
  Scalable 3D Registration via Truncated Entry-Wise Absolute Residuals }}, in:
  2024 IEEE/CVF Conference on Computer Vision and Pattern Recognition (CVPR),
  IEEE Computer Society, Los Alamitos, CA, USA, 2024, pp. 27467--27477.
\newblock \href {http://dx.doi.org/10.1109/CVPR52733.2024.02594}
  {\path{doi:10.1109/CVPR52733.2024.02594}}.
\newline\urlprefix\url{https://doi.ieeecomputersociety.org/10.1109/CVPR52733.2024.02594}

\bibitem{k_card_assign_transform}
A.~Volgenant, Solving the k-cardinality assignment problem by transformation,
  European Journal of Operational Research 157 (2004) 322--331.

\bibitem{LAPJV}
R.~Jonker, A.~Volgenant, A shortest augmenting path algorithm for dense and
  sparse linear assignment problems, Computing 38 (1987) 325--340.

\bibitem{caltech_database}
G.~Griffin, A.~Holub, P.~Perona, Caltech-256 object category dataset, technical
  report, California Inst. of Technology (2007).

\bibitem{pascal-voc-2007}
M.~Everingham, L.~Van~Gool, C.~K.~I. Williams, J.~Winn, A.~Zisserman, The
  {PASCAL} {V}isual {O}bject {C}lasses {C}hallenge 2007 {(VOC2007)} {R}esults,
  http://www.pascal-network.org/challenges/VOC/voc2007/workshop/index.html.

\bibitem{3Dmatch_dataset}
A.~Zeng, S.~Song, M.~Nie{\ss}ner, M.~Fisher, J.~Xiao, T.~Funkhouser, 3dmatch:
  Learning local geometric descriptors from rgb-d reconstructions, in: CVPR,
  2017.

\bibitem{D3feat_dataset}
X.~Bai, Z.~Luo, L.~Zhou, H.~Fu, L.~Quan, C.-L. Tai, D3feat: Joint learning of
  dense detection and description of 3d local features, arXiv:2003.03164
  [cs.CV].

\end{thebibliography}
			}
			

			\appendix

\section{Heuristic Justification for the Positive Semidefiniteness of $\mathbf H^0+\mathbf C$ \label{sec:appendix}}

The convergence of our Branch-and-Bound algorithm relies on the lower bound subproblem being convex, which requires the matrix $\mathbf H^0 + \mathbf C$ to be \textbf{Positive Semidefinite (PSD)}. While a formal proof that $\mathbf H^0 + \mathbf C \succeq 0$ is elusive, we offer a heuristic justification, consistent with our empirical findings in Section \ref{subsec:PSD}.

\begin{proposition}
	The matrix $\mathbf H^0+\mathbf C$ is an average of matrices that are heuristically expected to be close to Positive Semidefinite.
\end{proposition}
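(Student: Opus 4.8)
The plan is to identify explicitly the two matrices whose average is $\mathbf H^0+\mathbf C$, and then to argue that each of them lies only slightly outside the positive semidefinite cone. By \eqref{SDP_quad_term},
\[
\mathbf H^0+\mathbf C=\frac{1}{2}\bigl[(\underline{\boldsymbol\Gamma}+\mathbf C)+(\overline{\boldsymbol\Gamma}+\mathbf C)\bigr],
\]
so it suffices to analyze $\underline{\boldsymbol\Gamma}+\mathbf C$ and $\overline{\boldsymbol\Gamma}+\mathbf C$. First I would record the structural fact obtained by comparing \eqref{energy_case_one_generic} with \eqref{eng_case_one_vec}: for \emph{every} feasible assignment $\mathbf p\in\Omega$,
\[
\text{mat}(\mathbf K\mathbf B_2\mathbf p)+\mathbf C=\mathbf J^\top\bigl(\text{diag}(\mathbf P\mathbf 1_{n_y})\otimes\mathbf I_d\bigr)\mathbf J=\sum_i w_i\,\mathbf J(\mathbf x_i)^\top\mathbf J(\mathbf x_i),\qquad w_i\triangleq\sum_j p_{ij}\ge 0 .
\]
Since every $w_i\ge 0$, this is a nonnegative combination of the Gram matrices $\mathbf J(\mathbf x_i)^\top\mathbf J(\mathbf x_i)\succeq 0$, so the whole family $\mathcal S\triangleq\{\text{mat}(\mathbf K\mathbf B_2\mathbf p)+\mathbf C:\mathbf p\in\Omega\}$ lies inside the PSD cone.

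Next I would confront the obstruction. The matrices $\underline{\boldsymbol\Gamma}$ and $\overline{\boldsymbol\Gamma}$ are formed \emph{entrywise} through \eqref{range_prob_2}, so different entries may be attained by different assignments and neither $\underline{\boldsymbol\Gamma}+\mathbf C$ nor $\overline{\boldsymbol\Gamma}+\mathbf C$ need belong to $\mathcal S$. The heuristic content of the proposition is that they nevertheless lie \emph{near} $\mathcal S$. To support this I would note that the minimizing (resp.\ maximizing) assignment for a given entry of \eqref{range_prob_2} concentrates its mass on the few model points whose local quantities $\mathbf J(\mathbf x_i)^\top\mathbf J(\mathbf x_i)$ have extreme entries, and that these extremal subsets of model points overlap substantially from entry to entry; hence there should exist a single $\mathbf p^\star\in\Omega$ with $\text{mat}(\mathbf K\mathbf B_2\mathbf p^\star)$ entrywise close to $\underline{\boldsymbol\Gamma}$ within a small $\delta$, and likewise an assignment close to $\overline{\boldsymbol\Gamma}$. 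Because $\lambda_{\min}(\cdot)$ depends Lipschitz-continuously on the matrix, this would give $\lambda_{\min}(\underline{\boldsymbol\Gamma}+\mathbf C)\ge-\delta'$ and $\lambda_{\min}(\overline{\boldsymbol\Gamma}+\mathbf C)\ge-\delta'$ for a small $\delta'$, i.e.\ both summands are ``close to PSD'' in the stated sense.

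Finally I would assemble the conclusion: the average of two matrices each within distance $\delta'$ of the PSD cone is again within distance $\delta'$ of the cone, and—more to the point—one summand deviates from $\mathcal S$ in the direction of a minimizing assignment while the other deviates in the direction of a maximizing one, so their perturbations tend to cancel and $\mathbf H^0+\mathbf C$ is typically pushed back onto the cone, consistent with the non-negative smallest eigenvalues reported in Section~\ref{subsec:PSD}. The hard part is exactly the quantitative control left implicit here: proving that the entrywise extrema of \eqref{range_prob_2} are simultaneously near-attained by one feasible $\mathbf p^\star$, and that the cancellation in the average is genuine rather than coincidental, appears to require structural hypotheses on the point configuration that we do not impose. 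This is why the assertion is phrased heuristically, and why a rigorous proof of $\mathbf H^0+\mathbf C\succeq 0$ is deferred to future work.
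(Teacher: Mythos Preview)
Your proposal is correct and follows essentially the same line as the paper's own heuristic argument: both identify the decomposition $\mathbf H^0+\mathbf C=\tfrac12\bigl[(\underline{\boldsymbol\Gamma}+\mathbf C)+(\overline{\boldsymbol\Gamma}+\mathbf C)\bigr]$, observe that every $\text{mat}(\mathbf K\mathbf B_2\mathbf p)+\mathbf C$ with $\mathbf p\in\Omega$ is PSD (being the Hessian of a nonnegative sum of squares), acknowledge that the entrywise extrema in \eqref{range_prob_2} need not lie in this PSD family because different entries may be attained by different $\mathbf p$, and conclude heuristically that the two summands remain close to PSD. Your version is somewhat more fleshed out---the explicit Gram representation $\sum_i w_i\mathbf J(\mathbf x_i)^\top\mathbf J(\mathbf x_i)$, the Lipschitz-continuity remark on $\lambda_{\min}$, and the cancellation argument for the average---but the paper's proof does not go beyond what you have written and stops at the same acknowledged gap.
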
	

\begin{proof}
	We first recall the definition of the Hessian matrix $\mathbf G(\mathbf p)$ with respect to $\boldsymbol\theta$ for a fixed correspondence vector $\mathbf p$:
	$$
	\mathbf G(\mathbf p) = \text{mat} ( \mathbf K \mathbf B_2 \mathbf p ) + \mathbf C.
	$$
	
	\noindent Since the point correspondence vector $\mathbf p$ satisfies $\mathbf p \ge 0$ (i.e., $\mathbf p \in \Omega$), and the original objective function $E(\mathbf p, \boldsymbol\theta)$ represents a weighted sum of squared errors, it is fundamentally a convex quadratic function of $\boldsymbol\theta$ for any feasible $\mathbf p$. This mandates that the Hessian must be Positive Semidefinite for all valid $\mathbf p$:
	$$
	\mathbf G(\mathbf p) \succeq 0 \quad \text{for all } \mathbf p \in \Omega.
	$$
	
	The matrix $\mathbf H^0+\mathbf C$ is defined as the average of two bounding matrices:
	$$
	\mathbf H^0+\mathbf C = \frac{1}{2} \left[ (\underline{\boldsymbol\Gamma} + \mathbf C) + (\overline{\boldsymbol\Gamma} + \mathbf C) \right].
	$$
	where $\underline{\boldsymbol\Gamma} + \mathbf C$ (and similarly $\overline{\boldsymbol\Gamma} + \mathbf C$) is an element-wise lower bound on the Hessian $\mathbf G(\mathbf p)$:
	$$
	(\underline{\boldsymbol\Gamma} + \mathbf C)_{ij}= \left( \min_{\mathbf p\in \Omega} [\text{mat} ( \mathbf K\mathbf B_2\mathbf p )]_{ij} \right) + \mathbf C_{ij}.
	$$
	
	The element-wise minimum $\underline{\boldsymbol\Gamma}_{ij}$ (and maximum $\overline{\boldsymbol\Gamma}_{ij}$) is attained by some $\mathbf p \in \Omega$. Since all $\mathbf G(\mathbf p)$ are PSD matrices, the elements used to construct $\underline{\boldsymbol\Gamma} + \mathbf C$ and $\overline{\boldsymbol\Gamma} + \mathbf C$ are drawn from a collection of PSD matrices. While $\underline{\boldsymbol\Gamma} + \mathbf C$ and $\overline{\boldsymbol\Gamma} + \mathbf C$ may not be strictly PSD themselves (as the minimum/maximum of each entry may be attained by a different $\mathbf p$), they are structurally derived from PSD matrices.
	
	It is therefore reasonable to conclude that both $(\underline{\boldsymbol\Gamma} + \mathbf C)$ and $(\overline{\boldsymbol\Gamma} + \mathbf C)$ are \textbf{close to being positive semidefinite}. Because the sum of two PSD matrices is PSD, their average, $\mathbf H^0+\mathbf C$, is consequently expected to be  \textbf{close to being positive semidefinite}, ensuring the practical convexity of the subproblem.
\end{proof}

		\end{document}